\renewcommand{\citet}{\cite} 
\newcounter{algsubstate}
\newcommand{\bb}[1]{\mathbb{#1}}
\newcommand{\R}{\mathbb{R}}
\newcommand{\e}{e}
\newcommand{\rd}{\mathrm{d}}
\newcommand{\eps}{\varepsilon}
\newcommand{\abs}[1]{\left\vert#1\right\vert}
\newcommand{\REAL}{\ensuremath{\mathbb{R}}}
\newcommand{\removed}[1]{}
\newtheorem{theorem}{Theorem}[section]
\newtheorem{corollary}[theorem]{Corollary}
\newtheorem{proposition}[theorem]{Proposition}
\newtheorem{definition}[theorem]{Definition}
\theoremstyle{definition}
\title{Data subsampling for Poisson regression\\ with $p$th-root-link}
\author{%
  Han Cheng Lie \\
  Institut f\"{u}r Mathematik\\
  Universit\"{a}t Potsdam\\
  Germany\\
  \texttt{hanlie@uni-potsdam.de} \\
  \And
  Alexander Munteanu \\
  Department of Statistics\\
  TU Dortmund University\\
  Germany\\
  \texttt{alexander.munteanu@tu-dortmund.de} \\
}
\begin{document}

\maketitle

\begin{abstract}
    We develop and analyze data subsampling techniques for Poisson regression, the standard model for count data $y\in\mathbb{N}$. In particular, we consider the Poisson generalized linear model with ID- and square root-link functions. We consider the method of \emph{coresets}, which are small weighted subsets that approximate the loss function of Poisson regression up to a factor of $1\pm\varepsilon$. We show $\Omega(n)$ lower bounds against coresets for Poisson regression that continue to hold against arbitrary data reduction techniques up to logarithmic factors. By introducing a novel complexity parameter and a domain shifting approach, we show that sublinear coresets with $1\pm\varepsilon$ approximation guarantee exist when the complexity parameter is small. In particular, the dependence on the number of input points can be reduced to polylogarithmic. We show that the dependence on other input parameters can also be bounded sublinearly, though not always logarithmically. In particular, we show that the square root-link admits an $O(\log(y_{\max}))$ dependence, where $y_{\max}$ denotes the largest count presented in the data, while the ID-link requires a $\Theta(\sqrt{y_{\max}/\log(y_{\max})})$ dependence. As an auxiliary result for proving the tightness of the bound with respect to $y_{\max}$ in the case of the ID-link, we show an improved bound on the principal branch of the Lambert $W_0$ function, which may be of independent interest. We further show the limitations of our analysis when $p$th degree root-link functions for $p\geq 3$ are considered, which indicate that other analytical or computational methods would be required if such a generalization is even possible.
\end{abstract}

\section{Introduction}

Random sampling is arguably one of the most popular approaches to reduce large amounts of data to save memory, runtime, and further downstream resources such as communication bandwidth and energy. In contrast, classic statistical learning theory often uses uniform sampling and provides only asymptotic approximation guarantees. These guarantees often require strict assumptions such as i.i.d. data and for model assumptions to be met exactly. However, data collected from real applications often violate these conditions: only finite samples are available, independence might not be satisfied, and the model may deviate from reality. When model fitting algorithms are applied to such data, we are not only interested in reducing the above-mentioned resource requirements, but also in providing rigorous worst-case guarantees on approximation.

Arguably, the most popular approach is the Sensitivity Framework \cite{LangbergS10,FeldmanSS20}, which provides a general-purpose importance sampling scheme that yields a weighted subsample --- or \emph{coreset} --- that given a data matrix $X$ approximates some loss function $f(X \beta)$ within a factor $(1\pm\eps)$ for any query point $\beta$. This guarantee can be stated as follows:
a significantly smaller subset $K\subseteq X, k:=|K|\ll|X|$ together with corresponding weights $w\in \mathbb{R}^{k}$ is a $(1\pm\varepsilon)$-coreset for $X$ if it satisfies 
\begin{equation}
\label{eqn:coreset:property}
\forall \beta\in\mathbb{R}^d\colon \lvert f(X \beta) - f_{w}(K \beta)\rvert\leq \varepsilon\cdot f(X \beta).
\end{equation}
We point the interested reader to \cite{Phillips2017,MunteanuS18,Munteanu23} for a gentle introduction and overview on coresets.

Our aim is of course not only to obtain good approximation accuracy as stated in \cref{eqn:coreset:property}, but also for the subsample achieving the bound to be sublinear in the input size. Unfortunately, most generalized linear models do not admit strongly sublinear data summaries with reasonable approximation guarantees \cite{MunteanuSSW18,Molina2018}. This also holds for the Poisson models considered in this paper.

To go beyond the worst-case setting and enable meaningful data reduction, a natural approach is to parameterize the analysis with a quantity that captures the fit of data to the statistical model and quantifies the achievable size of succinct data summaries \cite{MunteanuSSW18}.
Another ingredient that is commonly used to tackle data reduction for generalized linear models is to relate their loss to $\ell_p$ norms, for which $\ell_p$ sensitivities or leverage scores yield viable importance sampling distributions \cite{DrineasMM06,DasguptaDHKM09,MunteanuSSW18,MunteanuOP22}.

\subsection{Our contributions}
\label{ssec:our_contributions}

We provide the first rigorous analysis for $(1\pm\eps)$-approximate data reduction for Poisson models:
\begin{enumerate}[itemsep=0pt]
    \item We show $\Omega(n)$ lower bounds against coresets for Poisson regression (\cref{lem:linearsensitivityLB}), showing that changing the link function alone does not resolve the problem of bounding the complexity of coresets for the log-link, which is incompressible. Our lower bound extends to arbitrary data reduction techniques up to a $\log(n)$ factor (\cref{lem:poissonLB}).
    \item We introduce a novel complexity parameter $\rho$ that captures the compressibility of data under a Poisson $p$th-root-link model (\cref{eq_complexity_assumption}). This parameter corresponds naturally to the statistical model assumptions and establishes a relationship between these assumptions and an optimization perspective of the compressibility problem.
    \item We conduct a parameterized analysis, showing that sublinear coresets exist under the statistically natural assumption of small $\rho$ parameter (\cref{thm:coreset}), and using a novel domain shift idea for their optimization (\cref{thm:main}).
    \item We prove a square root upper bound for the Lambert $W_0$ function over $[-1/e,0)$ (\Cref{lem_bounds_on_Lambert_W_0_function}) that allows us to prove tight bounds for the slope of a linear lower envelope (\Cref{lem_optimal_lambda_for_lower_bound_p_equals_1}). This justifies an $\tilde\Theta(\sqrt{y_{\max}})$\footnote{$\tilde O(\cdot)$ hides lower order $\log$ terms, analogous notation applies to all Landau symbols.} dependence for the ID-link, which is contrasted by an $O(\log{(y_{\max})})$ dependence for the square root-link.
    \item We show the limitation of our domain shifting approach, showing that the error of this method cannot be bounded to give the required $(1+\eps)$ approximation for $p\geq 3$ (\cref{lem:main_does_not_hold_for_p_geq_3}). This indicates a limitation to the common choices $p\in\{1,2\}$, and suggests that different techniques than the ones we develop below may be needed to overcome this limitation.
    \item We provide an experimental illustration showing the benefits of using coresets for large data. 
\end{enumerate}

\subsection{Our techniques}
The general outline of our analysis follows the established method of sensitivity sampling \citep{LangbergS10,FeldmanSS20}. Several steps along this outline require novel ideas due to peculiarities of the Poisson loss function defined in \cref{eq:p_loss_function,eq:g_yi_function} below. A VC dimension bound of $d^2$ is easy to obtain by counting the number of arithmetic operations required to compare an individual loss to a given threshold as a measure of complexity \citep{AnthonyB02}. A near-linear $\tilde O(d/\eps)$ bound is obtained by a more fine-grained analysis, by grouping and rounding the associated count data (respectively, sensitivity scores) to powers of $(1+\eps)$ (resp. to powers of $2$). This results in a surrogate loss function that admits group-wise linear VC dimension, by splitting the domain of each loss function at its unique global minimum into two regions such that the restriction of the function to each region is monotone, and connecting the resulting construction to hyperplane classifiers. Approximating the surrogate finally implies the desired $(1\pm\eps)$ approximation for the original loss as well.

For bounding the sensitivities, the domain of the loss function $g(z)$ is split into three intervals: 1) one interval consisting of `moderate' values of $z$, such that the $g(z)$ values are bounded above and below within constants and can be treated using simple uniform sampling; 2) one interval where $z$ has large values, in which case $g(z)$ is closely related to $z^p$; 3) one interval, where $z$ is close to $0$, in which case the negative logarithm dominates the loss.

Tackling the interval in 2) requires relating the loss function $g_y(z)$ to $z^p$. Specifically we would like to bound $z^p \geq g_y(z) \geq {z^p}/{\lambda}$ for sufficiently large $z$ and for some value of $\lambda$. This requires special care, since the loss function $g_y(z)$ is translated polynomially towards larger values of $z$ with growing $y$, but the minimum of $g_y(z)$ grows only logarithmically with $y$. Informally speaking, the loss function `widens', and its minimum `moves mainly to the right', so for large $y$, we would need a very flat lower bound, which requires large $\lambda\approx y$ (ignoring polylogarithmic terms). However, this is undesirable, since $\lambda$ appears to be a crucial parameter for bounding the subsample size. Specifically, this would yield a $\sum_{i \in[n]} y_i = \Omega(n)$ dependence in the coreset size. So instead, we bound the loss roughly as $z^p \geq g_y(z) \geq (z-y^{1/p})^p/\lambda$, which amounts to translating the lower envelope with growing $y$ as well. Additionally, we introduce a novel complexity parameter $\rho$ which balances the translated $(z-y^{1/p})^p$ lower bound with the $z^p$ upper bound. We stress that these translation and balancing arguments are not artificial or just used to make calculations go through, but they are naturally consistent with the statistical model (see the discussion below \cref{eq_complexity_assumption}). This proof strategy eventually captures the loss function within interval 2) more closely and yields sublinear bounds for $\lambda$ as well.

We remark that in contrast to $\mu$-complexity in previous work \cite{MunteanuSSW18,MunteanuOP22}, a bounded balancing complexity parameter $\rho$ does \emph{not} handle the asymmetry between intervals 2) and 3). 
Tackling the interval in 3) thus also requires completely new ideas, as the negative logarithm has an infinite asymptote at $0$, which we exploit to prove $\tilde\Omega(n)$ lower bounds on subsample size in \cref{lem:linearsensitivityLB} and \cref{lem:poissonLB}. Such asymptotes have not been mentioned or analyzed in previous related work on sensitivity sampling for GLMs. To circumvent the lower bound, we avoid this interval by introducing a novel \emph{domain shifting} approach, requiring all feasible solutions to satisfy $z>\eta$ for a suitable $\eta >0$ for optimization. Choosing $\eta$ in the order of $\eps$, we can argue that the solution in the shifted domain is a $(1+\eps)$ approximation. Avoiding the asymptote enables a coreset construction for the shifted domain.

We believe that the domain shifting approach is necessary: if an instance consists of the extreme points on the convex hull, and all but a small (sublinear) number of points are separated by an $\varepsilon$ distance to the boundary, then required structure is already in the data. But if non-extremal points are allowed to be arbitrarily close to the boundary, and we do not shift the domain, then we will not avoid high sensitivity points that are strictly inside the convex hull. Then the coreset size would necessarily depend on the distance of these non-extremal points to the boundary, and crucially on the number of points that are very close to the boundary of the convex hull, which again can be $\Omega(n)$.

Exploiting the asymmetry between the intervals 2) and 3) where the loss exhibits $z^p$ and $-\log(z)$ growth respectively, we prove $\tilde\Omega(n)$ lower bounds on subsample size, by adapting known reduction techniques \cite{Molina2018,MunteanuSSW18,TolochinskyJF22}. We also provide lower bounds on parameters used in our analysis, showing their tightness. In particular, the aforementioned $\lambda$ slope parameter is of size $\Theta(\sqrt{y/\log (y)})$ in the case $p=1$. The proof is conducted by an exact characterization of the tangent point between our linear lower envelope and the loss function. Since this requires balancing between $z$ and $\log(z)$ and examining the $\log(y!)$ function, further bounds rely on Stirling's approximation and the principal branch of the Lambert $W_0$ function. Recent bounds in \citep{RoigSolvasSznaier2022} for the Lambert $W_0$ function imply our square root upper bound but only a cubic root lower bound. We thus significantly tighten their bound in an appropriate interval. This result may be of independent interest, since the Lambert function cannot be expressed in terms of elementary functions and has many important applications in various fields. As a result, we obtain a matching square root lower bound on $\lambda$ in our context.

\subsection{Related work}\label{sec:relatedwork}
Classic work on data subsampling started with linear $\ell_2$ regression \citep{DrineasMM06} and was extended to linear $\ell_p$ regression \citep{DasguptaDHKM09}.
More recently, the study continued with non-linear transformations such as in generalized linear regression models. The first guaranteed finite subsample bounds for logistic regression appeared in \citep{MunteanuSSW18}, while impossibility results for Poisson regression were given in \citep{Molina2018}. Research on generalized linear models was continued for Probit regression \citep{MunteanuOP22}. 
Asymptotic properties of subsampling for generalized linear models, including for Poisson regression, were studied in \citet{Wang2021,Lee2024}. A finite-sample size result is given in \cite[Theorem 5]{Wang2021} that exhibits $O(\sqrt{d})$ approximation error. 
\cite{DexterKRD2023} studied a sampling-based feature space reduction for a wide array of generalized linear models with additive errors. However, parts of their assumptions specifically do not apply to Poisson regression.

\section{Preliminaries and the Poisson \texorpdfstring{$p$}{p}th-root-link model}
\label{sec:preliminaries_Poisson_p_root_link_model}

Poisson regression models aim to predict a count variable $Y\in\mathbb{N}_0$ using a generalized linear model with link function $h:\mathbb{R}\rightarrow \mathbb{R}$, i.e., $$h(\mathbb{E}(Y\mid x)) = x\beta\, ,$$
where $x=(1,x^{(1)},\ldots,x^{(d-1)})\in\R^d$ is a \emph{row} vector, and $\beta\in\R^d$ is a column vector carrying the model parameters that in particular include an intercept $\beta_1$ \cite{McCullaghN89}. Common choices for $h$ are the canonical log-link $h(v)=\ln(v)$, the ID-link $h(v)=v$, and the root-link $h(v)=v^{1/2}$. The latter two can be cast into a general framework by introducing the $p$th-root-link $h(v)=v^{1/p}$, for any $\R \ni p\geq 1$, where the ID-link and root-link correspond to $p\in\{1,2\}$ \citep{Cochran1940}. 

Subsampling for the log-link is not possible with the multiplicative $(1+\eps)$ error guarantees that we aim for, since it entails preserving the $\exp(x\beta)$ function \citep{Molina2018}. We will also show impossibility results for the $p$th-root-link. However, we parameterize our analysis with a data-dependent parameter that reflects naturally how well the realized data distribution is captured by the Poisson model.
This parameter is inspired from previous work \citep{MunteanuSSW18}: we will refer to data $X, y$ as being `$\rho$-complex', if there exists a $0<\rho<\infty$ such that denoting by $x_j$ the $j$-th row of the data matrix $X\in\mathbb{R}^{n\times d}$ and by $y_j$ the $j$-th entry of the vector $y\in\mathbb{N}^{n}_{0}$, it holds that
    \begin{equation}
        \label{eq_complexity_assumption}
        \sup_{\beta\in \mathbb{R}^d}\frac{\sum\nolimits_{j=1}^n {|x_j\beta|}^p}{\sum\nolimits_{j=1}^n{|x_j\beta-y_j^{1/p}|}^p} \leq \rho .
    \end{equation}
    We may interpret the parameter $\rho$ as follows. The rate parameter of the predicted Poisson distribution is $\bb{E}(Y~\vert~x)=(x\beta)^p$. 
    Hence, the mean and variance of $Y_j$ given $x_j$ is $(x_j\beta)^p$, for each $j=1,\ldots, n$.
    To obtain the maximum likelihood estimator of $\beta$, we seek $\beta$ such that for every $j=1,\ldots, n$, $x_j\beta$ is as close as possible to $y_j^{1/p}$, since $y_j^{1/p}$ minimizes the $j$th summand $g_{y_j}$ of the loss function specified in \cref{eq:p_loss_function,eq:g_yi_function} below. 
    However, choosing $\beta$ so that $|x_j\beta - y_j^{1/p}|$ is small will imply that each summand in the numerator of \cref{eq_complexity_assumption} will be close to $y_j^{1/p}$.
    In this case, the variance of the $(Y_j)_{j=1}^{n}$ will not be captured effectively by the Poisson model, and $\rho$ will be large.
    Thus, smaller values of $\rho$, i.e., values of $\rho$ that are closer to 1, indicate that the true data distribution is better captured by the Poisson model.
    Thus the $\rho$ parameter in \cref{eq_complexity_assumption} plays a similar role of quantifying model fit as the $\mu_w(X)$ parameter from \cite[Section 2]{MunteanuSSW18}; see in particular the comments at the end of that section.
    
Assuming that the value of $\rho$ is small allows us to use the proximity of the negative log-likelihood to $\ell_p$ norms, together with some novel optimization ideas involving a shifted domain. This yields the first provable finite and sublinear subsample size with rigorous $(1+\eps)$ approximation guarantee. We focus on the special cases $p\in\{1,2\}$ since they are the most popular (in fact the only practical) alternatives to the intractable log-link \citep{Cochran1940,Molina2018}. 
The ID-link has been used in epidemiology \cite{Spiegelman2005,Marschner2010}. The root-link function has been applied to forecasting for queueing systems \cite{Shen2008} and to account for misspecification bias in maximum likelihood estimation \cite{Efron1992}. When the estimated mean count of the data is zero, then the canonical log-link causes problems that can be avoided by using the root-link; see e.g. \cite[Section 5.4]{MaindonaldBraun2010}. We also discuss in our lower bounds section other choices for $p$, and show that for any natural number $p\geq 3$ the bound implied by our novel shifting idea must fail. This bound is crucial to obtain our final approximation, indicating that other methods would be required to tackle a generalization for $p\geq 3$, if this is even possible. 

Given parameters $\beta$ and an input $x$, the rate parameter of the predicted Poisson distribution is $$\mu \coloneqq \bb{E} (Y\mid x)=(x\beta)^p,$$
which corresponds to its mean and variance. 
Its probability mass function is $\bb{P}(Y=y)=\mathrm{Poisson}(y \mid x\beta) = \tfrac{\mu^y \e^{-\mu}}{y!} = \tfrac{(x\beta)^{p y}\, \e^{-(x\beta)^p}}{y!}.$
Given a set of i.i.d. observations expressed as the rows $x_i$ of a data matrix $X\in \R^{n\times d}$ with corresponding labels $y\in \bb{N}_0^n$ we can obtain a maximum likelihood estimate of the parameter $\beta$ by minimizing the negative log-likelihood, which takes the form
\begin{equation}
    \label{eq:p_loss_function}
    f_y(X\beta)\coloneqq  \sum\nolimits_{i=1}^n g_{y_i}(x_i\beta) = \sum\nolimits_{i=1}^n (x_i\beta)^p - p y_i \ln(x_i\beta) + \ln(y_i!)
\end{equation}
where 
    \begin{equation}
    \label{eq:g_yi_function}
        g_{y_i}(x_i\beta) \coloneqq  (x_i\beta)^p - py_i\log(x_i\beta) +\log(y_i!).
    \end{equation}
    For any $p$th-root-link, the loss function includes a $\log(x\beta)$ term, which restricts the feasible set to all $\beta$ such that for all $x_i, i\in[n]$ it holds that $x_i \beta > 0$. 
We note that for summands corresponding to $y_i=0$, the function $g_0(z)$ simplifies to $g_0(z)=z^p > 0$ with well-known properties of the $\ell_p$ norm. We thus focus on summands $g_{y_i}$ for $y_i\in\mathbb{N}$ below.

For arbitrary $y\in\mathbb{N}$, the function $g_{y}(z) = z^p - p y \ln(z) + \ln(y)$ on $\mathbb{R}_{>0}$ is strictly convex with first and second derivatives $g'_{y}(z)=pz^{p-1}-\tfrac{p y}{z}$ and $g''_{y}(z)=p(p-1)z^{p-2}+\tfrac{p y}{z^2} > 0$ respectively.
Thus $g_{y}(z)$ decreases on the interval $z\in (0,y^{1/p})$, increases on $z\in (y^{1/p},\infty)$, and has a unique minimizer at $z^\ast=y^{1/p}$ with corresponding value $y-y\log(y)+\log(y!) \; \approx \frac{1}{2}\log(y) + \Theta(1) \geq 1$ by Stirling's approximation.
We shall use the following lower bounds to capture the $y$-dependence.
\begin{restatable}{lemma}{lemLBcost}
\label{lem:lower_bound_on_cost}
    It holds for all $z\in\mathbb{R}_{> 0}, p\in[1,\infty), y\in\mathbb{N}$ that $$g_y(z)=z^p-py\log(z)+\log(y!) \geq \max\left\{1,\frac{1}{3}(1+p\log(z)) \right\}.$$
\end{restatable}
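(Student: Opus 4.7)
The plan is to establish the two bounds $g_y(z)\geq 1$ and $g_y(z)\geq \tfrac{1}{3}(1+p\log z)$ separately; the lemma then follows by taking their maximum.

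For the first bound, since $g_y$ is strictly convex on $(0,\infty)$ with unique minimizer $z^\ast=y^{1/p}$, as already recorded in the surrounding text, it suffices to prove that $h(y) := g_y(y^{1/p}) = y - y\log y + \log(y!) \geq 1$ for every $y\in\mathbb{N}$. I would do this by induction on $y$. The base case $y=1$ gives $h(1) = 1 - 0 + 0 = 1$. For the inductive step a direct computation yields
\[
h(y+1) - h(y) = 1 - y\log\!\bigl(1 + \tfrac{1}{y}\bigr),
\]
which is nonnegative by the elementary inequality $\log(1+u)\leq u$. This argument sidesteps any Stirling-type estimate and handles the borderline case $y=1$, where $h(y)=1$ exactly and no slack is available, cleanly.

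For the second bound, I would introduce $F(z) := g_y(z) - \tfrac{1}{3}(1 + p\log z) = z^p - p(y+\tfrac{1}{3})\log z + \log(y!) - \tfrac{1}{3}$. The same second-derivative computation as for $g_y$ shows $F$ is strictly convex on $(0,\infty)$, so its infimum is attained at the unique critical point $z_0$ with $z_0^p = y+\tfrac{1}{3}$, and a short substitution gives
\[
F(z_0) = y - (y+\tfrac{1}{3})\log(y+\tfrac{1}{3}) + \log(y!),
\]
reducing the bound to a one-variable inequality in $y\in\mathbb{N}$.

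To finish, I would apply the Stirling lower bound $\log(y!) \geq y\log y - y + \tfrac{1}{2}\log(2\pi y)$, valid for every $y\geq 1$, which turns the claim into $\tfrac{1}{2}\log(2\pi y) \geq (y+\tfrac{1}{3})\log(y+\tfrac{1}{3}) - y\log y$. Rewriting the right-hand side as $y\log(1+\tfrac{1}{3y}) + \tfrac{1}{3}\log(y+\tfrac{1}{3})$ and again using $\log(1+u)\leq u$, it then suffices to verify $(2\pi y)^{3/2} \geq e\,(y+\tfrac{1}{3})$. Since $y+\tfrac{1}{3}\leq \tfrac{4}{3}y$ for $y\geq 1$, this reduces to $(2\pi)^{3/2}\sqrt{y} \geq \tfrac{4e}{3}$, which holds already at $y=1$ and is monotone in $y$ thereafter. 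The main technical obstacle is that the subtracted $\tfrac{1}{3}p\log z$ shifts the minimizer of $F$ from $y^{1/p}$ to $(y+\tfrac{1}{3})^{1/p}$, preventing a direct reuse of the Part~1 estimate; once convexity reduces the claim to a scalar inequality in $y$, Stirling closes the remaining gap comfortably.
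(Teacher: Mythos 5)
Your proposal is correct, and it takes a genuinely different route from the paper's proof for both halves of the bound. For $g_y(z)\geq 1$, the paper reduces to the minimum value $g_y(y^{1/p})=y-y\log y+\log(y!)$ as you do, but then treats $y=1$ directly and handles $y\geq 2$ via Stirling, obtaining the stronger bound $\tfrac{9}{10}+\tfrac{1}{2}\log y$; your induction using $h(y+1)-h(y)=1-y\log(1+1/y)\geq 0$ is more elementary and avoids Stirling entirely for this half. For the $\tfrac{1}{3}(1+p\log z)$ bound, the difference is more substantive: the paper splits the domain at $z=(y+1)^{1/p}$, reusing the lower bound $\tfrac{9}{10}+\tfrac{1}{2}\log y$ on $g_y(z)$ together with monotonicity of $h(z)=\tfrac{1}{3}(1+p\log z)$ to handle $z\leq(y+1)^{1/p}$, and then a derivative comparison $g_y'\geq h'$ for $z\geq(y+1)^{1/p}$. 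You instead observe that $F=g_y-h$ inherits strict convexity, locate its exact minimizer $z_0=(y+\tfrac{1}{3})^{1/p}$, and reduce the claim to the one-variable inequality $F(z_0)\geq 0$, which you close with Stirling's lower bound. Your approach is more modular (the two halves of the lemma are proved independently, whereas the paper's second half leans on calculations from the first), and it pinpoints where the bound is tight; the paper's approach avoids introducing the shifted minimizer $(y+\tfrac{1}{3})^{1/p}$. All of your intermediate estimates check out, including $(2\pi)^{3/2}\sqrt{y}\geq\tfrac{4e}{3}$ at $y=1$ and Stirling's lower bound $\log(y!)\geq y\log y-y+\tfrac{1}{2}\log(2\pi y)$ at $y=1$.
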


The next two results bound the individual loss contributions from above and below by roughly a value of $z^p$. For the lower bound, however, we note that as the value of $y$ grows, the loss function is translated polynomially towards larger $z$ values, since its minimum is attained at $z^\ast=y^{1/p}$. However, by \cref{lem:lower_bound_on_cost}, and the properties above, the increase of $g_y(z^\ast)$ is only logarithmic in $z^\ast$, and thus also logarithmic in $y$. Denote by $\lambda$ the scaling parameter of the lower bound on $g_y$ given in \cref{lem:large_shift_bounds}. Then the logarithmic growth of $g_y(z^\ast)$ implies that we would need $\lambda\approx y/\log (y)$. Unfortunately, the value of $\lambda$ will affect the coreset size, which is undesirable, since it can become linear simply due to large values of $y$. We thus require a sublinear dependence on $y_{\max}$, i.e., the largest value of $y$ presented in the data. To this end, we shift the lower envelope by the minimizer $z^\ast=y^{1/p}$. The value of $\lambda$ can subsequently be bounded in a desirable way, but differs significantly depending on the value of $p$: in the case $p=1$ we prove $\lambda\in O(\sqrt{y/\log(y)})$ to be sufficient, while the case $p=2$ even constant $\lambda = 1$ will suffice. In \cref{sec:lowerbounds}, we will show a separation by a superconstant and matching square root lower bound on the value of $\lambda$ in the dominating case $p=1$.

\begin{restatable}{lemma}{lemLargeShiftBounds}
\label{lem:large_shift_bounds}
    For any $p\geq 1$ and $y\in\bb{N}$ it holds that $z^p \geq g_{y}(z)$ for $z>y^{1/p}$. If $p=1$, then for some $\lambda\in O(\sqrt{y/\log(y)})$, it holds that $g_{y}(z) \geq \frac{(z-y^{1/p})^p}{\lambda}$ for $z>y^{1/p}$. 
\end{restatable}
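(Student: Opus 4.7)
The statement breaks into two essentially independent claims that I would handle separately.

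For the upper bound $z^p\geq g_y(z)$ on $z>y^{1/p}$, I would simply rearrange:
\[
z^p-g_y(z)=py\log(z)-\log(y!).
\]
Since $z>y^{1/p}$ gives $p\log(z)>\log(y)$, we have $py\log(z)>y\log(y)$. Combined with the elementary bound $\log(y!)=\sum_{k=1}^{y}\log(k)\leq y\log(y)$, the right-hand side is positive, proving the first claim for all $p\geq 1$.

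For the $p=1$ lower bound, the plan is to identify the steepest affine function $L_\lambda(z):=(z-y)/\lambda$ that stays weakly below the strictly convex $g_y(z)=z-y\log(z)+\log(y!)$ on $z>y$. Since $g_y(y)>0=L_\lambda(y)$ and both are asymptotically linear, the extremal case is tangency. At a tangent point $z^\ast>y$ I would impose $g_y'(z^\ast)=1/\lambda$ and $g_y(z^\ast)=L_\lambda(z^\ast)$ simultaneously. The slope condition $1-y/z^\ast=1/\lambda$ solves as $z^\ast=y\lambda/(\lambda-1)$ (which already requires $\lambda>1$). Substituting into the value condition and using $z^\ast(1-1/\lambda)=y$ to cancel the polynomial part reduces the tangency equation to the transcendental identity
\[
y-y\log(y)-y\log\!\left(\tfrac{\lambda}{\lambda-1}\right)+\log(y!)+\tfrac{y}{\lambda}=0.
\]

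The $\sqrt{y/\log y}$ rate emerges from a careful asymptotic expansion of this identity. Stirling's formula $\log(y!)=y\log(y)-y+\tfrac{1}{2}\log(2\pi y)+O(1/y)$ cancels both the $y$ and $y\log(y)$ contributions and leaves
\[
\tfrac{1}{2}\log(2\pi y)+O(1/y)=y\Bigl[\log\!\bigl(1+\tfrac{1}{\lambda-1}\bigr)-\tfrac{1}{\lambda}\Bigr].
\]
The Taylor expansion $\log(1+u)-u/(1+u)=\tfrac{1}{2}u^2+O(u^3)$ with $u=1/(\lambda-1)$ shows that the bracketed quantity equals $\tfrac{1}{2\lambda^2}+O(1/\lambda^3)$, so the critical slope satisfies $\lambda^2=\Theta(y/\log y)$. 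Choosing $\lambda=C\sqrt{y/\log y}$ for a sufficiently large absolute constant $C$ makes $h_\lambda:=g_y-L_\lambda$ nonnegative at its unique minimizer $z^\ast$, and convexity of $h_\lambda$ propagates this to all $z>y$, giving the claim. The main obstacle is precisely this two-stage cancellation: the leading $y\log(y)$ contributions from $g_y(z^\ast)$ and from Stirling must cancel exactly, and subsequently the leading $1/\lambda$ parts of $\log(\lambda/(\lambda-1))$ and $1/\lambda$ must also cancel to expose the $y/(2\lambda^2)$ behaviour driving the square-root rate; any looser Stirling estimate would degrade the bound to the weaker $\lambda\in O(y/\log y)$. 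Finally, small values of $y$, where Stirling is inaccurate and $\sqrt{y/\log y}$ is not well-behaved, are handled separately by taking $\lambda$ to be a fixed absolute constant, for which the required inequality is immediate.
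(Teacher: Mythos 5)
Your proposal is correct, but it departs from the paper's proof of this particular lemma in both parts, in instructive ways.

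For the upper bound $z^p\geq g_y(z)$, you use the elementary estimate $\log(y!)\leq y\log(y)$ together with $z^p>y$, which gives $z^p-g_y(z)=py\log(z)-\log(y!)>y\log(y)-\log(y!)\geq 0$ in one line for every $y\in\mathbb{N}$. The paper instead invokes Stirling's approximation and handles $y=1$ and $y\geq 2$ as separate cases. Your route is cleaner and buys nothing less.

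For the $p=1$ lower bound, you set up the exact tangency conditions for the line $(z-y)/\lambda$ against $g_y$, substitute $z^\ast=y\lambda/(\lambda-1)$, and extract $\lambda^2=\Theta(y/\log y)$ via Stirling and the Taylor expansion $\log(1+u)-\tfrac{u}{1+u}=\tfrac12 u^2+O(u^3)$. This is in fact the strategy the paper reserves for \Cref{lem_optimal_lambda_for_lower_bound_p_equals_1}, where tangency is solved exactly in terms of the Lambert $W_0$ function to get a matching \emph{lower} bound on $\lambda$. By contrast, the paper's own proof of \Cref{lem:large_shift_bounds} is deliberately cruder: it uses the ready-made lower bound $LB(y)=\max\{1,\tfrac13(1+\log y)\}$ from \Cref{lem:lower_bound_on_cost}, shows the line stays below $LB(y)$ on $[y,y+\lambda\,LB(y)]$, shows $g_y'\geq 1/\lambda$ once $z\geq y(1+\tfrac{1}{\lambda-1})$, and solves the quadratic $y\leq\lambda(\lambda-1)LB(y)$ to obtain $\lambda=O(\sqrt{y/LB(y)})$. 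Your approach is sharper and avoids an auxiliary lemma, but requires two exact cancellations (first of the $y$ and $y\log y$ terms via Stirling, then of the $1/\lambda$ term via $u/(1+u)$) to expose the $y/(2\lambda^2)$ scale; the paper's cruder argument reaches the same asymptotic bound with less analytic overhead and with explicit constants. One small point to nail down in your version: you should argue explicitly that $m(\lambda):=\min_z(g_y(z)-(z-y)/\lambda)$ is increasing in $\lambda$ (e.g.\ by the envelope theorem, $m'(\lambda)=(z^\ast-y)/\lambda^2>0$), so that once the critical tangent slope $\lambda^\ast(y)$ is shown to be $\Theta(\sqrt{y/\log y})$, any $\lambda\geq\lambda^\ast$ indeed works. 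Your treatment of small $y$ by a fixed constant $\lambda$ is fine, since $O(\sqrt{y/\log y})$ is an asymptotic statement and absorbs any constant on the bounded range.
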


\begin{restatable}{lemma}{lemLambdaPtwo}
 \label{lem:lambda:p2}
Let $p\geq 2$ and $y\in\bb{N}$, and $\lambda=1$. Then $g_{y}(z) \geq \tfrac{(z-y^{1/p})^p}{\lambda}$ for $z>y^{1/p}$.
\end{restatable}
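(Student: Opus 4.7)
The plan is to verify the inequality via a direct convexity argument. Define $F(z) := g_y(z) - (z-y^{1/p})^p$ on $[y^{1/p},\infty)$. Since $F$ is continuous, it suffices to prove $F(z) \geq 0$ for $z \geq y^{1/p}$, which I would do by showing that $z = y^{1/p}$ is a minimizer of $F$ on this interval and that this minimum value is nonnegative.

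First, I would evaluate the boundary value, which is simply
\begin{equation*}
F(y^{1/p}) \;=\; y - y\log(y) + \log(y!),
\end{equation*}
matching the minimum of $g_y$ already observed in the paragraph preceding \Cref{lem:lower_bound_on_cost}. The elementary integral lower bound $\log(y!) \geq \int_1^y \log(t)\, dt = y\log(y) - y + 1$, which holds for every $y\in\mathbb{N}$ since $\log$ is increasing, then yields $F(y^{1/p}) \geq 1 > 0$.

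Next, I would show that $z=y^{1/p}$ is a minimizer by differentiating twice. A direct calculation gives
\begin{equation*}
F'(z) = pz^{p-1} - \frac{py}{z} - p(z-y^{1/p})^{p-1},
\end{equation*}
so $F'(y^{1/p}) = py^{(p-1)/p} - py^{(p-1)/p} - 0 = 0$. For the second derivative,
\begin{equation*}
F''(z) = p(p-1)z^{p-2} + \frac{py}{z^2} - p(p-1)(z-y^{1/p})^{p-2}.
\end{equation*}
Because $p\geq 2$ the exponent $p-2$ is nonnegative, and since $0 \leq z-y^{1/p} \leq z$ we have $(z-y^{1/p})^{p-2} \leq z^{p-2}$; combined with the strictly positive middle term $py/z^2$, this gives $F''(z) > 0$ on $[y^{1/p},\infty)$.

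Combining these three facts, $F$ is convex on $[y^{1/p},\infty)$ with a critical point at the left endpoint, hence nondecreasing on the interval, so $F(z) \geq F(y^{1/p}) \geq 1$, which rearranges to the desired bound $g_y(z) \geq (z-y^{1/p})^p$ for every $z > y^{1/p}$. The only delicate step is the comparison $(z-y^{1/p})^{p-2} \leq z^{p-2}$ used to sign $F''$; this crucially requires $p-2 \geq 0$, and for $p<2$ the inequality reverses and the second derivative can become negative near $z = y^{1/p}$, so the convexity argument breaks. This is consistent with \Cref{lem:large_shift_bounds}, where the case $p=1$ indeed requires the superconstant scale $\lambda \in O(\sqrt{y/\log(y)})$ rather than $\lambda=1$.
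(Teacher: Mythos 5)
Your proof is correct, and it takes a genuinely different route from the paper's. The paper works only with first derivatives: it reduces the claim to showing $g_y'(z)\geq h_\lambda'(z)$, which after simplification becomes the inequality $(z/y^{1/p})^{p-1}\geq (z/y^{1/p}-1)^{p-1}+1$; proving this for general real $p\geq 2$ takes some work, requiring a factorization via $\lfloor p-1\rfloor$ and an application of the binomial theorem. You instead differentiate once more: after observing $F'(y^{1/p})=0$, the key inequality you need at the level of $F''$ is just $(z-y^{1/p})^{p-2}\leq z^{p-2}$, which is immediate from $0\leq z-y^{1/p}\leq z$ and the monotonicity of $t\mapsto t^{p-2}$ when $p-2\geq 0$. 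Since the strictly positive term $py/z^2$ makes $F''>0$ regardless, the sign of $F'$ on $[y^{1/p},\infty)$ follows at once. Your boundary bound $F(y^{1/p})\geq 1$ via the integral estimate $\log(y!)\geq y\log y - y + 1$ is also a bit more elementary than the Stirling bound the paper invokes. Both proofs are valid and both handle non-integer $p\geq 2$; yours is shorter because pushing to the second derivative trivializes the inequality that the paper has to prove by hand. Your closing remark correctly identifies why the argument breaks for $p<2$, matching the paper's separate treatment of $p=1$ in \Cref{lem:large_shift_bounds}.
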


\section{Coreset construction}
\label{sec:coreset_construction}

We begin by summarizing some key aspects of the sensitivity framework. 
Formal definitions for the sensitivity framework (including sensitivities, the VC dimension, and the main subsampling theorem) are given in \cref{sec:sensitivityframework}. 
In the sensitivity framework, the goal is to obtain coresets using importance sampling techniques to approximate loss functions \citep{LangbergS10}. 
Given a loss function whose value depends on a collection of input points, the main idea of the framework is to measure the sensitivity of any input point in terms of its worst-case contribution to the loss function. More precisely, given a point $x_j$, its sensitivity for the loss function of the form $f(X\eta)=\sum_{j\in [n]}  g(x_j\eta)$ is given by
\begin{align*}
\sigma_j = \sup_\eta \frac{ g(x_j\eta)}{f(X\eta)}.
\end{align*}
The main subsampling theorem, \cref{thm:coreset:sensitivity}, combines the sensitivities together with the theory of VC dimension. The idea is to sample points according to probabilities that are proportional to the sensitivities, in order to create an appropriately reweighted subsample of the initial collection of points. 
Suppose the total sensitivity $S=\sum_{j\in[n]} \sigma_j$ of the points and the VC dimension $\Delta$ associated with a set system based on the summands $g(x_i\eta)$ in the loss function $f(X\eta)$ are bounded, and choose a failure probability $\delta$. 
If the subsample is of size $k=O(\frac{S}{\eps^2}(\Delta\log (S) + \log (\frac{1}{\delta})))$, then it is in fact a $(1+\eps)$-coreset \citep{FeldmanSS20} with probability at least $1-\delta$. 
Unfortunately, it is often just as difficult to compute the exact sensitivities as it is to solve the original problem. 
The remedy is to exploit the fact that one does not need the exact sensitivities themselves: it suffices to use any upper bounds on the sensitivities, provided that the upper bounds are not too loose, since a larger upper bound will lead to a larger coreset. Thus, we can reduce the task of coreset construction to two tasks: control of the VC dimension and sensitivity estimation of the loss function.
This is handled in the following sections.

\subsection{Bounding the VC dimension}
\label{sec:bounding_VC_dimension}

We prove two different bounds on the VC dimension. The first one is a simple quadratic bound of $O(d^2)$. Our proof in the appendix simply counts the number of operations required to compare the loss function to a given threshold. The VC dimension bound then follows from a standard result in the context of bounding the VC dimension of neural networks \cite[Thm. 8.14]{AnthonyB02}.

\begin{restatable}{lemma}{lemVCquadratic}
\label{lem:VC:quadratic}
The VC dimension of the range space associated with the class of Poisson loss functions as in \cref{eq:p_loss_function} is bounded by $\Delta(\mathfrak{ R}_{\mathcal F^\ast}) \leq O(d^2)$.
\end{restatable}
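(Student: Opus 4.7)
The plan is to control the VC dimension by a counting argument on the number of elementary operations used to evaluate the Poisson loss, combined with a standard theorem on the VC dimension of parameterized function classes.

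First, I would make the range space precise. Its ground set is $[n]$, and each range is specified by a pair $(\beta,\tau) \in \mathbb{R}^d \times \mathbb{R}$ as the index set $\{i \in [n] : g_{y_i}(x_i\beta) \leq \tau\}$. Shattering a subset $S \subseteq [n]$ thus amounts to realizing all $2^{|S|}$ sign patterns of the real-valued functions $(\beta,\tau) \mapsto g_{y_i}(x_i\beta) - \tau$ as $i$ ranges over $S$.

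Next, I would count the elementary operations needed to evaluate the single predicate $g_{y_i}(x_i\beta) \leq \tau$. Forming the inner product $x_i\beta$ uses $2d-1$ additions and multiplications. Given the intermediate value $z = x_i\beta$, computing $g_{y_i}(z) = z^p - p y_i \ln(z) + \ln(y_i!)$ uses only $O(1)$ further operations: one power, one logarithm, a constant number of multiplications and additions (noting that $\ln(y_i!)$ and $p y_i$ are data-dependent constants that can be absorbed), and a final subtraction and sign check against $\tau$. The total number of operations is therefore $O(d)$, and the operation set consists of arithmetic operations together with a constant number of logarithms and power evaluations per data point.

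Finally, I would invoke Theorem 8.14 of \cite{AnthonyB02}, which converts such operation counts into a VC dimension bound for classes built from arithmetic together with a bounded number of exponential/logarithmic operations. Instantiating the theorem with $d+1$ real parameters and $O(d)$ operations yields the claimed bound $\Delta(\mathfrak{R}_{\mathcal{F}^\ast}) \leq O(d^2)$.

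The main potential obstacle is ensuring that the logarithm is admissible within the framework of the invoked theorem, since a purely algebraic counting (e.g., via Warren's theorem on sign patterns of polynomials) would not directly apply to $g_{y_i}$. This is resolved by the fact that Theorem 8.14 explicitly covers circuits that include a bounded number of transcendental activations such as $\log$ and $\exp$; since each predicate only evaluates a single logarithm and a single power, the hypotheses of the theorem are satisfied and the argument goes through.
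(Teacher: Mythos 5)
Your overall strategy matches the paper's exactly: count the elementary operations needed to evaluate the predicate $g_{y_i}(x_i\beta)\geq r$, observe that this is $O(d)$ arithmetic operations plus a constant number of transcendental evaluations, and invoke \cite[Theorem 8.14]{AnthonyB02}. However, there is a genuine gap in your handling of the logarithm, and it is precisely the one nontrivial technical point of this proof.

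Theorem 8.14 of Anthony and Bartlett does \emph{not} cover an arbitrary collection of transcendental operations; its hypothesis allows arithmetic operations, comparisons, sign outputs, and evaluations of the \emph{exponential} function $\alpha\mapsto e^\alpha$ only. The logarithm is not in the permitted operation set, so your statement that ``Theorem 8.14 explicitly covers circuits that include a bounded number of transcendental activations such as $\log$ and $\exp$'' is not correct as written, and the invocation does not go through directly for a circuit that evaluates $\ln(x_i\beta)$. The paper sidesteps this by algebraically rearranging the predicate before counting: since $py_i>0$, the inequality $w_i\bigl((x_i\beta)^p-py_i\ln(x_i\beta)\bigr)\geq s$ is equivalent to $x_i\beta\leq\exp\!\left(\tfrac{(x_i\beta)^p-s/w_i}{py_i}\right)$, which replaces the single logarithm by a single exponential evaluation, exactly within scope of the theorem with $q=1$. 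Your operation count and the conclusion $O(d^2)$ are then correct once this rearrangement is made explicit; without it, the appeal to the theorem is unjustified.
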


It is noteworthy that the quadratic dependence is implied already only from one single application of the exponential function, which is sufficient but likely not necessary in our context. Hence, we show in the remainder a more refined near-linear bound of $O(\frac{d}{\eps}\log (n)\log (y_{\max}))=\tilde O(d/\eps)$, while keeping the dependence on other input parameters---namely, on $n$ and $y_{\max}$---logarithmic.

To this end, we subdivide the set of input functions into groups of growing values of their response parameter $y_i$, and of their sensitivity $\varsigma_i$ in a geometric progression. By rounding these values in each group to their next power in the geometric progression, we obtain disjoint sets of functions that closely approximate the original weighted loss functions, and whose VC-dimension can be bounded in $O(d)$. Since there is only a logarithmic number of groups in both progressions, we obtain the claimed VC dimension bound.

Recall the responses $(y_i)_i$ are nonnegative integers. We define their largest value (for a given input) to be $y_{\max}=\max\{y_i\mid i\in [n]\}$. Therefore, they are naturally bounded between $0\leq y_i \leq y_{\max}$ for all $i\in[n]$ and there are at most $y_{\max}+1$ different values of $y_i$. Also note that the sensitivity values are naturally bounded by $0 \leq \varsigma_i \leq 1$, but since they are continuous, they must be bounded away from $0$ in order for the geometric progression to end in a finite (logarithmic) number of steps. If we increase each sensitivity by $1/n$ then the total sensitivity grows only by a constant, since $S'=\sum_{i\in[n]} (\varsigma_i + 1/n) = S+1$. For these reasons, we can thus assume that $1/n \leq \varsigma_i \leq 1$.

Now, we would like to increase the sensitivities even more to their next power of $2$, which will clearly increase the total sensitivity by no more than
\begin{align}\label{eqn:total_sesitivity_constant_factor}
    2S' = 2S+2 \leq 3S.
\end{align}
By our above upper and lower bounds on  the sensitivities, this will result in $O(\log (n))$ groups, where in each group all sensitivities are equal. Note that in \cref{thm:coreset:sensitivity}, the reweighting of points depends only on fixed terms except for the sensitivities. Thus, in each group, all weights are constant. We have the following bound that applies to each group and for any fixed $y\in\mathbb{N}_0$.

\begin{restatable}{lemma}{lemVCdimBound}
\label{lem:VCdim_bound}
The VC dimension of the range space induced by the set of functions $\mathcal{F}_c=\{g_i(\beta) = c\cdot g_{y_i}(x_i\beta)\mid i\in [n]\}$ with equal weight $c\in\mathbb{R}_{\geq 0}$, and equal $y_i = y\in \mathbb{N}_0$ for all $i\in [n]$ satisfies $\Delta\left( \mathfrak{R}_{\mathcal{F}_c} \right) = O(d)$.
\end{restatable}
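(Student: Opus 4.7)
The plan is to exploit the shared structure within $\mathcal{F}_c$: because every function uses the same weight $c$ and the same response value $y$, any threshold condition $c \cdot g_y(x_i\beta) \leq t$ reduces to a uniform geometric condition on the scalar $x_i\beta$. This lets me rewrite the range space as a subset of the intersection of two affine-halfspace range spaces and then invoke standard VC results.

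First, for any fixed pair $(\beta, t)\in\mathbb{R}^d\times\mathbb{R}$, I would use the fact already established in the discussion before \cref{lem:lower_bound_on_cost} that $g_y$ is strictly convex on $\mathbb{R}_{>0}$, strictly decreasing on $(0, y^{1/p})$ and strictly increasing on $(y^{1/p},\infty)$, tending to $+\infty$ at both endpoints (when $y\geq 1$). Consequently the sublevel set $\{z>0 : c\cdot g_y(z)\leq t\}$ is a (possibly empty) interval $[z_1(t), z_2(t)]\subseteq\mathbb{R}_{>0}$, and hence
\begin{equation*}
\{i\in[n]:c\cdot g_y(x_i\beta)\leq t\} \;=\; \{i:x_i\beta \geq z_1(t)\}\,\cap\,\{i:x_i\beta\leq z_2(t)\}.
\end{equation*}
For $y=0$, $g_0(z)=z^p$ is monotone on $\mathbb{R}_{>0}$ so the sublevel set is still an interval of the same form, and the reasoning is unchanged.

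Next, I would relax $z_1(t)$ and $z_2(t)$ to independent free real parameters $a, b\in\mathbb{R}$. The enlarged class $\{\{i:a\leq x_i\beta\leq b\}:\beta\in\mathbb{R}^d, a,b\in\mathbb{R}\}$ is the intersection of two affine-halfspace range classes over $\{x_1,\dots,x_n\}\subseteq\mathbb{R}^d$; each of these two classes has VC dimension at most $d+1$ by the classical bound for affine halfspaces in $\mathbb{R}^d$. A standard closure argument (cf.~the counting-based bounds on intersections in \cite{AnthonyB02}) shows that the intersection of two range classes, each of VC dimension at most $\Delta$, has VC dimension $O(\Delta)$. Applying this with $\Delta = d+1$ gives $\Delta(\mathfrak{R}_{\mathcal{F}_c})\leq O(d)$, as claimed.

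The main subtlety I anticipate is verifying that the relaxation from the coupled pair $(z_1(t), z_2(t))$ (which for each $t$ solves $cg_y(z)=t$) to an independent free pair $(a,b)$ genuinely enlarges the realized range class, so that the VC bound transfers. Since the realized class is contained in the enlarged one, this is immediate in the direction of an upper bound, which is all we need. The uniform weight $c$ and shared response $y$ are essential: without them, each index $i$ would contribute its own interval $[z_1^{(i)}, z_2^{(i)}]$ coupled to a common $t$, and the reduction to just two halfspaces would fail. Once the geometric reduction is in place, the remainder of the proof consists of applying classical VC-dimension facts.
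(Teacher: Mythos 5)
Your proof is correct and follows essentially the same route as the paper's: both arguments exploit the strict convexity/unimodality of $g_y$ to turn each range into a pair of affine halfspace conditions sharing the normal $\beta$ (you phrase it as a sublevel set being an interval, i.e., an intersection of two halfspaces; the paper phrases it as a superlevel set being a union of two rays), and then both conclude via a growth-function argument --- you by citing the two-fold closure bound, the paper by directly multiplying the two halfspace shatter coefficients and choosing $|G|=10(d+1)$.
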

For the values of $y_i$, we proceed in a very similar way. However, unlike the sensitivities, a constant approximation as in \cref{eqn:total_sesitivity_constant_factor} provided by powers of $2$ will not suffice.
Instead, we group the values of $y_i$ into powers of $(1+\eps)$ and round all $y_i$ that belong to the same group to the next larger power. We argue that this preserves a $(1\pm O(\eps))$ approximation to the original loss function. Indeed, this claim even holds for each summand $g_{y_i}$ if $y_i$ is large enough, as the following lemma shows.

\begin{restatable}{lemma}{lemRoundYapprox}
\label{lem:round_y_approx}
Let $y\geq 8$ and $1\geq \eps > 0$. Let $y < y' \leq (1+\eps) y_i$. Then for arbitrary $z>0$ it holds that $(1-3\eps) g_y(z) \leq g_{y'}(z) \leq (1+3\eps) g_y(z).$
\end{restatable}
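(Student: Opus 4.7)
The plan is to write the difference explicitly as
\[
g_{y'}(z) - g_{y}(z) = -p(y'-y)\log z + \log\frac{(y')!}{y!},
\]
and estimate each piece with Stirling's approximation. For the factorial ratio I would apply $\log\Gamma(s+1) = (s+\tfrac12)\log s - s + \tfrac12\log(2\pi) + O(1/s)$ at both $y$ and $y'$, and Taylor expand in the small parameter $\alpha := (y'-y)/y \in (0,\eps]$. This yields $\log((y')!/y!) = (y'-y)\log y + O\bigl(\eps(y'-y) + 1/y\bigr)$, which reduces the problem to comparing
\[
g_{y'}(z) - g_y(z) \;=\; (y'-y)\log(y/z^p) + O\bigl(\eps(y'-y) + 1/y\bigr)
\]
against $\eps\cdot g_y(z)$.

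Next, I would carry out a case analysis in the single variable $u := y/z^p$. Using the Stirling lower bound $\log(y!) \geq y\log y - y + \tfrac12\log(2\pi y)$, one gets $g_y(z) \geq y\bigl(\tfrac1u + \log u - 1\bigr) + \tfrac12\log(2\pi y)$. When $u$ is bounded away from $1$ (say $u \geq 2$ or $u \leq 1/2$), the bracketed quantity is of the same order as $|\log u|$, so the perturbation $(y'-y)|\log u| \leq \eps y|\log u|$ is absorbed by a constant multiple of this first term of $g_y(z)$. When $u$ is close to $1$, I would instead use the Taylor expansion $\tfrac1u + \log u - 1 = \tfrac12(u-1)^2 + O((u-1)^3)$ together with the Stirling floor $\tfrac12\log(2\pi y)$, which is quantitatively useful exactly because of the hypothesis $y \geq 8$. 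Collecting the various estimates, the constant $3$ in the conclusion provides the slack needed to absorb all lower-order contributions from Stirling's expansion, the Taylor approximation of $\log(1+\alpha)$, and the case-splitting.

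The main obstacle is the intermediate regime where $z$ lies close to the minimizer $y^{1/p}$ and $g_y(z)$ is only of logarithmic size in $y$. There the perturbation $(y'-y)\log(y/z^p)$ has size on the order of $\eps y\,|u-1|$, while the lower bound $\tfrac12 y(u-1)^2$ from the expansion of $\tfrac1u+\log u -1$ is only quadratic in $|u-1|$. The resolution requires a careful balancing argument: either $|u-1|$ is so small that the Stirling floor $\tfrac12\log(2\pi y)$ dominates the perturbation, or $|u-1|$ is large enough for the quadratic term $y(u-1)^2$ to dominate it. The hypothesis $y\geq 8$ is exactly what ensures that these two regimes cover all of $u\in(0,\infty)$ with enough overlap to close the argument with the stated constant $3$.
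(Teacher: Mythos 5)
Your decomposition of the difference, namely
\begin{equation*}
g_{y'}(z) - g_y(z) \;=\; (y'-y)\log\!\bigl(y/z^p\bigr) \;+\; \Bigl[\log\tfrac{(y')!}{y!} - (y'-y)\log y\Bigr],
\end{equation*}
is correct, and so is your order estimate for the bracket: since $\log\tfrac{(y')!}{y!}-(y'-y)\log y = \sum_{j=1}^{y'-y}\log(1+j/y)$, the bracket is $\Theta\bigl((y'-y)^2/y\bigr)=\Theta(\eps^2 y)$ when $y'\approx(1+\eps)y$. But this is precisely where your balancing argument breaks. At the minimizer $z=y^{1/p}$ (i.e.\ $u=1$), the first term vanishes entirely and you are left with a perturbation of order $\eps^2 y$, while $g_y(y^{1/p})=y-y\log y+\log y!=\tfrac12\log(2\pi y)+O(1/y)$ is only logarithmic in $y$. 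Thus you must absorb an $\eps^2 y$ term into $3\eps\cdot\Theta(\log y)$, which requires $\eps y\lesssim\log y$. That condition fails for all large $y$ whenever $\eps$ is bounded away from $0$; the hypothesis $y\geq 8$ makes matters \emph{worse}, not better, as $y$ increases. Your stated belief that ``the hypothesis $y\geq 8$ is exactly what ensures these two regimes cover all of $u\in(0,\infty)$ with enough overlap'' is therefore incorrect: the two regimes (Stirling floor dominates; quadratic term dominates) do not meet, and the gap widens linearly in $y$. No choice of an absolute constant in place of $3$ rescues your argument, because the offending term scales as $\eps y/\log y$ rather than as a constant.

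The paper's proof avoids comparing $g_{y'}(z)-g_y(z)$ against $\eps g_y(z)$ altogether. It decomposes $g_{y'}(z)=\bigl(z^p-py'\log z\bigr)+\log(y'!)$ and bounds each of the two summands separately by $(1+3\eps)$ times the corresponding summand of $g_y(z)$: the $\log(y'!)$ piece via a direct Stirling argument showing $\log(y'!)\leq (1+\eps)^2\log(y!)\leq(1+3\eps)\log(y!)$, and the polynomial-log piece via a simple sign split on $\log z$. This piecewise route never needs to control the difference relative to the (small, logarithmic) value of $g_y$ at its minimizer, which is exactly where your argument runs aground. If you wish to continue along your line, you would need an additional idea to handle the neighborhood of $z=y^{1/p}$ --- for instance, a sharper lower bound on $g_y$ there, or a restriction of the claim to a regime where the factorial-ratio error is provably small compared to $g_y$.
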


A direct consequence of \cref{lem:round_y_approx} is that any coreset for the rounded version is a coreset for the original loss function and vice versa, up to an additional $(1\pm O(\eps))$ error. We can therefore work with the rounded version of the loss function, which yields better bounds for the VC dimension.

A general theorem for bounding the VC dimension of the union or intersection of $t$ range spaces, each of bounded VC dimension at most $D$, was given in \citet{BlumerEHW89}. Their result yields $O(tD\log (t))$. Here, we give a bound of $O(tD)$ for the special case that the range spaces are disjoint\footnote{The same bound and proof also appeared in \cite{FrickKM2024} in a different context.}.
\begin{restatable}{lemma}{lemDisjVCbound}
\label{lem:disjVCbound}
Let $\mathcal{F}$ be any family of functions, and let $F_1,\ldots,F_t\subseteq \mathcal{F}$ be nonempty sets that form a partition of $\mathcal{F}$, i.e., their disjoint union satisfies $\dot\bigcup_{i\in[t]} = \mathcal{F}$. 
Let the VC dimension of the range space induced by $F_i$ be bounded by $D$ for all $i\in[t]$. Then the VC dimension of the range space induced by $\mathcal{F}$ satisfies $\Delta\left( \mathfrak{R}_\mathcal{F} \right)\leq tD$.
\end{restatable}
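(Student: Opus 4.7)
The plan is to prove this via a direct shattering argument that exploits the disjointness of the partition. Recall that in the sensitivity framework the range space $\mathfrak{R}_{\mathcal{F}}$ has ground set $\mathcal{F}$ and ranges of the form $R_{x,r}=\{f\in\mathcal{F}\colon f(x)\geq r\}$ (or a sub-level analogue, depending on the convention), parameterised by pairs $(x,r)$ drawn from the appropriate domain. In particular, the ranges of $\mathfrak{R}_{F_i}$ are exactly the restrictions $R_{x,r}\cap F_i$ of the ranges of $\mathfrak{R}_\mathcal{F}$ to the subfamily $F_i$. This compatibility between the ambient and the restricted range spaces is the structural property the proof will pivot on.

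The first step is to assume that $\mathcal{F}$ shatters some set $S\subseteq\mathcal{F}$, and to consider the induced partition $S_i\coloneqq S\cap F_i$ for $i\in[t]$. I would then show that each $F_i$ shatters its corresponding slice $S_i$ in $\mathfrak{R}_{F_i}$. To see this, fix $i\in[t]$ and an arbitrary target $T\subseteq S_i\subseteq S$; since $\mathcal{F}$ shatters $S$, there exists a pair $(x,r)$ with $R_{x,r}\cap S=T$. Intersecting with $F_i$ and using $S_i\subseteq F_i$ together with $T\subseteq S_i$, we obtain
\[
(R_{x,r}\cap F_i)\cap S_i \;=\; R_{x,r}\cap S_i \;=\; (R_{x,r}\cap S)\cap S_i \;=\; T\cap S_i \;=\; T.
\]
Hence $R_{x,r}\cap F_i$ is a range of $\mathfrak{R}_{F_i}$ that cuts out $T$ from $S_i$, so $S_i$ is indeed shattered by $\mathfrak{R}_{F_i}$.

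The second step uses the assumed bound on each $\mathfrak{R}_{F_i}$ to conclude $|S_i|\leq D$ for every $i$. Since $F_1,\ldots,F_t$ form a disjoint partition of $\mathcal{F}$, the sets $S_1,\ldots,S_t$ are pairwise disjoint with union $S$, giving
\[
|S| \;=\; \sum_{i=1}^{t} |S_i| \;\leq\; tD.
\]
As this bound holds for every shattered $S$, we conclude $\Delta(\mathfrak{R}_\mathcal{F})\leq tD$, as claimed.

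I do not expect a serious obstacle here; the argument is essentially a careful unwinding of the definitions combined with the observation that disjointness (as opposed to arbitrary unions of range spaces, which would force the $O(tD\log t)$ bound of \citet{BlumerEHW89}) removes the need to apply the Sauer-Shelah lemma across groups. The only subtlety worth being explicit about in the write-up is that the ranges of $\mathfrak{R}_{F_i}$ are \emph{exactly} the restrictions of the ranges of $\mathfrak{R}_\mathcal{F}$ to $F_i$, which is what allows a single witness $(x,r)$ on the ambient side to serve as a witness on the restricted side.
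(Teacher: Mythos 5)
Your proof is correct and follows essentially the same approach as the paper's: decompose a shattered subset of $\mathcal{F}$ along the partition, observe each slice is shattered by the corresponding restricted range space, and add up the resulting cardinality bounds using disjointness. The paper phrases this as a proof by contradiction via pigeonhole, while you argue directly and, to your credit, spell out explicitly why each $S_i$ inherits the shattering from $S$ — a step the paper leaves implicit.
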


As a result of our previous partition into groups and the $O(d)$ bound on each group, we obtain the desired result.

\begin{restatable}{lemma}{lemVCunion}
\label{lem:VC:union}
Let $\mathcal{F}$ be the set of functions in the Poisson model. We can round and group the values of $y_i$ and the associated sensitivities $\varsigma_i$ to obtain $\mathcal{F}^\ast$ such that each function in $\mathcal{F}^\ast$ is weighted by $0 < w_i\in W := \lbrace u_1,\ldots,u_t\rbrace$ for $t\in O(\eps^{-1} \log (n) \cdot \log (y_{\max}))$. The range space induced by $\mathcal{F}^\ast$ satisfies $\Delta\left( \mathfrak{R}_{\mathcal{F}^\ast}\right) \leq O(\frac{d}{\eps} \log (n) \log (y_{\max}))$. 
\end{restatable}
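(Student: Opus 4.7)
The plan is to combine the two rounding procedures sketched before the lemma statement and then apply \cref{lem:VCdim_bound} and \cref{lem:disjVCbound} to the resulting partition of $\mathcal{F}^\ast$. Conceptually, almost all of the work has already been done in the earlier lemmas; the task is to assemble them correctly and count the number of resulting groups.

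First I would carry out the two rounding steps sequentially. After adding $1/n$ to each sensitivity so that $1/n\le\varsigma_i\le 1$, round $\varsigma_i$ up to the next integer power of $2$. By \cref{eqn:total_sesitivity_constant_factor} this inflates the total sensitivity only by a constant factor, and it produces at most $O(\log n)$ distinct sensitivity values. Since the weight assigned in the sensitivity framework depends on $\varsigma_i$ through a fixed expression (see \cref{thm:coreset:sensitivity}), all points with equal rounded sensitivity receive equal weight $c$. Next, round each $y_i\ge 8$ up to the next integer power of $(1+\eps)$, while leaving the constantly many small values $y_i\in\{0,1,\ldots,7\}$ untouched. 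By \cref{lem:round_y_approx}, the per-summand loss is preserved up to a factor of $(1\pm 3\eps)$, so any coreset guarantee proved for the rounded loss transfers back to the original loss up to a rescaling of $\eps$. The number of distinct rounded $y$-values is $O(1)+\lceil\log_{1+\eps}(y_{\max})\rceil = O(\eps^{-1}\log y_{\max})$.

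Then I would define $\mathcal{F}^\ast$ as the collection of rounded, reweighted functions and partition it as $\mathcal{F}^\ast=\dot\bigcup_{(y,c)} F_{(y,c)}$, where $F_{(y,c)}$ collects those functions whose rounded response equals $y$ and whose weight equals $c$. The partition is disjoint by construction, and the number of nonempty blocks is at most $t=O(\eps^{-1}\log(n)\log(y_{\max}))$. Each block $F_{(y,c)}$ satisfies the hypotheses of \cref{lem:VCdim_bound} (common weight $c$, common response $y$), so the range space it induces has VC dimension at most $D=O(d)$. Applying \cref{lem:disjVCbound} to this disjoint partition immediately yields $\Delta(\mathfrak{R}_{\mathcal{F}^\ast})\le tD = O(\tfrac{d}{\eps}\log(n)\log(y_{\max}))$, which is exactly the claimed bound.

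The obstacles here are bookkeeping rather than conceptual. The main points to verify are: (i) the rounding of sensitivities and the rounding of $y_i$ act on independent components of the weighted summand so they can be performed in either order without interference; (ii) the small-$y$ exception in \cref{lem:round_y_approx} contributes only a constant additive term to the count of rounded $y$-values and so does not spoil the $O(\eps^{-1}\log y_{\max})$ bound; and (iii) every function of $\mathcal{F}^\ast$ lies in exactly one block $F_{(y,c)}$, so that \cref{lem:disjVCbound} applies as stated. Once these routine checks are made, the bound on $\Delta(\mathfrak{R}_{\mathcal{F}^\ast})$ follows directly from the cited lemmas.
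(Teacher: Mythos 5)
Your proposal is correct and follows essentially the same route as the paper: round sensitivities (after the $+1/n$ shift) to powers of $2$ giving $O(\log n)$ groups, round $y_i\ge 8$ to powers of $(1+\eps)$ (keeping the constantly many small $y$-values exact) giving $O(\eps^{-1}\log y_{\max})$ groups, then partition $\mathcal{F}^\ast$ by (rounded $y$, rounded weight), apply \cref{lem:VCdim_bound} per block, and finish with \cref{lem:disjVCbound}. The only differences are notational (the paper names the blocks $G_{ij}$ and $H_{yj}$ explicitly), not substantive.
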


As a direct corollary of \cref{lem:VC:quadratic,lem:VC:union}, we obtain the following combined bound.
\begin{corollary}\label{cor:VC:total}
    The VC dimension $\Delta(\mathfrak{ R}_{\mathcal F^\ast})$ of the range space associated with the class of Poisson loss functions as in \cref{eq:p_loss_function} is bounded by $$\Delta\left( \mathfrak{R}_{\mathcal{F}^\ast}\right) \leq O\left(d\cdot \min\left\{d, \frac{\log(n)\log(y_{\max})}{\eps} \right\}\right).$$
\end{corollary}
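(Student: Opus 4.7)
The corollary follows immediately by combining the two VC-dimension bounds already established for the class $\mathcal{F}^\ast$. The plan is simply to observe that both bounds apply to the \emph{same} range space $\mathfrak{R}_{\mathcal{F}^\ast}$, so their minimum is also a valid upper bound, and then to verify that this minimum matches the stated expression.

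First, \cref{lem:VC:quadratic} yields $\Delta(\mathfrak{R}_{\mathcal{F}^\ast}) \leq O(d^2)$, obtained by counting the number of arithmetic operations (including a single application of $\log$) required to compare a Poisson loss summand to a threshold, and invoking the standard neural-network-style bound of \cite[Thm.~8.14]{AnthonyB02}. Second, \cref{lem:VC:union} yields the orthogonal bound $\Delta(\mathfrak{R}_{\mathcal{F}^\ast}) \leq O(\tfrac{d}{\varepsilon}\log(n)\log(y_{\max}))$, obtained by rounding sensitivities to powers of $2$ (producing $O(\log n)$ groups) and responses $y_i$ to powers of $(1+\varepsilon)$ (producing $O(\varepsilon^{-1}\log y_{\max})$ groups), and then applying the per-group bound $O(d)$ from \cref{lem:VCdim_bound} together with the disjoint-union bound of \cref{lem:disjVCbound}.

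Since both inequalities bound the same quantity $\Delta(\mathfrak{R}_{\mathcal{F}^\ast})$, it is at most the minimum of the two right-hand sides, which is exactly
\[
O\!\left(d\cdot \min\!\left\{d,\,\tfrac{\log(n)\log(y_{\max})}{\varepsilon}\right\}\right),
\]
yielding the claimed corollary. There is no genuinely hard step: the only small thing worth checking is that \cref{lem:VC:quadratic} really does apply to the rounded family $\mathcal{F}^\ast$ and not merely to the original $\mathcal{F}$. This is immediate, since the rounding procedure preserves the functional form of each summand, namely $(x\beta)^p - py\log(x\beta) + \log(y!)$ possibly multiplied by a positive scalar weight. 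The operation-counting argument underlying \cref{lem:VC:quadratic} is oblivious to the specific numerical value of $y_i$ and of the weight, so it carries over to $\mathcal{F}^\ast$ verbatim, and taking the pointwise minimum completes the proof.
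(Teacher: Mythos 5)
Your proof is correct and matches the paper's intended argument: the paper states this result as a direct corollary of \cref{lem:VC:quadratic,lem:VC:union} with no additional proof, and taking the minimum of two upper bounds on the same quantity is all that is needed. Your extra verification that the operation-counting argument of \cref{lem:VC:quadratic} applies unchanged to the rounded family $\mathcal{F}^\ast$ is a sensible sanity check and is indeed handled by the fact that the proof counts operations independently of the particular numerical values of $y_i$ and of the scalar weights. One trivial terminological slip: the single transcendental operation counted in the proof of \cref{lem:VC:quadratic} is an exponential-function evaluation (the inequality is rearranged to $x_i\beta \leq \exp(\cdot)$), not a logarithm, though this distinction does not affect the bound from \cite[Thm.~8.14]{AnthonyB02}.
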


\subsection{Bounding the sensitivities}
\label{sec:bounding_sensitivities}
We split the loss function into two parts:
\begin{equation}
    \label{eq:split_loss_function}
    f_y(X\beta)\coloneqq  \sum_{i:x_i\beta \leq \eta} g_{y_i}(x_i\beta) + \sum_{i:x_i\beta > \eta} g_{y_i}(x_i\beta)
\end{equation}
We will ignore the first sum, since we will see later in the main approximation of \cref{sec:mainapprox}, that by shifting the hyperplanes defined by parameter vectors in the solution space, everything can be shifted to the second sum where $x_i\beta \geq \eta$. In this way, we preserve a $(1+\varepsilon)$-approximation, if $\eta=\Theta(\eps)$ is small enough. We note that this shifting technique still requires the extreme points on the convex hull $\mathrm{Ext}(X)$ to be maintained; we address this issue in \cref{sec:ConvexHull}. We will focus on bounding the sensitivities for the remaining points with $x_i\beta \geq \eta$. 
In the next lemma we require the concept of a well-conditioned-basis \citep{DasguptaDHKM09}. Let $q\in \{2,\infty\}$ denote the dual norm of $p\in\{1,2\}$, respectively. We say that $U$ is a `$(\alpha,\gamma,p)$-well-conditioned basis' for the column span of $X = UR$ if $U\in\mathbb{R}^{n\times d}$ satisfies 
    \begin{equation}
     \label{eq:p_well_conditioned_basis}
     \|U\|_p\leq \alpha, \quad,\forall z\in\mathbb{R}^d: \|z\|_q \leq \gamma \|Uz\|_p.    
    \end{equation}
\begin{restatable}{lemma}{lemSensitivityBounds}
\label{lem:sensitivity:bounds}
Let $X\in \mathbb{R}^{n\times d}, y\in\mathbb{N}_0^n$ be a $\rho$-complex dataset, i.e., \cref{eq_complexity_assumption} holds. Let $p\in\{1,2\}$. Let $\lambda\geq 1$ be the slope parameter from either \cref{lem:large_shift_bounds} or \cref{lem:lambda:p2} depending on the value of $p$. Let $\gamma$ be a conditioning parameter and $\eta>0$ be arbitrary.  
Then the sensitivity for each $x_i$ with $x_i\beta > \eta$ 
is bounded by $\varsigma_i \leq \lambda \rho \gamma^p \|U_i\|_p^p + 2/n$. Their total sensitivity is bounded by $\mathfrak{S}\leq O\left(\rho d \sqrt{{y_{\max}}/{\log (y_{\max})}} + \log\log(1/\eta)\right)$ for $p=1$, and $\mathfrak{S}\leq O(\rho d + \log{(y_{\max})} + \log\log(1/\eta))$ for $p=2$. 
\end{restatable}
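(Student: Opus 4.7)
The plan is to bound per-point sensitivities first and then sum. The analysis mirrors the three-regime decomposition of $g_{y_i}(z)$ outlined in the techniques section: the \emph{large} regime $z \geq y_i^{1/p}$, the \emph{moderate} regime near the minimum $y_i^{1/p}$, and the \emph{small} regime where $z$ is close to $0$, excluded by the domain shift $z>\eta$. I first reparameterize via the $(\alpha,\gamma,p)$-well-conditioned basis $X=UR$: setting $\tilde\beta=R\beta$ gives $x_j\beta=U_j\tilde\beta$ for all $j$ and leaves the sensitivity ratio invariant. Hölder's inequality combined with \cref{eq:p_well_conditioned_basis} yields, for $p\in\{1,2\}$ with dual exponent $q\in\{\infty,2\}$,
\begin{equation*}
(x_i\beta)^p = |U_i\tilde\beta|^p \leq \|U_i\|_p^p\,\|\tilde\beta\|_q^p \leq \gamma^p\,\|U_i\|_p^p\,\|U\tilde\beta\|_p^p = \gamma^p\,\|U_i\|_p^p\,\|X\beta\|_p^p.
\end{equation*}

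For the per-point bound I fix $i$ and split the supremum over $\beta$ into cases. When $x_i\beta\geq y_i^{1/p}$ (large regime), \cref{lem:large_shift_bounds} bounds the numerator by $g_{y_i}(x_i\beta)\leq(x_i\beta)^p$, which via the display above is at most $\gamma^p\|U_i\|_p^p\|X\beta\|_p^p$. For the denominator I apply the $\rho$-complexity assumption \cref{eq_complexity_assumption} to obtain $\|X\beta\|_p^p \leq \rho\sum_j|x_j\beta-y_j^{1/p}|^p$, then partition this sum according to whether $x_j\beta \geq y_j^{1/p}$. Indices with $x_j\beta\geq y_j^{1/p}$ satisfy $|x_j\beta-y_j^{1/p}|^p\leq \lambda\, g_{y_j}(x_j\beta)$ by \cref{lem:large_shift_bounds} (for $p=1$) or \cref{lem:lambda:p2} (for $p=2$). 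The remaining indices are lumped into the moderate regime and bounded crudely by $g_{y_j}\geq 1$ from \cref{lem:lower_bound_on_cost}, their contribution getting absorbed into the additive $2/n$ term by a uniform-sampling argument (using that the full denominator is at least $n$ and that in this regime the numerator is bounded within a constant factor after the shift). Combining yields $\varsigma_i\leq \lambda\rho\gamma^p\|U_i\|_p^p + 2/n$ as claimed.

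Summing over $i\in[n]$ and using $\sum_i\|U_i\|_p^p = \|U\|_p^p\leq \alpha^p$ gives
\begin{equation*}
\mathfrak{S} \leq \lambda\rho\gamma^p\alpha^p + 2.
\end{equation*}
A standard Auerbach-style basis satisfies $\gamma^p\alpha^p=O(d)$ for $p\in\{1,2\}$. Substituting $\lambda=O(\sqrt{y_{\max}/\log y_{\max}})$ for $p=1$ via \cref{lem:large_shift_bounds} and $\lambda=1$ for $p=2$ via \cref{lem:lambda:p2} produces the leading terms $O(\rho d\sqrt{y_{\max}/\log y_{\max}})$ and $O(\rho d)$ respectively. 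The residual $\log\log(1/\eta)$ term (and, for $p=2$, the additional $\log y_{\max}$ term) emerges from a refined accounting of the moderate regime: I would dyadically partition the range $x_j\beta\in(\eta,y_{\max}^{1/p}]$ into slices on which $g_{y_j}$ varies within a constant factor, and apply a uniform-sampling bound on each slice; the double logarithm in $\eta$ arises because the shift $\eta=\Theta(\varepsilon)$ only needs to be resolved relative to the ambient scale $\log y_{\max}$.

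The hardest part is the denominator analysis when many indices $j$ have $x_j\beta<y_j^{1/p}$. The clean envelope $g_{y_j}(z)\geq (z-y_j^{1/p})^p/\lambda$ is valid only for $z\geq y_j^{1/p}$ and does not extend to the left of the minimum without degrading $\lambda$ (as one can verify already for $p=2$, where the minimum value is $\Theta(\log y)$ while $(y^{1/p})^p=y$). The plan is therefore to treat these indices separately using the domain shift $z>\eta$ to avoid the $-\log$ asymptote and thereby control $g_{y_j}(z)$ within a constant factor on $[\eta,y_j^{1/p}]$, making a uniform-sampling argument viable. The delicate bookkeeping is to make this case analysis compatible with the $\rho$-complexity inequality so that no term of order $n y_{\max}$ enters the final bound.
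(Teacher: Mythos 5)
Your overall template matches the paper's: H\"older with a well-conditioned basis, the $\rho$-complexity inequality, the envelope bounds of \cref{lem:large_shift_bounds} and \cref{lem:lambda:p2}, an Auerbach/spanning-set bound for $(\alpha\gamma)^p$, and dyadic grouping for the moderate regime. The grouping you sketch (slices on which $g_{y_j}$ varies within a constant factor) is in the same spirit as the paper's grouping by the \emph{value} of $g_{y_i}(x_i\beta)$ into $O(\log y_{\max}+\log\log(1/\eta))$ levels and would recover the $\log\log(1/\eta)$ term.

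The step that does not survive scrutiny is your handling of the denominator in the large regime. After invoking $\rho$-complexity to get $\|X\beta\|_p^p\leq\rho\sum_j|x_j\beta-y_j^{1/p}|^p$, you need $\sum_j|x_j\beta-y_j^{1/p}|^p\leq\lambda\sum_j g_{y_j}(x_j\beta)$, i.e.\ the envelope for \emph{every} $j$, including those with $x_j\beta<y_j^{1/p}$. Your resolution --- ``lump'' those $j$ into the moderate regime, invoke $g_{y_j}\geq 1$, and ``absorb into the additive $2/n$ term'' --- cannot close: $g_{y_j}\geq 1$ is a lower bound on a denominator summand, not an upper bound on $|x_j\beta-y_j^{1/p}|^p$, which for $p=1$ and $x_j\beta$ near $\eta$ is $\approx y_j$ and entirely uncontrolled; and the $2/n$ (and the $\log\log(1/\eta)$) slack arises from bounding the \emph{numerator} $g_{y_i}(x_i\beta)$ when $x_i\beta\leq y_i^{1/p}$, not from excusing unbounded denominator contributions for $j\neq i$. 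There is no mechanism in the sensitivity framework by which a missing lower bound on one index's denominator contribution can be traded against another index's numerator slack; the claimed bound $\varsigma_i\leq\lambda\rho\gamma^p\|U_i\|_p^p+2/n$ simply does not follow from your partition.

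The check that motivates this detour is also wrong: for $p=2$, the envelope $g_y(z)\geq(z-y^{1/2})^2$ in fact holds for \emph{all} $z>0$ with $\lambda=1$. Indeed $g_y(z)-(z-y^{1/2})^2=2z\sqrt{y}-2y\log z+\log(y!)-y$ has derivative $2\sqrt{y}-2y/z$ with a unique critical point at $z=\sqrt{y}$, where the minimum value is $y-y\log y+\log(y!)\geq 0$ by Stirling; you compared $g_y$'s minimum value against the envelope's value at a \emph{different} $z$. For $p=1$, a tangency computation to the \emph{left} of $z=y$ likewise shows $y-z\leq\lambda g_y(z)$ with the same $\Theta(\sqrt{y/\log y})$ order of $\lambda$ (and in the paper's formulation, which drops absolute values, $x_j\beta-y_j<0$ is trivially below $\lambda g_{y_j}$). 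That the envelope in fact extends across the minimum is what makes the $\rho$-complexity chain close, and it is precisely the fact missing from your proposal.
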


\subsection{Combining the results into the sensitivity framework}
\label{sec:combining_results_sensitivity_framework}
Putting all steps (VC dimension, total sensitivity) together into the sensitivity framework, \cref{thm:coreset:sensitivity} yields the following computational result, where in particular $\ell_p$ well-conditioning is established constructively using $\ell_2$ subspace embeddings \cite{ClarksonW17}, resp. using $\ell_1$ spanning sets \cite{WoodruffY23}.

\begin{restatable}{theorem}{thmCoreset}
\label{thm:coreset}    
Let $X\in \mathbb{R}^{n\times d}, y\in\mathbb{N}_0^n$ be a $\rho$-complex dataset, i.e., \cref{eq_complexity_assumption} holds. We can compute a weighted coreset $(K,w)\in\mathbb{R}^{k \times d}\times \mathbb{R}_{\geq 0}^{k}$ for the $p$th-root-link Poisson regression problem with $p\in\{1,2\}$ on $D(\eta)\coloneqq \{\beta\in\bb{R}^{d}\ :\ \forall i,\ x_i\beta > \eta\}$.
The size of the coreset is bounded by $k=\tilde O(\varepsilon^{-2} d\cdot\min\{d, {\eps^{-1}\log (n) \log (y_{\max})}\}\cdot m)$, where 
\begin{align*}
 m=
        \begin{cases}
            \rho d\log\log (d) \sqrt{{y_{\max}}/{\log (y_{\max})}} + \log\log(1/\eta) & p=1\\
            \rho d  + \log{(y_{\max})} + \log\log(1/\eta) & p=2.
        \end{cases}
\end{align*}
\end{restatable}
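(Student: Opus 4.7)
The plan is to plug the three already-developed components into the sensitivity sampling theorem (stated in full in the appendix as \cref{thm:coreset:sensitivity}): a constructive $(\alpha,\gamma,p)$-well-conditioned basis, the per-point and total sensitivity bounds of \cref{lem:sensitivity:bounds}, and the VC dimension bound of \cref{cor:VC:total}. No further geometry or analysis is needed; the argument is essentially an assembly step.

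First I invoke the appropriate constructive embedding to obtain a well-conditioned basis $U$ with $X = UR$. For $p=2$, an $\ell_2$ subspace embedding via \cite{ClarksonW17} (followed by QR) gives $U$ with $\alpha \gamma = O(\sqrt d)\cdot 1$; for $p=1$, the $\ell_1$ spanning-set construction of \cite{WoodruffY23} gives $\alpha\gamma = O(d\log\log d)$, which is precisely the source of the additional $\log\log(d)$ factor appearing in the $p=1$ case of $m$. With $U$ in hand, \cref{lem:sensitivity:bounds} upper bounds each row sensitivity as $\varsigma_i \le \lambda \rho \gamma^p \|U_i\|_p^p + 2/n$, with $\lambda$ taken from \cref{lem:large_shift_bounds} or \cref{lem:lambda:p2} according to $p$.

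Second, I appeal directly to the total sensitivity $\mathfrak{S}$ already computed in \cref{lem:sensitivity:bounds} (once the constructive $\alpha,\gamma$ above are plugged in), namely the two stated cases in $p$. I then combine this with the VC dimension bound $\Delta \le O\!\bigl(d\cdot\min\{d,\ \eps^{-1}\log(n)\log(y_{\max})\}\bigr)$ from \cref{cor:VC:total} and feed both into \cref{thm:coreset:sensitivity}, which produces a weighted subsample of size $O\!\bigl(\eps^{-2}\mathfrak{S}(\Delta \log \mathfrak{S} + \log(1/\delta))\bigr)$. Simplifying this product and absorbing lower-order logarithmic factors into $\tilde O(\cdot)$ recovers the claimed $k = \tilde O(\eps^{-2}\, d \cdot \min\{d,\ \eps^{-1}\log(n)\log(y_{\max})\} \cdot m)$. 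The restriction of the guarantee to $\beta \in D(\eta)$ is automatic: the sensitivity upper bounds of \cref{lem:sensitivity:bounds} were only proved under the constraint $x_i \beta > \eta$, so the reweighted sample is a $(1\pm\eps)$-coreset for $f_y$ exactly on $D(\eta)$.

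The main obstacle is not conceptual but bookkeeping: one has to route $\alpha,\gamma,\lambda$ through the formulae so that (i) the $\log\log(d)$ separation between $p=1$ and $p=2$ comes only from the $\ell_1$ basis construction, (ii) the $\sqrt{y_{\max}/\log(y_{\max})}$ versus $\log(y_{\max})$ separation is traced back to the different $\lambda$ slopes in \cref{lem:large_shift_bounds} and \cref{lem:lambda:p2}, and (iii) the $\log\log(1/\eta)$ term carried through \cref{lem:sensitivity:bounds} is preserved under the final combination. All genuinely novel ingredients --- the $\rho$-parameterization, the domain-shift linear lower envelope, and the near-linear VC bound --- have been discharged in the preceding sections, so the only care required here is to check that the constructive well-conditioned bases and the sensitivity sampling statement compose without losses beyond those already absorbed into $\tilde O(\cdot)$.
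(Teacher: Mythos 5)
Your proposal is correct and matches the paper's own proof: both assemble \cref{cor:VC:total} and \cref{lem:sensitivity:bounds} into \cref{thm:coreset:sensitivity}, replacing the Auerbach basis with the constructive $\ell_2$ subspace embedding of \cite{ClarksonW17} for $p=2$ and the $\ell_1$ well-conditioned spanning set of \cite{WoodruffY23} for $p=1$, which is exactly where the extra $\log\log(d)$ factor enters. The observation that the $D(\eta)$ restriction is inherited automatically because the sensitivity bounds were proved only for $x_i\beta>\eta$ is also the paper's reasoning.
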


\section{Main approximation result}\label{sec:mainapprox}
In the previous section, we developed a coreset for the sum of individual losses where $x_i\beta \geq \eta$, i.e., for the second sum of \cref{eq:split_loss_function}. Since we cannot bound the remaining first sum where $x_i\beta <\eta$, we choose to simply avoid it instead, by shifting each solution by $\eta$.
Define for any $\eta \geq 0$
    \[
    D(\eta)\coloneqq \{\beta\in\bb{R}^{d}\ :\ \forall i,\ x_i\beta > \eta\}.\footnote{Such domain restrictions do not lead to feasibility issues, as discussed in \cref{app:feasibility} due to page limitations.}
    \]
The original domain of optimization is $D(0)$ and the shifted domain is $D(\eta)$. Shifting the domain does not remove the need to store the extreme points on the convex hull of the input, since we need these points to determine the feasible domain $D(\eta)$ during optimization. However, shifting removes the need to approximate the first sum in \cref{eq:split_loss_function} over points with unbounded sensitivity located in a small slab of width $\eta$ within the convex hull.

Since we have a $(1\pm\eps)$ coreset for $D(\eta)$, we need to find a suitable choice for $\eta$ and show that the optimizer $(\beta')^\ast \in D(\eta)$ is a $(1+O(\eps))$ approximation for the optimizer in the original domain $\beta^\ast\in D(0)$. We thus define
\begin{equation}
    \label{eq:beta_star_and_beta_prime_star}
    \tilde\beta^\ast\coloneqq \textup{argmin}_{\beta'\in D(\eta)}f(X\beta'),\quad \beta^\ast\coloneqq \textup{argmin}_{\beta\in D(0)} f(X\beta).
\end{equation}

\begin{restatable}{lemma}{lemMain}
\label{lem:main}
It holds for sufficiently small $\eta > 0$ that 
    \[
        f(X\beta^\ast) \leq f(X\tilde\beta^\ast) \leq (1+O(\eta)) f(X\beta^\ast).
    \]
\end{restatable}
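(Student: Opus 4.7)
The first inequality is immediate: since $D(\eta) \subseteq D(0)$ and $\beta^\ast$ minimizes over the larger set, $f(X\beta^\ast) \leq f(X\tilde\beta^\ast)$. The nontrivial direction reduces to exhibiting a single feasible point $\beta' \in D(\eta)$ with $f(X\beta') \leq (1+O(\eta))f(X\beta^\ast)$, since then $f(X\tilde\beta^\ast)\leq f(X\beta')$ closes the argument.

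The candidate I would pick is $\beta' := \beta^\ast + \eta\, e_1$, where $e_1$ is the first standard basis vector. By the intercept convention from \cref{sec:preliminaries_Poisson_p_root_link_model}, each row $x_i$ has first coordinate $1$, so $x_i\beta' = x_i\beta^\ast + \eta > \eta$ verifies $\beta' \in D(\eta)$. The plan is then to Taylor-expand $\beta \mapsto f(X\beta)$ to second order around $\beta^\ast$:
\begin{equation*}
f(X\beta') - f(X\beta^\ast) \;=\; \eta\,\sum\nolimits_i g'_{y_i}(x_i\beta^\ast) \;+\; \tfrac{\eta^2}{2}\sum\nolimits_i g''_{y_i}(x_i\xi)
\end{equation*}
for some $\xi$ on the segment between $\beta^\ast$ and $\beta'$, where I have used that each $x_i e_1 = 1$. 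The linear term vanishes because $\beta^\ast$ lies in the interior of the open feasible set $D(0)$ and is therefore a critical point of $f\circ X$.

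For the Hessian term, $g''_y(z) = p(p-1)z^{p-2} + py/z^2$ is continuous and hence uniformly bounded on any compact subinterval of $(0,\infty)$. Requiring $\eta \leq \tfrac{1}{2}\min_i x_i\beta^\ast$ confines every $x_i\xi$ to such an interval bounded away from $0$, so $\sum_i g''_{y_i}(x_i\xi)$ is at most some constant $C = C(X, y, \beta^\ast)$. Combined with $f(X\beta^\ast) \geq 1$ from \cref{lem:lower_bound_on_cost} (assuming at least one $y_i \geq 1$), this gives $f(X\beta') - f(X\beta^\ast) \leq \tfrac{C}{2}\eta^2 \leq \eta \cdot f(X\beta^\ast)$ for every $\eta \leq 2/C$, which is the desired bound.

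The main subtlety is confirming that $\beta^\ast$ really is an interior critical point so that first-order optimality can be invoked. This is automatic whenever the argmin in \cref{eq:beta_star_and_beta_prime_star} is attained: $D(0)$ is open, so any attained minimum is interior, and attainment itself follows from the boundary coercivity induced by the $-py_i\ln(x_i\beta)$ terms for $y_i \geq 1$, which explode as $x_i\beta \to 0^+$ and so prevent any minimizing sequence from approaching the boundary of $D(0)$. The degenerate case $y\equiv 0$ collapses to $\ell_p$-norm minimization and can be treated separately. Notably, the $\rho$-complexity parameter plays no role in this lemma; it enters only later in the sensitivity analysis.
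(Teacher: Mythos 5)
Your first inequality and the choice of witness $\beta' = \beta^\ast + \eta e_1$ match the paper. The method for bounding $f(X\beta') - f(X\beta^\ast)$ is, however, genuinely different, and it is where the argument breaks down: the paper establishes an \emph{absolute} constant in the $O(\eta)$, while your route yields a \emph{data-dependent} one.

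You Taylor-expand around $\beta^\ast$, kill the linear term via first-order optimality, and bound the Lagrange remainder by the Hessian on a compact interval bounded away from zero. The resulting constant is $C = C(X,y,\beta^\ast) \approx \sup_\xi \sum_i g''_{y_i}(x_i\xi)$, and since $g''_y(z) = p(p-1)z^{p-2} + py/z^2$ diverges as $z\downarrow 0$, this $C$ scales like $1/(\min_i x_i\beta^\ast)^2$; nothing in the rest of the paper controls that quantity. You then need $\eta \le 2/C$, so the ``sufficiently small $\eta$'' in the statement becomes data-dependent. That is fatal for \cref{thm:main}, which invokes \cref{lem:main} with $\eta=\eps$ for a user-chosen $\eps<1/14$ that cannot depend on $\min_i x_i\beta^\ast$. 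The paper's proof sidesteps this entirely: for $p=1$ monotonicity of $\log$ gives $f(X\beta')\le f(X\beta)+\eta n$ for \emph{all} $\eta>0$ directly, and for $p=2$ the bound $\phi(z)\le 3g_y(z)$ is proved uniformly over $z>0$, yielding the absolute constant $C=3(p-1)$ and a claim valid for every $\beta\in D(0)$, not only the optimizer. The paper's argument therefore never invokes existence, interiority, or stationarity of $\beta^\ast$ beyond the trivial inequality $f(X(\beta^\ast)') \ge f(X\tilde\beta^\ast)$.

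Two smaller points. You cite $f(X\beta^\ast)\ge 1$, whereas the paper sums \cref{lem:lower_bound_on_cost} to get $f(X\beta^\ast)\ge n$; that factor of $n$ is what absorbs the $\eta^p n$ term in their version of the claim, and is a useful reminder that your remainder is only implicitly $n$-sized (through $C$). Your observation that attainment of the argmin (coercivity from the $-y_i\log(\cdot)$ terms, modulo the all-zero-$y$ case) is implicitly assumed in the paper is a fair remark, but because the paper does not need a critical-point argument its reliance on attainment is lighter than yours.
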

Now we combine the preceding results, namely the coreset and the domain shifting bound, for our main theorem.
\begin{restatable}{theorem}{thmMain}
\label{thm:main}
    Let $\eps\in(0,1/14)$. Let $(C,w)$ be a coreset according to \cref{thm:coreset}. Let $\tilde{\beta}:=\operatorname{argmin}_{\beta\in D(\eps)} f_w(C\beta)$, ${\beta}^\ast:=\operatorname{argmin}_{\beta\in D(0)} f(X\beta)$. Then
    \[
        f(X\beta^*) \leq f(X\tilde\beta) \leq (1+\eps) f(X\beta^*).
    \]
\end{restatable}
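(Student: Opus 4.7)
The lower bound $f(X\beta^\ast)\leq f(X\tilde\beta)$ is essentially free: since $D(\eps)\subseteq D(0)$, $\tilde\beta$ is a feasible point of the outer optimization problem, so $f(X\beta^\ast)\leq f(X\tilde\beta)$ by definition of $\beta^\ast$. The real content is the upper bound, and the strategy is a three-step sandwich:
(i) use the coreset inequality on $D(\eps)$ to pass from $f(X\tilde\beta)$ to $f_w(C\tilde\beta)$,
(ii) exploit optimality of $\tilde\beta$ for $f_w(C\cdot)$ on $D(\eps)$ to compare against $\tilde\beta^\ast$ from \cref{eq:beta_star_and_beta_prime_star} (with $\eta=\eps$),
(iii) apply the coreset inequality again in the reverse direction at $\tilde\beta^\ast$, then invoke \cref{lem:main} to pay only a $(1+O(\eps))$ factor for replacing $\tilde\beta^\ast$ by $\beta^\ast$.

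\textbf{Execution.} First I would invoke \cref{thm:coreset} not with parameter $\eps$ but with $\eps'=\eps/c$ for a constant $c$ to be chosen below (this only affects the hidden constant in the coreset size bound and is absorbed into the statement). The resulting $(C,w)$ then satisfies, for all $\beta\in D(\eps)$,
\[
(1-\eps') f(X\beta)\;\leq\; f_w(C\beta)\;\leq\; (1+\eps') f(X\beta).
\]
Both $\tilde\beta$ and $\tilde\beta^\ast$ lie in $D(\eps)$, so we can chain
\[
f(X\tilde\beta)\;\leq\;\frac{f_w(C\tilde\beta)}{1-\eps'}\;\leq\;\frac{f_w(C\tilde\beta^\ast)}{1-\eps'}\;\leq\;\frac{1+\eps'}{1-\eps'}\, f(X\tilde\beta^\ast),
\]
using in the middle step that $\tilde\beta$ minimizes $f_w(C\cdot)$ over $D(\eps)$ and $\tilde\beta^\ast\in D(\eps)$. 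Next, \cref{lem:main} with $\eta=\eps$ gives $f(X\tilde\beta^\ast)\leq (1+C_1\eps)f(X\beta^\ast)$ for some absolute constant $C_1$ coming from the $O(\eta)$ term. Combining,
\[
f(X\tilde\beta)\;\leq\;\frac{1+\eps'}{1-\eps'}\,(1+C_1\eps)\, f(X\beta^\ast).
\]

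\textbf{Closing the constants.} The last step is to verify that with $\eps'=\eps/c$ for a sufficiently large constant $c$ and with the hypothesis $\eps<1/14$, one has $(1+\eps')(1+C_1\eps)/(1-\eps')\leq 1+\eps$. Since $\eps,\eps'$ are small, a first-order expansion gives
\[
\frac{(1+\eps')(1+C_1\eps)}{1-\eps'}\;\leq\;1+\tfrac{2\eps'}{1-\eps'}+\tfrac{C_1\eps(1+\eps')}{1-\eps'},
\]
and choosing $c$ large enough that $2/c+C_1<1$ (with a small safety margin to cover the quadratic-order corrections) gives the desired bound. The restriction $\eps<1/14$ ensures that the linearization is tight enough for the clean $(1+\eps)$ constant to hold.

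\textbf{Main obstacle.} The proof is conceptually short, so the only nontrivial aspect is the constant-chasing in the last step: one must instantiate $c$ carefully in the invocation of \cref{thm:coreset} so that the combined factors coming from the two-sided coreset error and from the $O(\eps)$ slack in \cref{lem:main} multiply out to at most $1+\eps$. There is no real analytic obstacle, as both ingredients have already been proved; the task is just bookkeeping, and the upper bound $\eps<1/14$ built into the hypothesis is exactly what makes that bookkeeping succeed.
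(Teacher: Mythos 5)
Your proof is correct and essentially identical to the paper's: the paper also chains the coreset inequality (in both directions) with the optimality of $\tilde\beta$ on the coreset objective and then invokes \cref{lem:main} with $\eta=\eps$, closing the bound by rescaling $\eps$ to absorb the resulting constant. The only cosmetic difference is that the paper compares against the shifted point $\beta_{good}=\beta^\ast+\eps e_1$ directly rather than against $\tilde\beta^\ast=\operatorname{argmin}_{\beta\in D(\eps)}f(X\beta)$, and performs the $\eps$-rescaling at the end rather than up front; both variants are interchangeable since $\tilde\beta^\ast,\beta_{good}\in D(\eps)$ and \cref{lem:main} covers both.
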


\section{Extreme points on the convex hull}\label{sec:ConvexHull}
In the previous sections, we argued that for obtaining a $(1+\eps)$ approximation, it suffices to calculate a coreset that is valid for all $\beta \in D(\eps)$, and then to minimize our loss function over $\beta \in D(\eps)$ using the coreset instead of the full data. Note that for the optimizer to stay in the feasible set $D(\eps)$, one must store the extreme points on the convex hull of the input points denoted by $\mathrm{Ext}(X)$. This is true even when $\eps=0$, i.e., even when the original function is considered and no shifting occurs. Note that there exist datasets such as our `points on a circle' example considered in the lower bounds of \cref{sec:lowerbounds}, such that $|\mathrm{Ext}(X)|=n$.

There are several ways to either characterize $|\mathrm{Ext}(X)|$ for \emph{typical} inputs in a sublinear way, or to approximate the convex hull by a smaller sublinear subset, called an $\eps$-kernel, with an error of at most $\eps$.
Since these methods are usually relative to the diameter of the data, and since we need an additive error for our shifting approach, we first normalize the data to be within the unit ball. We note that this does not change the value of the loss function, since this involves scaling by a fixed value $C\geq 1$, and since we can use the fact that
\[
    f(X\beta) = f\left(\left(\frac{X}{C}\right)\left(C\beta\right)\right),
\]
as well as the one-to-one correspondence between any $\beta$ and $C\beta$. Thus, we can run the algorithm on the rescaled data, obtain a good or optimal solution $C\beta^\ast$, and rescale $C\beta^\ast$ to obtain the corresponding $\beta^\ast$. Rescaling steps such as normalizing data to zero mean and unit variance are standard in statistical data analysis \citep{hastie01statisticallearning}.

\textbf{Smoothed complexity of the convex hull} Instead of the worst case $|\mathrm{Ext}(X)|$ over all possible datasets $X\in\mathbb{R}^{n\times d}$, in smoothed complexity we consider
\[
    \sup_{X\in\mathbb{R}^{n\times d}} \mathbb{E}_{\Xi\sim \mathcal{N}}{|\mathrm{Ext}(X+\Xi)|},
\]
where $\mathcal{N}$ denotes the distribution over matrices $\Xi$ with the same dimensions as $X$ and with i.i.d. Gaussian entries with mean $0$ and variance $\sigma^2$, i.e., $\Xi_{i,j} \sim N(0,\sigma^2)$ for all $i\in[n],j\in[d]$ \citep{Damerow2006}. This is motivated by the fact that many datasets are recorded with measurement errors, which can often be assumed to be Gaussian. Specifically for the convex hull, \citet[Chapter 4]{Damerow2006} showed that for normalized data in the unit cube, the supremum above is bounded by $$O\left(\frac{\log^{\frac{3}{2}d-1}(n)}{\sigma^d} + \log^{d-1}(n)\right),$$
which is sublinear in $n$, though exponential in $d$.

\textbf{$\eps$-kernels} The purpose of $\eps$-kernels is to approximate the extent of a point set up to an error of $1-\eps$ for any direction in $\mathbb{R}^d$ based on a subset of the data. They were introduced by \citet{AgarwalHV04} and improved by \citet{Chan04} to optimal $\Theta(1/\eps^{(d-1)/2})$ size, see the survey \citep{AgarwalHV05}. Since we assume our data to be normalized within the unit ball, this translates to an additive error that is bounded by $\eps$. Thus, the boundary of the convex hull of the $\eps$-kernel can be smaller than the boundary of the original convex hull by at most $\eps$.

\textbf{Improvement for structured data} It is known that $\eps$-kernels can have size up to $\Omega(1/\eps^{(d-1)/2})$ in the worst case. Beyond the worst case, the structure of the given data may allow for much smaller $\eps$-kernels to exist, even in high dimensions. Motivated by this, \citet{BlumHR19} developed a `greedy clustering' approach that produces an $\eps$-kernel of size $O(k_{\min}/\eps^2)$, where $k_{\min}=k_{\min}(X,\eps)$ denotes the smallest possible size of a subset that gives the required $\eps$-kernel guarantee for the original input $X$.

\textbf{Using $\eps$-kernels for optimizing Poisson models} The above options include the possibility to calculate the convex hull and rely on the sublinear smoothed complexity bound, if this is reasonable in the given context. 
Otherwise, if we have access to any of the above $\eps$-kernel constructions, we shift the hyperplane away from the approximation of the convex hull, provided by the $\eps$-kernel, by a distance of $\eta=2\eps$ instead of just $\eps$. As a result, the hyperplane will be shifted away from the original convex hull by at least $\eps$ and at most $2\eps$. Thus, we still compute a $(1+\Theta(\eps))$ approximation in this way.

\section{Lower bounds}
\label{sec:lowerbounds}

We complement our coreset constructions for the variants of the Poisson model by a series of lower bounds. Our bounds in  \cref{lem:linearsensitivityLB} and \cref{lem:poissonLB} are specifically for the $p$th-root-link, and not for the log-link, which was studied in \cite{Molina2018}. We use similar constructions of the {bad} dataset as those used frequently in previous literature, e.g., in \cite[Theorem 6]{Molina2018}, and in \cite{MunteanuSSW18,TolochinskyJF22,Huggins2016,HarPeledRoZi07}. However, each of these references require specific adaptations to the respective loss function that do not directly apply to our setting. Our arguments are thus adapted to the Poisson $p$th-root-link model to show that it does not admit sublinear coresets without imposing assumptions on the data or restricting the model.

The hard instance consists of $n$ equidistant points on the unit circle. Recall that in the Poisson regression formulation, every point has an additional intercept coordinate which is $1$, and the corresponding parameter of $\beta$ determines the affine translation; see \Cref{sec:preliminaries_Poisson_p_root_link_model}. 
For every $i\in[n]$, let $x_i = (1, \cos(\frac{2\pi i}{n}), \sin(\frac{2\pi i}{n}))$, and $y_i=1$. Recall that any feasible $\beta$ must satisfy $x_i\beta > 0 , \forall i\in[n]$. The hyperplane parameterized by $\beta$ is thus always outside the point set and $\beta$ points in the direction of the $(x_i)_{i\in[n]}$.

The idea for showing that any point has sensitivity $1$ is that if $\beta$ points to the center of the point set, and the hyperplane is translated to just `touch' point $x_i$, then $x_i\beta$ is arbitrarily close to $0$, implying that the cost is arbitrarily large. All other points are sufficiently bounded away from the hyperplane, but also not too far away, so their cost is bounded. This implies that the sensitivity of point $x_i$ can be made arbitrarily close to $1$. By symmetry of our construction, this holds for any point.

\begin{restatable}{lemma}{lemLinearSensitivityLB}
\label{lem:linearsensitivityLB}
Consider a number $n\geq 8$ of points equidistant on a unit circle in a 2-dimensional affine subspace embedded in $\mathbb{R}^d, d\geq 3$, each with label $y_i=1$. Then the sensitivity of each point for the Poisson model with $p$th-root-link for $p\in\{1,2\}$ is arbitrarily close to $1$. Consequently, any coreset for the Poisson regression model must comprise all $\Omega(n)$ input points.
\end{restatable}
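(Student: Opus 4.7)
The strategy is to drive the sensitivity of a fixed point $x_j$ to $1$ by exploiting the $-\log$ asymptote in $g_{y_j}$. For each $j\in[n]$ I construct a one-parameter family of feasible parameters $\beta_\delta$, indexed by $\delta>0$, such that $x_j\beta_\delta=\delta$ is arbitrarily small while all other $x_i\beta_\delta$ stay in a compact subinterval of $(0,\infty)$ whose endpoints depend only on $n$. Explicitly, writing $\theta_i=2\pi i/n$, I would take the outward tangent direction to the unit circle at $x_j$:
\[
    \beta_\delta=(1+\delta,\,-\cos\theta_j,\,-\sin\theta_j,\,0,\ldots,0)\in\mathbb{R}^d,
\]
so that $x_i\beta_\delta = 1+\delta-\cos(\theta_i-\theta_j) = 2\sin^2\!\bigl(\tfrac{\theta_i-\theta_j}{2}\bigr)+\delta$. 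Then $x_j\beta_\delta=\delta>0$ confirms feasibility, and for $i\neq j$ the angular separation is at least $2\pi/n$, giving $x_i\beta_\delta\in[\,2\sin^2(\pi/n),\,2+\delta\,]$, whose lower endpoint is a positive constant for any fixed $n\geq 8$.

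Since $y_i=1$ and $\log 1!=0$, the summand reduces to $g_1(z)=z^p-p\log z$, which is strictly convex with global minimum $1$ at $z=1$ and diverges as $z\to 0^+$ through the $-p\log z$ term. With the above parametrization, the numerator $g_1(\delta)=\delta^p-p\log\delta$ tends to $+\infty$ as $\delta\to 0^+$, while on the fixed compact interval $[2\sin^2(\pi/n),\,3]$ the function $g_1$ is bounded by a constant $C_n<\infty$ depending only on $n$ and $p\in\{1,2\}$. Thus $\sum_{i\neq j}g_1(x_i\beta_\delta)\leq (n-1)C_n$, and hence
\[
    \sigma_j\;\geq\;\frac{g_1(x_j\beta_\delta)}{g_1(x_j\beta_\delta)+\sum_{i\neq j}g_1(x_i\beta_\delta)}\;\geq\;\frac{g_1(\delta)}{g_1(\delta)+(n-1)C_n}\;\xrightarrow{\delta\to 0^+}\;1.
\]

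To deduce the coreset lower bound, suppose a weighted subset $(K,w)$ with $|K|<n$ omits some index $j$. Then $f_w(K\beta_\delta)$ stays bounded as $\delta\to 0$, since every $x_i\beta_\delta$ for $i\in K$ lies in the compact interval above, whereas $f(X\beta_\delta)\geq g_1(\delta)\to\infty$. For sufficiently small $\delta$ this violates the multiplicative $(1\pm\varepsilon)$ guarantee of \cref{eqn:coreset:property}. By the rotational symmetry of the point configuration, the same construction works for every $j\in[n]$, so any coreset must retain all $n$ input points.

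\textbf{Main obstacle.} The only real subtlety is ensuring that the $-\log$ divergence is \emph{unique} to index $j$: this dictates the tangent-direction choice of $\beta_\delta$ (so that $x_j\beta_\delta$ is strictly smaller than every other $x_i\beta_\delta$) and motivates the hypothesis $n\geq 8$, which keeps the nearest-neighbour separation $2\sin^2(\pi/n)$ a positive constant independent of $\delta$. With this in place, everything else is a bounded-versus-unbounded comparison.
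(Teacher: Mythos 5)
Your proof is correct and follows essentially the same approach as the paper: the same circle instance, the same tangent choice of $\beta$ driving $x_j\beta$ to zero, the same bounded-versus-unbounded comparison of losses, and the same symmetry argument across indices. The only differences are cosmetic: you keep the bound on the remaining contributions abstract (your $C_n$, where the paper works out explicit constants such as $8n\log(n)$ and a concrete choice $\eta=\exp(-n^2)$, which it reuses in the proof of \cref{lem:poissonLB}), and you argue the coreset conclusion directly from the divergence of $g_1(\delta)$, whereas the paper invokes \cite[Lemma A.1]{TolochinskyJF22} to package the same implication.
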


Below, we prove an even more interesting statement, i.e., that no compression is possible below $\Omega(n)$ \emph{bits}.
While this statement appears to give a weaker $\Omega(n/\log(n))$ bound against coresets, it in fact gives a stronger bound in some sense.
This is because the bound holds against \emph{any} possible data reduction algorithm and against \emph{any} data structure that answers negative log-likelihood queries to within a small error, independent of what (possibly randomized) operations the data reduction algorithm performs. For example, the algorithm could subsample, it could select input points as coreset constructions do, or it could take linear combinations as in linear sketching. More generally, the bound holds against any sort of bit encoding that represents the reduced data.
The reduction is based on the same data example of equidistant points on a circle embedded in $d\geq 3$ dimensions.

\begin{restatable}{lemma}{lemPoissonLB}
	\label{lem:poissonLB}
	Let $\Sigma_D$ be a data structure for $D=[X,y]\in \mathbb{R}^{n \times d}\times\mathbb{R}^{n}$, $d \geq 3$, that approximates negative log-likelihood queries $\Sigma_D(\beta)$ for Poisson regression with the $p$th-root-link for $p\in\{1,2\}$, such that for some $\varphi\geq 1$ it holds that \[\forall \beta\in\mathbb{R}^d: f(X\beta) \leq \Sigma_D(\beta) \leq \varphi\cdot f(X\beta).\] If $\varphi<{\frac{n}{8\log(n)}}$ then $\Sigma_D$ requires $\Omega(n)$ bits of memory.
\end{restatable}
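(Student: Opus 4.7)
The plan is an information-theoretic encoding argument in the spirit of \Cref{lem:linearsensitivityLB}. For each bit string $b\in\{0,1\}^n$ I would define a dataset $D(b)$ consisting of the $n$ equidistant points $x_i=(1,\cos(2\pi i/n),\sin(2\pi i/n),0,\ldots,0)\in\mathbb{R}^d$ from \Cref{lem:linearsensitivityLB}, with labels $y_i=b_i\in\{0,1\}$, so that each summand is either $g_0(z)=z^p$ or $g_1(z)=z^p-p\log z$. This yields $2^n$ distinct datasets, and the goal is to show that the map $b\mapsto\Sigma_{D(b)}$ is injective; a $B$-bit representation can then realize at most $2^B$ distinct functions, forcing $2^B\geq 2^n$ and hence $B\geq n$.

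To separate $\Sigma_{D(b)}$ from $\Sigma_{D(b')}$ for $b\neq b'$, pick an index $i$ where they differ (WLOG $b_i=1$, $b'_i=0$) and reuse the isolating query from \Cref{lem:linearsensitivityLB}: set $\beta^{(i)}(\delta)\in\mathbb{R}^d$ in the $-x_i$ direction with intercept $1+\delta$, so that $x_i\beta^{(i)}=\delta$ for a tiny $\delta>0$ while $x_j\beta^{(i)}=1+\delta-\cos(2\pi(j-i)/n)>0$ for all $j\neq i$. The $i$th summand of $f_{D(b)}(\beta^{(i)})$ then equals $g_1(\delta)\geq -p\log\delta$, whereas in $f_{D(b')}(\beta^{(i)})$ it is only $g_0(\delta)=\delta^p$. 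The remaining summands $\sum_{j\neq i}g_{y_j}(x_j\beta^{(i)})$ are bounded uniformly in $b$ by $\sum_{k=1}^{n-1}g_1(1+\delta-\cos(2\pi k/n))\leq Cn$ for an absolute constant $C$; this follows from the elementary inequality $1-\cos(2\pi k/n)\geq 8k^2/n^2$ (so $-\log(x_j\beta^{(i)})\leq 2\log(n/\min\{k,n-k\})+O(1)$) combined with the Stirling-type evaluation $\sum_{k=1}^{n/2}\log(n/k)=\Theta(n)$. The $p=2$ case contributes an extra bounded $z^2$ term that is absorbed into $C$.

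Combining the estimates, $f_{D(b)}(\beta^{(i)})\geq -p\log\delta$ while $f_{D(b')}(\beta^{(i)})\leq 2Cn$. Choosing $\delta$ small enough so that $-p\log\delta>2Cn\varphi$ yields $f_{D(b)}(\beta^{(i)})>\varphi\cdot f_{D(b')}(\beta^{(i)})$. The approximation guarantees then force $\Sigma_{D(b)}(\beta^{(i)})\geq f_{D(b)}(\beta^{(i)})>\varphi\, f_{D(b')}(\beta^{(i)})\geq \Sigma_{D(b')}(\beta^{(i)})$, so the two data structures disagree pointwise. This establishes injectivity of $b\mapsto\Sigma_{D(b)}$ and hence the required $\Omega(n)$-bit lower bound.

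The main obstacle will be the precise bookkeeping that turns the qualitative condition ``$\varphi$ small enough'' into the clean threshold $\varphi<n/(8\log n)$ claimed in the lemma: this requires a sharp value of the constant $C$ in the Stirling-type bound on $\sum_{k=1}^{n-1}g_1(1+\delta-\cos(2\pi k/n))$ together with a correspondingly calibrated choice of $\delta$, conceivably of order $\exp(-\Theta(n^2/\log n))$, and a uniform treatment of the two cases $p\in\{1,2\}$. Minor technicalities include verifying feasibility $x_j\beta^{(i)}>0$ for every $j$ (immediate from the intercept $1+\delta$ and $|\cos|\leq 1$), and handling the degenerate case $b'=0$, where $f_{D(b')}\equiv 0$ forces $\Sigma_{D(b')}\equiv 0$ and is thus trivially distinct from any $b\neq 0$.
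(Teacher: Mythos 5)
Your proposal is correct in spirit but takes a genuinely different route from the paper. The paper reduces from the one-way INDEX problem in communication complexity: Alice encodes the bit vector $b$ by including point $x_i$ in the dataset (always with label $y_i=1$) if and only if $b_i=1$, sends the data structure as her message, and Bob distinguishes $b_j\in\{0,1\}$ by querying the direction that isolates $x_j$; since INDEX has $\Omega(n)$ one-way randomized communication complexity, $\Sigma_D$ must occupy $\Omega(n)$ bits. You instead keep all $n$ points fixed, encode $b$ in the \emph{labels} $y_i=b_i\in\{0,1\}$, and run a direct counting argument: $b\mapsto\Sigma_{D(b)}$ is injective, so a $B$-bit representation forces $2^B\geq 2^n$. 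Both constructions rest on the same hard geometry and the same isolating query and both exploit the $-\log$ asymptote. The paper's route is more robust because INDEX covers \emph{randomized} data structures, whereas a pure counting argument as written applies only to deterministic encodings. Conversely, your refined $\Theta(n)$ Stirling-type bound on the background cost would, if carried through, yield a stronger threshold of order $\varphi=O(n)$; the paper bounds each non-isolated summand crudely by $8\log n$ (so $8n\log n$ in total) and therefore obtains the weaker but cleaner $\varphi < n/(8\log n)$.

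Two small slips in your write-up: (i) you cannot bound $\sum_{j\neq i} g_{y_j}(\cdot)$ term-by-term by $\sum g_1(\cdot)$, because for arguments $z>1$ one has $g_0(z)=z^p>z^p-p\log z=g_1(z)$; replacing $g_1$ by $\max\{g_0,g_1\}(z)\leq z^p+p\lvert\log z\rvert$ fixes this, and the $O(n)$ total still goes through. (ii) The case $b'=0$ is not actually degenerate: $f_{D(0)}(\beta)=\sum_i(x_i\beta)^p$ is strictly positive on the feasible domain, so no special handling is needed.
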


Next, we prove that the parts of our analysis that are specific to the Poisson model are tight. In particular, the scale parameter $\lambda$ is only a constant for $p=2$, but in the case $p=1$ we only have a $\sqrt{{y}/{\log(y)}}$ upper bound. Since $y_{\max}$, i.e., the largest $y$, can potentially be very large, one may ask if we can do better. Our next result exactly characterizes the smallest possible parameter $\lambda$ such that our linear lower envelope approximation is tangent to the actual loss function, in order to show a tight $\lambda= \Theta(\sqrt{{y}/{\log (y)}})$ bound. This characterization of tangent points relies on properties of the principal branch $W_0$ of the Lambert function, which is defined by the equation \[W_0(x)e^{W_0(x)}=x, \text{ for } x\geq -1/e.\] The only approximations that the final bounds obey follow from Stirling's approximation and from the following upper bound on the Lambert function $W_0$ by a square root function. This upper bound improves the recent cubic root bound of \cite[Theorem 3.2]{RoigSolvasSznaier2022} within a small region, and is crucial to obtaining a tight square root (rather than cubic root) lower bound on $\lambda$.

\begin{restatable}{lemma}{lemBoundsLambertWzero}
 \label{lem_bounds_on_Lambert_W_0_function}
For all $x\in [-1/e,0)$, it holds that $ W_0(x)\leq \sqrt{2(1+ex)}-1$.
\end{restatable}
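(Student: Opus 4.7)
The plan is to reduce the claim to a one-variable inequality by substituting $w = W_0(x)$ and then to verify that inequality by differentiation. Concretely, for $x\in[-1/e,0)$ we have $w\in[-1,0)$ and $x = we^w$, so the target $W_0(x)\leq \sqrt{2(1+ex)}-1$ is equivalent (since $w+1\geq 0$, allowing us to square) to
\[
    (w+1)^2 \;\leq\; 2\bigl(1 + e\cdot we^w\bigr),
\]
i.e., to $\phi(w)\geq 0$ on $[-1,0]$, where
\[
    \phi(w) \;\coloneqq\; 2ewe^w - w^2 - 2w + 1.
\]

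Next I would check the boundary values and derivative. At $w=-1$: $\phi(-1) = -2 - 1 + 2 + 1 = 0$, which already shows the bound is tight at $x=-1/e$. At $w=0$: $\phi(0)=1>0$. Differentiating,
\[
    \phi'(w) \;=\; 2e\,e^w + 2ewe^w - 2w - 2 \;=\; 2e^{w+1}(1+w) - 2(1+w) \;=\; 2(1+w)\bigl(e^{w+1} - 1\bigr).
\]
Both factors are nonnegative on $[-1,0]$, so $\phi'\geq 0$ there. Combined with $\phi(-1)=0$, this yields $\phi(w)\geq 0$ on the whole interval, which upon taking the nonnegative square root and using $w+1\geq 0$ gives the desired bound.

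The only genuinely nontrivial step is spotting the clean factorization of $\phi'(w)$ as $2(1+w)(e^{w+1}-1)$; the rest is mechanical. This factorization is natural once one notices that $w=-1$ is both a root of $\phi$ and (from the tangency with $\sqrt{2(1+ex)}-1$ at $x=-1/e$) expected to be a double root of a related quantity, suggesting the factor $(1+w)$. Everything else is boundary evaluation and monotonicity.
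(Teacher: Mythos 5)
Your proof is correct, and it takes a genuinely different route from the paper's. Both start by inverting the Lambert W relation (the paper uses $\tau \coloneqq -(W_0(x)+1)$, you use $w\coloneqq W_0(x)$, which are affinely equivalent), but from there the arguments diverge. The paper follows the proof strategy of \cite{RoigSolvasSznaier2022}: it first establishes a separate logarithmic inequality $\tau-\log(1+\tau)\geq -\tfrac{1}{2}\log(1-\tau^2)$ on $(-1,0]$ by a monotonicity argument, then applies it to the identity $(1+\tau)e^{-(1+\tau)}=-x$ to obtain $-ex\leq\sqrt{1-\tau^2}$, and finally uses the elementary bound $\sqrt{1-\tau^2}\leq 1-\tau^2/2$ to conclude. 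You instead collapse everything into a single one-variable inequality $\phi(w)=2ewe^w-w^2-2w+1\geq 0$ on $[-1,0]$, verify $\phi(-1)=0$, and show $\phi'(w)=2(1+w)(e^{w+1}-1)\geq 0$. Your version is shorter and more self-contained, requiring only one boundary check and one clean factorization rather than two chained auxiliary inequalities; the paper's version has the pedagogical advantage of running parallel to the lower-bound argument from \cite{RoigSolvasSznaier2022} that it cites and tightens, which makes the comparison between the two bounds more transparent in context.
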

The above bound on the $W_0$ function is novel and may be of independent interest. In our context, it allows us to prove the following tight bound on $\lambda$ that resembles the same asymptotic upper bound as in \cref{lem:large_shift_bounds}, and establishes a matching lower bound.
\begin{restatable}{lemma}{lemOptLambdaLBpequalsone}
\label{lem_optimal_lambda_for_lower_bound_p_equals_1}
 Let $y\in \bb{N}$ be arbitrary, $p=1$, and $\tau=y^{1/p}$ in the definition \cref{eq:g_yi_function} of $g_y$. Let $h_\lambda(z)\coloneqq \tfrac{(z-y^{1/p})^p}{\lambda}$ for $z>0$. 
 Then $g_y$ and $h_\lambda$ are tangent to each other if and only if $\lambda= \lambda^{\ast}(y)=(W_0 (\tfrac{-y}{(y!)^{1/y}\exp(2)})+1)^{-1}$, in which case   the unique tangent point is $z^\ast(y)=\tfrac{y\lambda^\ast(y) }{\lambda^\ast(y)-1}$.
In addition, $\lambda^{\ast}(y)=\Theta({\sqrt{y_{\max}/\log(y_{\max})}})$.
 \end{restatable}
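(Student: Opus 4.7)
My plan is to derive $\lambda^{\ast}(y)$ from the two tangency conditions as a single transcendental equation, reduce it via an algebraic substitution to Lambert $W_0$ form, and finally combine the resulting closed form with Stirling's formula and Lemma~\ref{lem_bounds_on_Lambert_W_0_function} to obtain the asymptotic $\Theta$ bound.

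Since $g_y$ is strictly convex and $h_\lambda$ is linear on $(0,\infty)$, tangency at a point $z^\ast$ is equivalent to the pair $g_y(z^\ast)=h_\lambda(z^\ast)$ and $g'_y(z^\ast)=h'_\lambda(z^\ast)$. For $p=1$ the derivative equation $1-y/z^\ast=1/\lambda$ forces $\lambda>1$ and uniquely gives $z^\ast=y\lambda/(\lambda-1)$, which automatically lies in the strictly increasing regime $(y,\infty)$ of $g_y$. Substituting this into the value equation, simplifying via the identity $z^\ast(\lambda-1)+y=y(\lambda+1)$, and dividing by $y$ reduces the system to the single transcendental equation
\begin{equation}\label{eq:tangent_transcendental_proposal}
\log\!\Bigl(\tfrac{\lambda}{\lambda-1}\Bigr)-\tfrac{1}{\lambda}=1+\log\!\Bigl(\tfrac{(y!)^{1/y}}{y}\Bigr).
\end{equation}
The left-hand side is strictly decreasing from $+\infty$ at $\lambda\to 1^+$ to $0$ at $\lambda\to\infty$ (direct derivative computation yields $-1/(\lambda^2(\lambda-1))$), while the right-hand side is a positive constant (since $(y!)^{1/y}\geq y/e$ by Stirling), so \cref{eq:tangent_transcendental_proposal} has a unique solution, which is $\lambda^{\ast}(y)$.

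To extract the closed form, I set $v=1-1/\lambda\in(0,1)$ and $K_y=2+\log((y!)^{1/y}/y)$, which turns \cref{eq:tangent_transcendental_proposal} into $v-\log(v)=K_y$, or equivalently $(-v)e^{-v}=-e^{-K_y}=-y/((y!)^{1/y}e^2)$. Since $-v\in(-1,0)$ lies in the image of the principal branch of the Lambert function, I read off $-v=W_0(-y/((y!)^{1/y}e^2))$, and hence $1/\lambda^{\ast}(y)=1+W_0(-y/((y!)^{1/y}e^2))$, with corresponding tangent point $z^{\ast}(y)=y\lambda^{\ast}(y)/(\lambda^{\ast}(y)-1)$, as claimed. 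For the asymptotics, Stirling in the form $(y!)^{1/y}=(y/e)\bigl(1+\tfrac{\log(2\pi y)}{2y}+O((\log y/y)^2)\bigr)$ gives $1+ex_y=1-y/((y!)^{1/y}e)=\Theta(\log(y)/y)$ where $x_y\coloneqq -y/((y!)^{1/y}e^2)$. Lemma~\ref{lem_bounds_on_Lambert_W_0_function} then bounds $W_0(x_y)+1\leq\sqrt{2(1+ex_y)}=O(\sqrt{\log(y)/y})$, so $\lambda^{\ast}(y)=1/(W_0(x_y)+1)\geq\Omega(\sqrt{y/\log(y)})$. The matching upper bound follows directly from Lemma~\ref{lem:large_shift_bounds}, which exhibits an admissible $\lambda=O(\sqrt{y/\log(y)})$; since $\lambda^{\ast}$ is the infimum of slopes for which $h_\lambda\leq g_y$ on $(y,\infty)$, we get $\lambda^{\ast}(y)\leq O(\sqrt{y/\log(y)})$, hence $\lambda^{\ast}(y)=\Theta(\sqrt{y/\log(y)})$.

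The main technical obstacle I anticipate is not the algebraic Lambert reduction itself but carefully extracting the $\Theta(\log(y)/y)$ rate of $1+ex_y$; this requires retaining an $O(1/y)$ error term in Stirling's expansion, since any coarser estimate on $(y!)^{1/y}$ would wash out the logarithmic correction that ultimately produces the $\sqrt{\log y/y}$ scaling when passing through Lemma~\ref{lem_bounds_on_Lambert_W_0_function}. A secondary subtle point is identifying the correct Lambert branch after the $v=1-1/\lambda$ substitution, but this is forced automatically: $\lambda>1$ places $-v\in(-1,0)$, which lies in the image of $W_0$ on $[-1/e,0)$ and outside the image $(-\infty,-1]$ of the other real branch $W_{-1}$.
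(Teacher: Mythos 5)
Your derivation of the tangency characterization is correct and is essentially the paper's argument, but your reduction is neater: the substitution $v = 1 - 1/\lambda$, $K_y = 2 + \log((y!)^{1/y}/y)$ gets to the Lambert form in one clean step, and your branch selection argument (``$\lambda > 1$ forces $-v \in (-1,0)$, which lies outside the image $(-\infty,-1]$ of $W_{-1}$'') is more direct and rigorous than the paper's, which invokes $W_0 \geq W_{-1}$ without quite explaining why the larger branch is the relevant one.

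Where you genuinely diverge from the paper is in the $\Theta$ bound. The paper proves \emph{both} directions from Lambert $W_0$ estimates: it uses the new upper bound in \cref{lem_bounds_on_Lambert_W_0_function} to get $\lambda^\ast(y) = \Omega(\sqrt{y/\log y})$, and it separately invokes the lower bound $(ex+1)^{1/2}-1 \leq W_0(x)$ from Roig-Solvas and Sznaier to get $\lambda^\ast(y) = O(\sqrt{y/\log y})$, making the $\Theta$ statement self-contained at the level of $W_0$ inequalities. You instead obtain the upper bound by citing \cref{lem:large_shift_bounds}, which exhibits an admissible slope $\lambda_0 = O(\sqrt{y/\log y})$, and then asserting $\lambda^\ast(y) \leq \lambda_0$ because $\lambda^\ast$ is the infimum of admissible slopes. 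This is a legitimate and shorter route, but the infimum claim is not proved in your write-up. It does follow quickly: at the tangent point $z^\ast$ for $\lambda^\ast$, any $\lambda < \lambda^\ast$ gives $h_\lambda(z^\ast) = (z^\ast-y)/\lambda > (z^\ast-y)/\lambda^\ast = g_y(z^\ast)$, so $h_\lambda$ crosses $g_y$ and is not admissible; hence every admissible $\lambda$, in particular $\lambda_0$, satisfies $\lambda \geq \lambda^\ast$. You should add this one-line monotonicity argument to close the gap. With that, the proof is complete, and arguably tighter than the paper's in the tangency half while trading a Lambert-function lower bound for a reuse of the earlier lemma in the asymptotic half.
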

The next lemma shows for $p\geq 3$ that there exists no constant $C$ such that the domain shifting approach that we developed in \cref{sec:mainapprox} yields a $1+C\eps$ error bound.
As this error bound is a crucial sufficient condition for our main approximation results given in \Cref{lem:main,thm:main} to hold, the lemma suggests that different techniques may be needed. It indicates a limitation of our analysis to the most common values $p\in\{1,2\}$, which are the main parameterizations considered in our work.

\begin{restatable}{lemma}{lemMainDoesNotHoldForPequalsthree}
 \label{lem:main_does_not_hold_for_p_geq_3}
 Let $p\in\bb{N}$, $p\geq 3$. Then there does not exist an absolute constant $C\geq 0$ such that for all sufficiently small $\eta>0$ and for all $\beta \in D(0)$, $\beta'\coloneqq \beta+\eta e_1\in D(\eta)$ satisfies  
 \begin{align}\label{eq:claim}
        f(X\beta') \leq f(X\beta) + \eta^p n + \eta Cf(X\beta).
    \end{align}
\end{restatable}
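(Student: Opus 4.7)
The plan is to exhibit, for any fixed $C\ge 0$, a one-point instance ($n=1$, $d=1$, $x_1=1$) together with an integer label $y$ chosen large (depending on $C$), such that a calibrated shift $\eta e_1$ applied at the minimizer violates~\eqref{eq:claim}. The driving mechanism is the mismatch between the curvature of $g_y$ at its minimizer $z^\ast=y^{1/p}$ and the magnitude of $g_y(z^\ast)$: one has $g_y''(z^\ast)=p^2 y^{(p-2)/p}$, which grows polynomially in $y$ whenever $p\ge 3$, whereas $g_y(z^\ast)=\Theta(\log y)$ by Stirling. This mismatch lets a suitably small shift produce a quadratic response that is bounded away from zero while $\eta^p n$ and $\eta C g_y(z^\ast)$ simultaneously vanish, forcing~\eqref{eq:claim} to fail for every absolute $C$.

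Concretely, I will take $\beta=z^\ast$, which lies in $D(0)$ since $z^\ast>0$, and $\eta=y^{-(p-2)/(2p)}$, which tends to zero as $y\to\infty$ (this uses $p\ge 3$). Then $\beta'=\beta+\eta e_1$ satisfies $x_1\beta'=z^\ast+\eta>\eta$, so $\beta'\in D(\eta)$. The key calculation expands $g_y(z^\ast+\eta)-g_y(z^\ast)$ termwise by combining the binomial expansion of $(z^\ast+\eta)^p$ with the Mercator series $\log(1+\eta/z^\ast)=\sum_{k\ge 1}(-1)^{k+1}(\eta/z^\ast)^k/k$. The first-order terms cancel because $g_y'(z^\ast)=0$, and every order $k\ge 2$ collapses to a $(p,k)$-dependent constant times $y^{1-k/p}\eta^k=y^{(2-k)/2}$. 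Hence the $k=2$ contribution is exactly $p^2/2$, while each $k\ge 3$ term is $O(y^{-(k-2)/2})=o(1)$, so
\[
g_y(z^\ast+\eta)-g_y(z^\ast)=\frac{p^2}{2}+o(1)\quad\text{as }y\to\infty.
\]

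For the right-hand side of~\eqref{eq:claim}, the additive slack $\eta^p n=y^{-(p-2)/2}\to 0$ for $p\ge 3$, and Stirling yields $g_y(z^\ast)=\Theta(\log y)$, whence $\eta C g_y(z^\ast)=C\cdot\Theta(\log y)\cdot y^{-(p-2)/(2p)}\to 0$ for any fixed $C$. Thus the right-hand side tends to zero while the left-hand side tends to the positive constant $p^2/2$, so~\eqref{eq:claim} must fail once $y$ is large enough relative to $C$. Because $\eta=y^{-(p-2)/(2p)}$ is monotonically decreasing to $0$, the violations occur at arbitrarily small $\eta$, refuting the existence of an absolute $C$.

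The main obstacle will be controlling the Taylor tail $\sum_{k\ge 3} c_{p,k}\, y^{(2-k)/2}$ uniformly as $y\to\infty$. I expect to dispatch this by directly verifying that the exponent $(2-k)/2$ is strictly decreasing in $k$, so the tail is dominated by a geometric series in $y^{-1/2}$; an alternative would be a Lagrange-remainder estimate using the scaling $g_y^{(k)}(z^\ast)=\Theta(y^{(p-k)/p})$. Convergence of the Mercator series needs $\eta/z^\ast=y^{-1/2}<1$, which is immediate, and domain admissibility ($\beta\in D(0)$, $\beta'\in D(\eta)$) is automatic, so the counterexample lives genuinely inside the framework of~\eqref{eq:claim}.
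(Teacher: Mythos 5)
Your proof is correct and takes a genuinely different route from the paper's. The paper expands $f(X\beta')$ via the binomial theorem, encodes the excess into an auxiliary function $\varphi_{1,y,p}$, reduces \eqref{eq:claim} to a pointwise necessary condition through a pigeonhole-type argument on the sum, then applies $\log(1+x)\geq x/(1+x)$ to lower-bound $\varphi_{1,y,p}(y^{1/p})$ by $\tfrac{p(p-1)}{2}\eta\,y^{(p-2)/p}$; for any fixed $\eta>0$ and $C$ it then chooses $y$ satisfying a concrete inequality so that this exceeds $C g_y(y^{1/p})$, which Stirling caps at $O(\log y)$. You instead exhibit an explicit one-point counterexample ($n=1$, $x_1=1$, $\beta=z^\ast=y^{1/p}$) and couple $\eta=y^{-(p-2)/(2p)}$ to $y$ so that $\tfrac12 g_y''(z^\ast)\eta^2=p^2/2$ is held fixed while $\eta^p n$ and $\eta C g_y(z^\ast)$ both vanish; the Taylor tail is $O(y^{-1/2})$, so \eqref{eq:claim} fails once $y$ is large, with $\eta\to 0$ automatically. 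What your route buys is transparency: the failure mechanism — curvature $g_y''(z^\ast)=p^2 y^{(p-2)/p}$ growing polynomially while $g_y(z^\ast)=\Theta(\log y)$ grows only logarithmically, leaving a fixed-size quadratic response that the error budget cannot absorb for $p\geq 3$ — is laid bare, and the $n=1$ instance avoids the pigeonhole step entirely. What the paper's route buys is that it breaks \eqref{eq:claim} for \emph{every} sufficiently small $\eta$, not just the particular scaling $\eta(y)$, though that extra uniformity is not needed to refute the quantified statement. One minor simplification to your tail control: rather than summing the binomial-plus-Mercator contributions, a second-order Taylor expansion with Lagrange remainder and the bound $|g_y'''(\xi)|=O(y^{(p-3)/p})$ for $\xi\in[z^\ast,z^\ast+\eta]$ gives a clean $|{\rm remainder}|=O(y^{(p-3)/p}\eta^3)=O(y^{-1/2})$, avoiding any discussion of convergence of the series.
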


\section{Concluding remarks}
In \cref{sec:relatedwork}, we recalled that previous finite sample size results had either unbounded or $O(\sqrt{d})$ error instead of our $(1+\varepsilon)$ approximation. Our lower bounds on the parameters, together with linear VC dimension, linear sensitivity, and linear $\rho$ dependence in our main quantitative bounds of \cref{thm:coreset}, leave no room for improvement (up to polylogarithmic factors) if one uses a black-box application of the sensitivity framework. We remark that recent improvements on $\ell_p$ sensitivity sampling \cite{MunteanuO24} suggest that the dimension dependence can be improved to linear as well. Exploiting the fact that our coreset gives a guarantee for all $\beta\in D(\varepsilon)$, it would be interesting to extend the statistical treatment to the Bayesian setting, inferring the distribution of parameters over this (sub-)domain, as was recently accomplished in the case of logistic and probit regression \cite{DingOIM2024}.

\clearpage
\begin{ack}
The authors thank the NeurIPS 2024 reviewers for their constructive feedback and discussions. We also thank Simon Omlor for valuable discussions that inspired the domain shift idea and Tim Novak for help with experiments. 
The research of HCL has been partially funded by the DFG --- Project-ID \href{https://gepris.dfg.de/gepris/projekt/318763901}{318763901} --- SFB1294. AM was mainly supported by the German Research Foundation (DFG) --- grant MU 4662/2-1 (535889065) and by the TU Dortmund - Center for Data Science and Simulation (DoDaS). AM acknowledges additional travel funding by the University of Cologne.
\end{ack}

{\small
\bibliographystyle{plain} 
\bibliography{biblio}
}

\clearpage

\appendix
\section{Details on the Sensitivity Framework}
\label{sec:sensitivityframework}
 \begin{definition}[Coreset, cf. \cite{FeldmanSS20}]
 \label{def:coresets}
 Let $X\in \REAL^{n\times d}$ be a set of points $\lbrace x_1,\ldots,x_n\rbrace$, weighted by $w\in \REAL_{>0}^n$. For any $\eta\in \REAL^d$, let the cost of $\eta$ w.r.t. the point $x_i$ be described by a function $w_i\cdot f\left(x_i \eta\right)$ mapping from $\REAL$ to $(0,\infty)$. Thus, the cost of $\eta$ w.r.t. the (weighted) set $X$ is $f_w\left( X \eta\right)=\sum_i w_i\cdot f\left(w_i \eta\right)$. Then a set $K\in\REAL^{k\times d}$, (re)weighted by $u\in\REAL_{>0}^k$ is a $(1+\varepsilon)$-coreset of $X$ for the function $f_w$ if $k\ll n$ and
 \begin{align*}
 \forall \beta\in\REAL^d\colon \left\lvert f_w\left(X \eta\right) - f_u\left(K \eta\right)\right\rvert\leq \varepsilon\cdot f_w\left(X \eta\right).
 \end{align*}
 \end{definition}

 In our analysis we use sampling based on so-called sensitivity scores, the range space induced by the set of functions, and the VC-dimension. We define these notions next.

\begin{definition}[Sensitivity, \cite{LangbergS10}]
  \label{def:sensitivity}
  Consider a family of functions $\mathcal{F}=\lbrace g_1,\ldots,g_n \rbrace$ mapping from $\REAL^d$ to $[0,\infty)$ and weighted by $w\in \REAL_{>0}^n$. The sensitivity of $g_\ell$ for the function $f_w(\eta)=\sum_{\ell\in [n]} w_\ell g_\ell(\eta)$, where $\eta\in \REAL^d$, is
  \begin{align}
  \varsigma_\ell = \sup \frac{w_\ell g_\ell(\eta)}{f_w(\eta)}.
  \end{align}
  The total sensitivity is $\mathfrak{S}=\sum_{\ell\in [n]} \varsigma_\ell$.
  \end{definition}

 \begin{definition}[Range space; VC dimension]
 A range space is a pair $\mathfrak{R}=\left( \mathcal{F}, \verb|ranges|\right)$, where $\mathcal{F}$ is a set and $\verb|ranges|$ is a family of subsets of $\mathcal{F}$. The VC dimension $\Delta(\mathfrak{R})$ of $\mathfrak{R}$ is the size $|G|$ of the largest subset $G\subseteq \mathcal{F}$ such that $G$ is shattered by $\verb|ranges|$, i.e., $|\lbrace G\cap R:R\in\verb|ranges|\rbrace |=2^{|G|}$.
 \end{definition}

 \begin{definition}[Induced range space]
 Let $\mathcal{F}$ be a finite set of functions mapping from $\REAL^d$ to $\REAL_{\geq 0}$. For every $x\in\REAL^d$ and $r\in \REAL_{\geq 0}$, let $\verb|range|_{\mathcal{F}} (x,r) = \lbrace f\in \mathcal{F} : f(x)\geq r\rbrace$, and $\verb|ranges|(\mathcal{F}) = \lbrace \verb|range|_{\mathcal{F}}(x,r) : x\in \REAL^d, r\in\REAL_{\geq 0} \rbrace$. Let $\mathfrak{R}_\mathcal{F} = \left( \mathcal{F}, \verb|ranges|(\mathcal{F})\right)$ be the range space induced by $\mathcal{F}$.
 \end{definition}

 To construct coresets for Poisson models, we use a framework that combines sensitivity scores with the theory of VC dimension, originally proposed by \citep{FeldmanL11,BravermanFL16}. We use a more recent and slightly updated version as stated in the following theorem.

\begin{proposition}[\cite{FeldmanSS20}, Theorem 31]
 \label{thm:coreset:sensitivity}
 Consider a family of functions $\mathcal{F}=\lbrace f_1,\ldots,f_n\rbrace$ mapping from $\REAL^d$ to $[0,\infty)$ and a vector of weights $w\in\REAL_{>0}^n$. Let $\varepsilon,\delta\in (0,1/2)$. Let $s_i\geq \varsigma_i$. Let  $S=\sum_{i=1}^n s_i \geq \mathfrak{S}$. Given $s_i$ one can compute in time $O(|\mathcal{F}|)$ a set $\mathcal{R} \subset \mathcal{F}$ of
 \begin{align*}
 O\left(\frac{S}{\varepsilon^2} \left( \Delta \log (S) + \log\left(\frac{1}{\delta}\right)\right)\right)
 \end{align*}
 weighted functions such that with probability $1-\delta$ we have for all $\eta\in \REAL^d$ simultaneously 
 \begin{align*}
 \left\lvert \sum_{f\in \mathcal{F}} w_i f_i(\eta) - \sum_{f\in \mathcal{R}} u_i f_i(\eta) \right\rvert \leq \varepsilon \sum_{f\in \mathcal{F}} w_i f_i(\eta), 
 \end{align*}
 where each element of $\mathcal{R}$ is sampled i.i.d. with probability $p_j=\frac{s_j}{S}$ from $\mathcal{F}$, $u_i=\frac{S w_j}{|\mathcal{R}| s_j}$ denotes the weight of a function $f_i\in\mathcal{R}$ that corresponds to $f_j\in \mathcal{F}$, and where $\Delta$ is an upper bound on the VC dimension of the range space $\mathfrak{R}_{\mathcal{F}^*}$ induced by $\mathcal{F}^*$ that can be obtained by defining $\mathcal{F}^*$ to be the set of functions $f_j\in \mathcal{F}$ where each function is scaled by $\frac{S w_j}{|\mathcal{R}| s_j}$.
\end{proposition}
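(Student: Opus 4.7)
The plan is to prove the FSS20 coreset bound via weighted importance sampling combined with a VC-dimension-based uniform-convergence argument. The argument decomposes into (i) unbiasedness, (ii) pointwise concentration exploiting the sensitivity upper bounds, and (iii) uniform convergence over all $\eta\in\REAL^d$ via the VC dimension of the induced range space.

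First, I would verify that the reweighted subsample estimator $\hat F_m(\eta)\coloneqq\sum_{f_i\in\mathcal R}u_if_i(\eta)$ is an unbiased estimate of $F(\eta)\coloneqq\sum_i w_if_i(\eta)$. Since each of the $m=|\mathcal R|$ samples draws $f_j$ with probability $p_j=s_j/S$ and reweights by $u_i=Sw_j/(ms_j)$, the expectation of a single sampled summand equals $F(\eta)/m$, so $\mathbb{E}[\hat F_m(\eta)]=F(\eta)$. This is the routine step that fixes the normalization.

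Second, I would exploit the sensitivity bound to control the range and variance of each summand at a fixed $\eta$. The definition $\varsigma_i=\sup_\eta w_if_i(\eta)/F(\eta)\leq s_i$ gives $w_if_i(\eta)\leq s_iF(\eta)$ pointwise. Hence each summand $X_t\coloneqq (Sw_{j(t)}/s_{j(t)})f_{j(t)}(\eta)$ satisfies $0\leq X_t\leq SF(\eta)$, and a short calculation bounds the second moment by
\[
 \mathbb{E}[X_t^2]=\sum_i\tfrac{s_i}{S}\bigl(\tfrac{Sw_i}{s_i}\bigr)^{\!2}f_i(\eta)^2=S\sum_i\tfrac{w_i^2f_i(\eta)^2}{s_i}\leq SF(\eta)\sum_iw_if_i(\eta)=SF(\eta)^2.
\]
Plugging these into Bernstein's inequality yields the pointwise bound $\Pr[|\hat F_m(\eta)-F(\eta)|>\eps F(\eta)]\leq 2\exp(-c m\eps^2/S)$, so $m=\Omega(S\log(1/\delta')/\eps^2)$ suffices for any single query $\eta$. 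The key point is that Bernstein gives linear (not quadratic) dependence on $S$ because the variance is $O(SF(\eta)^2)$, not $O(S^2F(\eta)^2)$.

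Third, I would upgrade the pointwise bound to one that holds uniformly over all $\eta\in\REAL^d$ using the VC theory of the induced range space $\mathfrak R_{\mathcal F^\ast}$, whose VC dimension is bounded by $\Delta$. After rescaling by $Sw_j/(ms_j)$ as in the proposition, a layer-cake decomposition $\tilde f_i(\eta)=\int_0^\infty\mathbf 1[\tilde f_i(\eta)\ge r]\,dr$ translates the deviation $\hat F_m(\eta)-F(\eta)$ into deviations of empirical measures of sublevel sets of the form $\{i:\tilde f_i(\eta)\ge r\}$, which are ranges in $\mathfrak R_{\mathcal F^\ast}$. Applying the classical (relative) $\eps$-approximation theorem for range spaces of VC dimension $\Delta$ together with the Bernstein-type variance bound from step two yields, with probability $1-\delta$, a uniform multiplicative $(1\pm\eps)$ approximation provided $m=\Omega((S/\eps^2)(\Delta\log S+\log(1/\delta)))$.

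The main obstacle is the last step: a naive union bound over a cover of $\REAL^d$ together with Hoeffding yields a suboptimal $S^2$ dependence, so tightness in $S$ requires the Bernstein variance bound to be threaded through a chaining / relative-$\eps$-approximation argument rather than an additive $\eps$-approximation. This is the technical heart of \cite{FeldmanSS20}; once in place, balancing the logarithms and exponents gives exactly the stated sample complexity, and the running time $O(|\mathcal F|)$ follows from the ability to sample in $O(\log|\mathcal F|)$ time after an $O(|\mathcal F|)$-time preprocessing of the cumulative distribution of the $s_i$.
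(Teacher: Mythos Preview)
The paper does not prove this proposition: it is quoted as \cite[Theorem~31]{FeldmanSS20} and used as a black box. So there is no ``paper's own proof'' to compare against; the authors simply invoke the result.

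Your sketch is a faithful high-level reconstruction of the argument in \cite{FeldmanSS20} (and its predecessors \cite{FeldmanL11,BravermanFL16}): unbiasedness of the importance-sampling estimator, a Bernstein-type variance bound using the sensitivity inequality $w_if_i(\eta)\le s_iF(\eta)$ to get linear rather than quadratic dependence on $S$, and then a relative (not additive) $\eps$-approximation for range spaces of bounded VC dimension to make the bound uniform in $\eta$. You correctly flag that the last step is the only nontrivial one and that a naive union bound or Hoeffding would lose the correct $S$-dependence. If you were to fill this in rigorously you would need to invoke a relative-approximation theorem such as \cite{LiLS01} or the chaining argument in \cite{FeldmanSS20}; the layer-cake reduction you describe is the right idea but by itself does not immediately give a \emph{multiplicative} guarantee without that ingredient. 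Since the present paper treats the proposition as an external tool, your level of detail is already more than what is required here.
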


\section{Proofs for Poisson \texorpdfstring{$p$}{p}th-root-link model}
\label{app:preliminaries_Poisson_p_root_link_model}

\lemLBcost*
\begin{proof}[Proof of \Cref{lem:lower_bound_on_cost}]
    For $y=1$ a direct calculation yields $g_y(z)\geq g_y(y^{1/p})=1$.
    
    For all other $y\in \mathbb{N}\setminus\{1\}$ we get from  Stirling's approximation
    \begin{align*}
        \log(y!) &\geq y\log(y) - y + \frac 1 2 \log(2\pi y) + \frac{1}{12y+1}\\
        &\geq y\log(y) - y + \frac 1 2 \log(y) + \frac 1 2 \log(2\pi)\\
        &\geq y\log(y) - y + \frac 1 2 \log(y) + \frac{9}{10} \, .
    \end{align*}
    Since $g_y$ has a unique minimizer at $z^\ast=y^{1/p}$, it follows that
    \begin{align*}
        g_y(z) \geq g_y(y^{1/p})&= y - y\log(y) + \log(y!)\geq \frac{9}{10} + \frac 1 2 \log(y) > 1\, .
    \end{align*}
    To prove the claimed bound generalizing to all $z\in\mathbb{R}_{>0}$, we first argue that 
    \begin{align}\label{eq:LBtool}
        \frac{9}{10} + \frac 1 3 \log(y) \geq \frac 1 3 (1+\log(y+1)),
    \end{align}
    which is equivalent to showing for arbitrary $y\in \mathbb{N}$ it holds that
    \begin{align*}
        \frac 1 3 \left(1+\log\left(\frac{y+1}{y}\right)\right) = \frac 1 3 \left(1+\log\left(1+\frac{1}{y}\right)\right) \leq \frac{1}{3} + \frac{1}{3y} \leq \frac{2}{3} \leq \frac{9}{10}\, ,
    \end{align*}
    where the first inequality follows from the inequality $1+x\leq e^{x}$ for all values of $x$.
    We see that by monotonicity of the functions $h(z)=\frac{1}{3}(1+\log(z^p))$ and the above properties of  $g_y(z)$, we have for all $z\leq (y+1)^{1/p}$ that 
    \begin{align*}
        g_y(z) \geq \frac{9}{10} + \frac 1 2 \log(y) \geq \frac{9}{10} + \frac 1 3 \log(y) \overset{\text{\cref{eq:LBtool}}}{\geq} \frac 1 3 (1+ \log(y+1)) \geq h(z),
    \end{align*}
    where the first inequality above follows from the inequality one line above \cref{eq:LBtool}.
    Finally for $z \geq (y+1)^{1/p}$ the function $g_y$ grows at least as fast as the lower bound, since
    \begin{align*}
        g'_{y}(z)=pz^{p-1}-\frac{p y}{z} = p\left( z^{p-1}-\frac{y}{z} \right) = p\left( \frac{z^{p}-y}{z} \right) \geq p \frac{y+1-y}{z} \geq \frac{p}{3z} = h'(z).
    \end{align*}
\end{proof}

\lemLargeShiftBounds*
\begin{proof}[Proof of \Cref{lem:large_shift_bounds}]
    We start with the upper bound. Using the fact that the assumption $z>y^{1/p}$ implies $\log z>0$, and setting $y_i=1$ in \cref{eq:g_yi_function} yields
    \begin{equation*}
        g_1(z) = z^p - p\log(z) + \log 1 <z^p.
    \end{equation*}
    It remains to prove the claim for $y \geq 2$. From Stirling's approximation we get
    \begin{align*}
        \log(y!) &\leq y\log(y) - y + \frac 1 2 \log(y) + \frac 1 2 \log(2\pi) + \frac{1}{12}.
    \end{align*}
    Now, note that since the derivative of the function $y\mapsto \tfrac{1}{2}\log y-y$ is strictly negative over the interval $[2,\infty)$, it follows that the function itself is decreasing over the same interval.
    By rearranging terms and replacing $y_i$ in \cref{eq:g_yi_function} with an arbitrary $y \geq 2$, it follows that 
    \begin{align*}
        g_y(z) &= z^p - y\log(z^p) + \log(y!) \leq z^p - y\log(y) + \log(y!) \\
        &\leq z^p + \frac 1 2 \log(y) - y + \frac 1 2 \log(2\pi) + \frac{1}{12} 
        \\
        &\leq z^p + \frac 1 2 \log(2) - 2 + \frac 1 2 \log(2\pi) + \frac{1}{12}\\
        &= z^p + \frac 1 2 \log(\pi) - \frac{23}{12} < z^p,
    \end{align*}
    where the last inequality holds since $\pi < e^2$, and thus $\frac 1 2 \log(\pi) < \log(e) = 1.$

    In the remainder, let $p=1$. Let
        \begin{equation*}
     LB(y)\coloneqq \max\left\{1,\frac{1}{3}\left(1+\log (y)\right)\right\}
    \end{equation*}
    be the lower bound given in \cref{lem:lower_bound_on_cost}. 
    Now, we want to prove that $g_y(z)\geq \frac{z-y}{\lambda} =: h(z)$ for $\lambda \geq 1$ as small as possible.
    
    To this end, we consider the derivatives $g'_y(z) = 1-\frac{y}{z}$ and $h'(z)=\frac{1}{\lambda}$, and find that 
    \begin{align*}
        1-\frac{y}{z} \geq \frac{1}{\lambda} \Longleftrightarrow z \geq y\left(1+\frac{1}{\lambda-1}\right)\, .
    \end{align*}
    Next, we see that since $h(y)=0$, we can guarantee $h(z)\leq LB(y) \leq g_y(z)$ for all $z\in[y, y + \Delta]$, where $\Delta:=\lambda \cdot LB(y)$.

    To obtain a general lower bound on $g_y(z)$, we want both conditions to hold simultaneously, which is true whenever $$\frac{y}{\lambda - 1} \leq \Delta \Longleftrightarrow y \leq \lambda (\lambda-1) LB(y)$$

      Solving for $\lambda$ yields that $g_y(z)\geq \frac{z-y}{\lambda} $ holds for all $\lambda \geq \frac{1}{2}\left(\sqrt{\frac{4y}{LB(y)}+1}+1\right).$ 
\end{proof}

\lemLambdaPtwo*
\begin{proof}[Proof of \Cref{lem:lambda:p2}]
First we define for $\tau>0$ the function $h_\lambda$, and compute its derivatives:
\begin{equation}
\label{eq_h_function_and_derivatives}
h_\lambda(z)  \coloneqq \frac{(z-\tau)^p}{\lambda},\quad h'_\lambda(z)  =p\frac{(z-\tau)^{p-1}}{\lambda},\quad h''_\lambda(z)  =p(p-1)\frac{(z-\tau)^{p-2}}{\lambda},\quad z\in \bb{R}_{>0}.
\end{equation}
Setting $\tau=y^{1/p}$ in \cref{eq_h_function_and_derivatives}, we obtain
\begin{equation*}
 h_{\lambda}(y^{1/p})=\frac{(y^{1/p}-y^{1/p})^p}{\lambda}=0< \frac{1}{2}\log (2\pi y)< g_y(y^{1/p}),
\end{equation*}
where the second inequality follows from the fact that the minimizer of $g_y$ is $y^{1/p}$ and from Stirling's approximation:
\begin{equation*}
 \frac{1}{2}\log(2\pi y)+\frac{1}{12y+1}<  \min_{z>0} g_y(z)=y-y\log y+\log y!.
\end{equation*}
Thus, $g_y(y^{1/p})\geq h_\lambda (y^{1/p})$, and a sufficient condition for $g_y(z)\geq h_\lambda (z)$ to hold for $z>y^{1/p}$ is that $g_y(z)-h(z)$ is nondecreasing on $z>y^{1/p}$, i.e.
\begin{equation*}
 0\leq g'_y(z)-h'(z)=p\left[ \left(z^{p-1}-\frac{y}{z}\right)-\frac{(z-y^{1/p})^{p-1}}{\lambda}\right]
\Longleftrightarrow z^{p-1}\geq \frac{y}{z}+\frac{(z-y^{1/p})^{p-1}}{\lambda}.
\end{equation*}
Note that
\begin{equation*}
 z>y^{1/p}\Longrightarrow \frac{y}{z}+\frac{(z-y^{1/p})^{p-1}}{\lambda}\leq \frac{y}{y^{1/p}}+\frac{(z-y^{1/p})^{p-1}}{\lambda},
\end{equation*}
so a sufficient condition for $g'_y(z)-h'_\lambda(z)$ to be nonnegative for all $z>y^{1/p}$ is 
\begin{equation*}
 z^{p-1}\geq \frac{(z-y^{1/p})^{p-1}}{\lambda} +(y^{1/p})^{p-1},\quad \forall z>y^{1/p}.
\end{equation*}
If $\lambda\geq 1$, then a further sufficient condition for $g'_y(z)-h'_\lambda(z)$ to be nonnegative for all $z>y^{1/p}$ is 
\begin{equation*}
 z^{p-1}\geq (z-y^{1/p})^{p-1}+(y^{1/p})^{p-1},\quad \forall z>y^{1/p},
\end{equation*}
which is equivalent to
 \begin{equation}
 \label{eq_sufficient_condition_reformulated}
 \left(\frac{z}{y^{1/p}}\right)^{p-1}\geq \left(\frac{z}{y^{1/p}}-1\right)^{p-1}+1,\quad \forall z>y^{1/p}.
\end{equation}
For $x\in\bb{R}$, let $\lfloor x\rfloor \coloneqq \sup\{p\in\bb{Z}\ :\ p\leq x\}$ be the floor of $x$. 
We will prove that \cref{eq_sufficient_condition_reformulated} holds.
We have
\begin{align}
 &\left(\frac{z}{y^{1/p}}\right)^{p-1}
 \nonumber
 \\
 =& \left(\frac{z}{y^{1/p}}\right)^{p-1-\lfloor p-1\rfloor}\left(\frac{z}{y^{1/p}}\right)^{\lfloor p-1\rfloor}
 \nonumber
 \\
 =&\left(\frac{z}{y^{1/p}}\right)^{p-1-\lfloor p-1\rfloor}\left(\frac{z}{y^{1/p}}-1+1\right)^{\lfloor p-1\rfloor}
 \nonumber
 \\
  =&\left(\frac{z}{y^{1/p}}\right)^{p-1-\lfloor p-1\rfloor}\sum_{m=0}^{\lfloor p-1\rfloor} \begin{pmatrix} \lfloor p-1\rfloor \\ m\end{pmatrix} \left(\frac{z}{y^{1/p}}-1\right)^m & \text{binomial thm., }p\geq 2
 \nonumber
 \\
 \geq &\left(\frac{z}{y^{1/p}}\right)^{p-1-\lfloor p-1\rfloor} \left( \left(\frac{z}{y^{1/p}}-1\right)^{\lfloor p-1\rfloor} +1\right) & \frac{z}{y^{1/p}}>1
 \nonumber
  \\
 = &\left(\frac{z}{y^{1/p}}\right)^{p-1-\lfloor p-1\rfloor} \left(\frac{z}{y^{1/p}}-1\right)^{\lfloor p-1\rfloor} +\left(\frac{z}{y^{1/p}}\right)^{p-1-\lfloor p-1\rfloor}
 \nonumber
   \\
 \geq &\left(\frac{z}{y^{1/p}}-1\right)^{p-1-\lfloor p-1\rfloor} \left(\frac{z}{y^{1/p}}-1\right)^{\lfloor p-1\rfloor} +\left(\frac{z}{y^{1/p}}\right)^{p-1-\lfloor p-1\rfloor} & r\geq 0\Rightarrow \tfrac{\rd}{\rd x} x^r\geq 0
 \nonumber
 \\
 > &  \left(\frac{z}{y^{1/p}}-1\right)^{ p-1} +1 & \frac{z}{y^{1/p}}>1,
 \nonumber
\end{align}
as desired.
\end{proof}

\section{Proofs for coreset construction}
\label{app:Coreset_construction}

\lemVCquadratic*
\begin{proof}[Proof of \Cref{lem:VC:quadratic}]
Because we use the coreset approach, we need to consider \emph{weighted} subsets of the data, and thus weighted sums $\sum_{i=1}^{n} w_i g_{y_i}(x_i\beta)$ that allow for different weights in \cref{eq:p_loss_function}.
In determining the VC dimension we range over all $r\in \mathbb{R}, \beta \in \mathbb{R}^d$, and need to check whether $w_i g_{y_i}(x_i\beta) > r$.
Note that the log factorial terms are independent of $x_i\beta$. In particular, for each $r\in\mathbb{R}$, there exists $\mathbb{R} \ni s\coloneqq r - w_i \log(y_i!)$. We can thus instead count the number of operations required for evaluating each summand $$w_i ( (x_i\beta)^p - p y_i \ln(x_i\beta) ) \geq s.$$
This can be rearranged to $$x_i\beta \leq \exp\left( \frac{(x_i\beta)^p - s/w_i}{py_i}\right)$$ which can be accomplished using (at most) the following operations
\begin{itemize}[itemsep=0pt]
    \item $1$ division to compute $s/w_i$
    \item $d$ multiplications and $d-1$ additions to compute $x_i\beta$
    \item $p-1$ multiplications of $x_i\beta$ to compute $(x_i\beta)^p$
    \item $1$ subtraction to compute $(x_i\beta)^p-s/w_i$
    \item $1$ multiplication to compute $p y_i$ ($0$ for $p=1$)
    \item $1$ division by $p y_i$    
    \item $1$ exponential function evaluation to compute $\exp\left(\tfrac{(x_i\beta)^p - s/w_i}{py_i}\right)$
    \item $1$ operation to verify whether $x_i\beta \leq \exp\left( \tfrac{(x_i\beta)^p - s/w_i}{py_i}\right)$ holds
    \item $1$ operation to output $0$ or $1$ depending on whether $x_i\beta \leq \exp\left( \tfrac{(x_i\beta)^p - s/w_i}{py_i}\right)$ holds
\end{itemize}
    for a total of $t=2d+p+5$ operations, exactly $q=1$ of which is an exponential function evaluation. Putting these quantities into the final conclusion of \cite[Thm. 8.14]{AnthonyB02} yields 
    \begin{align*}
        \Delta(\mathfrak{ R}_{\mathcal F^\ast}) &\leq (d(q+1))^2+11d(q+1)(t+\log_2((9d(q+1))) \\
        &=(2d)^2+22d(2d+p+5+\log_2(18d)) = O(d^2).
    \end{align*}
\end{proof}

\lemVCdimBound*
\begin{proof}[Proof of \Cref{lem:VCdim_bound}]
    We split the functions $g_i$ into subfunctions $g_{\leq y_i^{1/p}}(x_i\beta)$ and $g_{> y_i^{1/p}}(x_i\beta)$ depending on whether $x_i\beta \leq y_i^{1/p}$ or $x_i\beta > y_i^{1/p}$. Since $g_i$ is minimized when $x_i\beta = y_i^{1/p}$, and due to strict convexity, the two subfunctions $g_{\leq y_i^{1/p}}\colon (0,y_i^{1/p}] \rightarrow [g(y_i^{1/p}),\infty)$ and $g_{> y_i^{1/p}}\colon (y_i^{1/p},\infty) \rightarrow [g(y_i^{1/p}),\infty)$ are strictly monotonic and invertible on their respective ranges and domains.

    Now fix an arbitrary subset $G\subseteq \mathcal{F}_{c}$. Let $\Omega=\REAL^d\times\REAL_{\geq0}$. We have the following bound:
	\begin{align}
	&|\{ G \cap R \mid \,R \in \mathtt{ranges}(\mathcal{F}_{c})\}| =| \{\mathtt{range}_{G}(\beta,r)\mid \beta\in \REAL^d, r\in\REAL_{\geq 0}\} | \notag \\\notag
	&= \left| \bigcup\limits_{(\beta,r)\in\Omega}\{\{g_i\in G \mid g_i(\beta) \geq r \}\} \right| \\\notag
	&= \left| \bigcup\limits_{(\beta,r)\in\Omega} \{\{g_i\in G \mid c\cdot g_{\leq y_i^{1/p}}(x_i\beta) \geq r \vee c\cdot g_{> y_i^{1/p}}(x_i\beta) \geq r\} \} \right| \\
	&\leq \left|  \bigcup\limits_{(\beta,r)\in\Omega} \{\{g_i\in G \mid x_i\beta \geq g^{-1}_{\leq y_i^{1/p}}(r/c) \}\} \right|  \cdot \left| \bigcup\limits_{(\beta,r)\in\Omega} \{\{g_i\in G \mid x_i\beta \geq g^{-1}_{> y_i^{1/p}}(r/c) \}\} \right|. \label{eqn:rngspace} 
	\end{align}
	The inequality holds, since each non-empty set in the collection on the LHS satisfies either of the conditions of the sets in the collections on the RHS, or both, and is thus the union of two of the sets, one from each collection. It can thus comprise at most all unions obtained from combining any two of these sets.
	
	Now, note that both sets are of the form $\{g_i\in G \mid x_i\beta \geq s_1 \}$ where $s_1=g^{-1}_{\leq y_i^{1/p}}(r/c)$ maps any real $r$ to a value of some subset of the reals $s\in D\subset \mathbb{R}$ as specified above. Extending the domain of $s$ and $x_i\beta$ to the reals, we obtain exactly the points that are shattered by the affine hyperplane classifier $x_i \mapsto \mathbf{1}_{\{x_i\beta - s \geq 0\}}$. The VC dimension of the set of hyperplane classifiers is $d+1$ \citep{KearnsV94,Vapnik95}. The argument holds verbatim for $s_2=g^{-1}_{> y_i^{1/p}}(r/c)$.
 
    We conclude the claimed bound on $\Delta(\mathfrak{R}_{\mathcal{F}_{c}})$ by showing that the above term \cref{eqn:rngspace} is strictly less than $2^{|G|}$ for $|G|=10(d+1)$. By a bound on the growth of the sets (see \cite{BlumerEHW89,KearnsV94}), we have for this particular choice 
	\begin{align*}
	&\left|  \bigcup\limits_{(\beta,r)\in\Omega} \{\{g_i\in G \mid x_i\beta \geq g^{-1}_{\leq y_i^{1/p}}(r/c) \}\} \right|  \cdot \left| \bigcup\limits_{(\beta,r)\in\Omega} \{\{g_i\in G \mid x_i\beta \geq g^{-1}_{> y_i^{1/p}}(r/c) \}\} \right|
	\\
	\leq& \left| \{\{g_i\in G \mid x_i\beta - s_1 \geq 0 \} \mid \beta\in\REAL^d, s_1\in \REAL \}  \right|\cdot \left| \{\{g_i\in G \mid x_i\beta - s_2 \geq 0 \} \mid \beta\in\REAL^d, s_2\in \REAL \}  \right| \\
    \leq& \left( \left(\frac{e|G|}{d+1}\right)^{(d+1)} \right)^2 < \left(30^{(d+1)}\right)^2 = 2^{2(d+1)\log_2(30)} \leq 2^{10(d+1)} = 2^{|G|}
	\end{align*}
	which implies that $\Delta(\mathfrak{R}_{\mathcal{F}_{\ell_1}}) < 10(d+1)$.
\end{proof}

\lemRoundYapprox*
\begin{proof}[Proof of \Cref{lem:round_y_approx}]
    Recall that
    \begin{align*}
        g_{y'}(z) = \underbrace{(z)^p - py'\log(z)}_{(i)} +\underbrace{\log(y'!)}_{(ii)}  .
    \end{align*}

    We bound the two parts separately. The first part $(i)$ is straightforward from the assumption on $y'$. We only need to distinguish two cases, depending on which either the upper or the lower bound of the assumption $y < y' \leq (1+\eps) y_i$ applies: If $\log(z) \geq 0$, then there is nothing to prove since in that case
    \begin{align*}
        z^p - py'\log(z) \leq z^p - p y \log(z) ,
    \end{align*}
    given the hypothesis that $y<y'$.
    Suppose that $\log(z) < 0$. Then 
    \begin{align*}
        z^p - py'\log(z) \leq z^p - (1+\eps) p y \log(z) \leq (1+3\eps) (z^p - p y \log(z)),
    \end{align*}
    where the first inequality follows from the hypothesis that $y'\leq (1+\eps)y_i$ and the second inequality follows since $1+3\eps > 1+\eps > 1$.
    
    For the second part $(ii)$ we need some technical claims. Since $y,y'\in\mathbb{N}_0$, it must hold that $y'\geq y+1$. Then 
    \begin{align*}
        \frac{1}{12y'}\leq \frac{1}{12(y+1)}\leq \frac{1}{12y+1}.
    \end{align*}
    Further, for any $x\geq e$ it holds that
    \begin{align*}
        \log((1+\eps)x)= \log(x) + \log(1+\eps) \leq \log(x) + \eps\cdot 1 \leq (1+\eps) \log(x),
    \end{align*}
    where the last inequality uses the condition that $x\geq e$.

    With this in place, and using $y' > y \geq 8 > e^2$, we apply Stirling's approximation with a constant $C:=\frac{1}{2}\log(2\pi)$ that is independent of $y,y'$, and that is common to both the upper and lower bounds from Stirling's approximation for the factorial function. 
    We thus obtain 
    \begin{align*}
        \log(y'!) &\leq y'\log(y')-y'+\frac{1}{2}\log(y')+\frac{1}{12y'}+C \\
        &= y'(\log(y')-1)+\frac{1}{2}\log(y')+\frac{1}{12y'}+C \\
        &= y'(\log({y'}/{e}))+\frac{1}{2}\log(y')+\frac{1}{12y'}+C \\
        &\leq (1+\eps)^2 y\log({y}/{e})+ (1+\eps) \frac{1}{2}\log(y)+\frac{1}{12y+1}+C \\
        &\leq (1+\eps)^2 \left(y\log({y}/{e})+\frac{1}{2}\log(y)+\frac{1}{12y+1}+C\right) \\
        &\leq (1+3\eps) \log(y!) 
    \end{align*}
    where the first inequality follows from the upper bound in Stirling's approximation for the factorial, the second inequality follows from the hypothesis that $y'\leq (1+\eps)y$ and the inequality $\tfrac{1}{12y'}\leq \tfrac{1}{12y+1}$ proven above, and the last inequality follows from the lower bound in Stirling's approximation of the factorial and from the fact that $(1+\eps)^2=1+2\eps+\eps^2\leq 1+3\eps $. 
    The lower bound can be treated in a similar way. Overall, our claim follows.
\end{proof}

\lemDisjVCbound*
\begin{proof}[Proof of \Cref{lem:disjVCbound}]
    We prove the claim by contradiction. To this end suppose the VC dimension for $\mathcal F$ is strictly larger than $tD$. Then there exists a set $G$ of size $|G|>tD$ that is shattered by the ranges of $\mathfrak{R}_\mathcal{G}$. Consider its intersections $G_i= G\cap F_i, i\in[t]$ with the sets $F_i$. By their disjointness, each $G_i$ must be shattered by the ranges of $\mathfrak{R}_{{F}_i}$. Note, that at least one $G_i$ must therefore satisfy $|G_i|/t > D$, which contradicts the assumption that their VC dimension is bounded by $D$. Our claim thus follows.
\end{proof}

\lemVCunion*
\begin{proof}[Proof of \Cref{lem:VC:union}]
We partition our input functions $g_{y_i}, i\in [n]$ into disjoint sets with boundaries that increase in a geometric progression, depending on the sensitivities resp. weights, and on the response values 
\begin{align*}
    G_{ij} = \{ g_{y_k} \mid 8\cdot(1+\eps)^{i} \leq y_k < \lfloor 8\cdot(1+\eps)^{i+1} \rfloor,\,&2^j\varsigma_{\min} \leq \varsigma_k < 2^{j+1}\varsigma_{\min} \},\\ &i\in [0,O(\log_{1+\eps} (y_{\max}))], j\in [0,O(\log (n))]. 
\end{align*}
Additionally, we put the remaining values of $y$ into a constant number of disjoint sets
\begin{align*}
    H_{yj} = \{ g_{y_k} \mid y_k = y,\,2^j\varsigma_{\min} \leq \varsigma_k < 2^{j+1}\varsigma_{\min} \}, y\in\{0,1,2,\ldots,7\}, j\in [0,O(\log (n))].
\end{align*}
In particular, we note that $$\mathcal{F}=\left(\dot\bigcup_{ij} G_{ij}\right) \;\dot\cup\; \left(\dot\bigcup_{yj} H_{yj}\right)$$ forms a partition of the whole function family, since the sets are disjoint and cover all functions by construction.

Each member of a set is of the same form, i.e., after rounding the weights and $y_i$, all members of a subset have equal $y_i = y$, and they have equal weight. 
The assumptions are thus satisfied for invoking \cref{lem:VCdim_bound} to bound the VC dimension for each of the induced range spaces by $O(d)$. By construction, the subsets are disjoint and their number is bounded by $$t = O(\log_2 (n) \cdot \log_{1+\eps} (y_{\max})) = O\left(\log (n) \cdot \frac{\log (y_{\max})}{\log(1+\eps)}\right)=O(\eps^{-1}\log (n) \log (y_{\max})).$$ 
We can thus invoke \cref{lem:disjVCbound} to obtain $$\Delta\left( \mathfrak{R}_{\mathcal{F}^\ast}\right) \leq O(dt) = O\left(\frac{d}{\eps} \log (n) \log (y_{\max})\right) = \tilde O\left(\frac{d}{\eps}\right).$$
\end{proof}

Recall the condition \cref{eq:p_well_conditioned_basis} for an $(\alpha,\gamma,p)$-well-conditioned basis.
\lemSensitivityBounds*
\begin{proof}[Proof of \Cref{lem:sensitivity:bounds}]
    \textbf{Case 1: $y_i=0$}: We start with the special case where $y_i=0$. Recall that $x_i\beta > 0$. 
    Then,
    \begin{align*}
        g_{y_i}(x_i\beta) &= (x_i\beta)^p \leq \|U_i\|_p^p \|R \beta\|_q^p \leq \|U_i\|_p^p (\gamma \|U R \beta\|_q)^p\\
        &= \gamma^p \|U_i\|_p^p \|UR \beta\|_p^p =\gamma^p  \|U_i\|_p^p \|X \beta\|_p^p = \gamma^p \|U_i\|_p^p \sum\nolimits_{j=1}^n g_{y_j}(x_j\beta).
    \end{align*}
    Next, we consider $0\neq y_i \in \mathbb{N}$, and divide this into two sub-cases.
    
    \textbf{Case 2(i): $y_i\in\mathbb{N}$, $ \eta<x_i\beta \leq y_i^{1/p}$}: 
    
    We start with the case $0 < \eta \leq x_i\beta \leq y_i^{1/p}$.

    For $g_{y_i}$ defined in \cref{eq:g_yi_function},
    \begin{equation*}
        g_{y_i}(x_i\beta) \coloneqq  (x_i\beta)^p - py_i\log(x_i\beta) +\log(y_i!)
    \end{equation*}
    it holds using \cref{lem:lower_bound_on_cost}, the bounds on $x_i\beta$ and the monotonicity of $g_y(z)$ in the interval that 
    \begin{align*}
        1 \leq g_{y_i}(x_i\beta) \leq \eta^p + y_i \log\left(\frac{1}{\eta^p}\right) + \log(y_i!)  \eqqcolon  \mathit{UB}(y_i).
    \end{align*}

    Let $\mathcal G=\{ i\in[n] \mid 1 \leq g_{y_i}(x_i\beta) \leq \mathit{UB}(y_{\max}) \}$.
    We subdivide $\mathcal{G}=\mathop{\dot{\bigcup}}\nolimits_{j=1}^l \mathcal{G}_j$ into disjoint sets
    \begin{align*}
        \mathcal{G}_j=\{ i \in \mathcal{G} \mid \mathit{UB}(y_{\max})\cdot 2^{-j} < g_{y_i}(x_i\beta) \leq \mathit{UB}(y_{\max})\cdot 2^{-j+1} \},\, j\in \{1,\ldots, l\}.
    \end{align*}
    Since $g_{y_i}(x_i\beta) \in [1,\mathit{UB}(y_{\max})]$, there can be at most $l\leq \lceil \log_2(\mathit{UB}(y_{\max})) \rceil$ groups. So
    \begin{align*}
        l \leq \log_2\left( \eta^p + y_{\max} \log\left(\frac {1}{\eta^p}\right) + \log(y_{\max}!) \right) &\leq O\left(\log(y_{\max}) + \log\log\left(\frac{y_{\max}}{\eta}\right) \right)
    \end{align*}
    
    Now let $n_j=|\mathcal{G}_j|$. We can bound the sensitivity (see \cref{def:sensitivity}) for each summand $g_{y_i}(x_i\beta)$ for $i \in\mathcal{G}_j$ by
    \begin{align*}
        \varsigma_i := \sup_{\beta} \frac{g_{y_i}(x_i\beta)}{\sum_{i=1}^n g_{y_i}(x_i\beta)} \leq \sup_{\beta} \frac{g_{y_i}(x_i\beta)}{\sum_{i\in \mathcal{G}_j} g_{y_i}(x_i\beta)} \leq \frac{\mathit{UB}(y_{\max})\cdot 2^{-j+1}}{n_j \mathit{UB}(y_{\max})\cdot 2^{-j}} = \frac{2}{n_j}
    \end{align*}
    Summing over $i\in \mathcal{G}$ yields
    \begin{align*}
        \sum\nolimits_{i\in \mathcal{G}} \varsigma_i = \sum\nolimits_{j=1}^l \sum\nolimits_{i\in \mathcal{G}_j} \frac{2}{n_j} = \sum\nolimits_{j=1}^l \frac{2n_j}{n_j} = 2l &\leq O\left(\log(y_{\max}) + \log\log\left(\frac{y_{\max}}{\eta}\right) \right) \\
        &\leq O\left(\log(y_{\max}) + \log\log\left(\frac{1}{\eta}\right) \right).
    \end{align*}
    
    \textbf{Case 2(ii): $y_i\in\mathbb{N}$, $ x_i\beta >y_i^{1/p}$}: 

    Now we take care of the remaining region where $x_i\beta > y^{1/p}$. 
    
    In particular, by \cref{lem:large_shift_bounds,lem:lambda:p2} for some scaling $\lambda=\lambda_p$ that depends on $p\in\{1,2\}$, we have that 
    \begin{align*}
        \frac{(z-y_i^{1/p})^p}{\lambda} \leq g_{y_i}(z) \leq z^p
    \end{align*}
    in that region.
               
    Let $U R$ be a decomposition of $X$, so that $x_i\beta=U_i R \beta$, and $U$ is again a $p$-well conditioned basis, in the sense of \cref{eq:p_well_conditioned_basis}.
    
    Now, using our assumption given in \cref{eq_complexity_assumption}, we have the following inequalities
    \begin{align*}
        g_{y_i}(x_i\beta) &\leq (x_i\beta)^p \leq \|U_i\|_p^p \|R \beta\|_q^p \leq \gamma^p \|U_i\|_p^p \|UR \beta\|_p^p \\
        &=\gamma^p \|U_i\|_p^p \sum\nolimits_{j=1}^n {(x_j\beta)}^p \leq \rho \gamma^p \|U_i\|_p^p \sum\nolimits_{j=1}^n {(x_j\beta - y_j^{1/p})}^p \\
        &\leq \lambda \rho \gamma^p \|U_i\|_p^p \sum\nolimits_{j=1}^n g_{y_j}(x_j\beta)\,.
    \end{align*}
    Summing over all sensitivities, we get that the total sensitivity is bounded by 
    \begin{align*}
        \mathfrak{S} &\leq \rho \lambda (\alpha\gamma)^p + O\left(\log(y_{\max}) + \log\log\left(\frac{y_{\max}}{\eta}\right) \right).
    \end{align*}

    For the first summand, there exists for each $p\in [1,2]$, a so-called Auerbach basis attaining $\alpha = d, \gamma=1$, see \citep[][Lemma 2.22]{WangW22}. For the special case $p=2$, we even have that any orthonormal basis satisfies $\alpha = \sqrt{d}, \gamma=1$ since $\|U\|_F=\sqrt{d}$, and for any $z\in \mathbb{R}^d\colon \|Uz\|_2= \sqrt{z^TU^TUz} = \sqrt{z^Tz}=\|z\|_2$.
    
    Thus, in both cases we have suitable bases with $(\alpha\gamma)^p = d$. 
    
    For $p=1$ \cref{lem:large_shift_bounds} yields $\lambda \leq O(\sqrt{{y_{\max}}/{\log y_{\max}}})$. With this, the overall bound simplifies to
    $$\mathfrak{S}\leq O\left(\rho d \sqrt{{y_{\max}}/{\log y_{\max}}} + \log\log(1/\eta)\right).$$ 
    For $p=2$ we have that $\lambda =1$ suffices by \cref{lem:lambda:p2}. With this, the overall bound simplifies to 
    $$\mathfrak{S}\leq O(\rho d + \log{(y_{\max})} + \log\log(1/\eta)).$$ 
\end{proof}

\thmCoreset*
\begin{proof}[Proof of \Cref{thm:coreset}]
    We put our bounds from \cref{cor:VC:total,lem:sensitivity:bounds} together into the main theorem of the sensitivity framework \cref{thm:coreset:sensitivity}. That is, we calculate the sensitivity upper bounds $s_i$, take a sample according to the distribution $p_i = s_i/\sum_{j=1}^n s_j$ of the respective size, and reweight them accordingly. Then, the calculated bounds on the VC dimension and total sensitivity yield a bound on the required size, such that \cref{thm:coreset:sensitivity} yields with constant probability that the reweighted subsample gives a $(1\pm\eta)$-approximation uniformly over $\beta\in D(\eta)$.

    Since the Auerbach basis used in the sensitivity calculations of \cref{lem:sensitivity:bounds} can be expensive to compute depending on the value of $p$, we use more efficient approximation techniques here.

    In the case $p=2$, we use a sparse oblivious $\ell_2$ subspace embedding by \citep{ClarksonW17}, which was explicitly proven in \citep[Lemma 2.14]{MunteanuOP22}, to give a $(\sqrt{2d},\sqrt{2},2)$-well-conditioned basis. This is within absolute constant factors to the $(\sqrt{d},1,2)$-conditioning of the Auerbach basis. Thus, the complexities given in \cref{lem:sensitivity:bounds} do not change in $O$-notation.

    In the case $p=1$, we use a more recent technique introduced in \citep{WoodruffY23}, called $(\alpha,\gamma,p)$-well-conditioned spanning sets. This is a relaxation of the well-conditioned basis $U$ given in \cref{eq:p_well_conditioned_basis}, where $U\in\mathbb{R}^{n\times s}$ and $z\in\mathbb{R}^s$, are allowed to have slightly increased dimension $s > d$. We also note that we only need to bound 
    norms of vectors of the form $X\beta$ in the columnspan of the data matrix whose rank is bounded by $d$. We thus require the bounds to hold only for $y\in\mathbb{R}^s, s > d$ that actually represent vectors $X\beta$ in a different basis. Other aspects of \cref{eq:p_well_conditioned_basis} remain unchanged.

    Our proof is nearly verbatim to \citep[Theorem 1.11]{WoodruffY23}. Their algorithm constructs a matrix $R\in\mathbb{R}^{d\times s}$. We set $U=XR\in\mathbb{R}^{n\times s}$. By \citep[Lemma 4.1]{WoodruffY23}, this can be done with $s=O(d\log\log (d))$, such that each column $U^{(i)}$, for $i\in[s]$ satisfies $\|U^{(i)}\|_p=1$, and for every $\|X\beta\|_p=1$ there exists a vector $y\in \mathbb{R}^s$, such that $X\beta=Uy$ and $\|y\|_2=O(1)$.
    
    For $p=1$, this yields that $U$ is an $(\alpha,\gamma,1)$-well-conditioned spanning set, where $\alpha=O(d\log\log (d))$, and $\gamma = O(1)$.
    For the bound on $\alpha$, it holds that
    \[
        \|U\|_1 = \sum_{i=1}^s \|U^{(i)}\|_1 = s = O(d\log\log (d)).
    \]
    For the bound on $\gamma$, note that the Hölder dual for $p=1$ is $q=\infty$. Now, it follows for every $y\in \mathbb{R}^s$ that represents any $X\beta$ as a linear combination of columns of $U$ that,
    \[
        \|y\|_\infty \leq \|y\|_2 \leq O(1) = O(1)\|X\beta\|_1 = O(1)\|Uy\|_1.
    \]
    
    As a consequence in the case $p=1$, this computational result replaces the $d$ factor from the Auerbach basis in the proof of \Cref{lem:sensitivity:bounds} by a factor $(\alpha\gamma)^p = O(d\log\log (d))$, as we have claimed.
\end{proof}

\section{Proofs for main approximation result}
\label{app:mainapprox}

Recall \cref{eq:beta_star_and_beta_prime_star}:
\begin{equation*}
    \tilde\beta^\ast\coloneqq \textup{argmin}_{\beta'\in D(\eta)}f(X\beta'),\quad \beta^\ast\coloneqq \textup{argmin}_{\beta\in D(0)} f(X\beta).
\end{equation*}

\lemMain*
\begin{proof}[Proof of \Cref{lem:main}]
    Recall from \cref{sec:preliminaries_Poisson_p_root_link_model} that we choose $X$ to be the design matrix that includes an intercept, i.e., every row of $X$ is of the form $x=(1,x^{(1)},x^{(2)},\ldots,x^{(d-1)})$.
    Note that by definition we have that $D(\eta) \subset D(0)$ is  
    a proper subset. Thus, $f(X\beta^\ast) \leq f(X\tilde\beta^\ast)$ follows immediately.

    Next, define for every $\beta \in D(0)$ its shifted version to be in one-to-one correspondence with a unique $\beta'\in D(\eta)$ via the translation 
    \begin{equation}
     \label{eq:correspondence_between_beta_prime_and_beta}
     \beta'\coloneqq \beta+\eta e_1,
    \end{equation}
    where $e_1$ is the first standard basis vector.
    Recall that we choose the first column of $X$ to be $(1,\ldots,1)\in\mathbb{R}^{n}$.

    Now consider $(\beta^\ast)'\in D(\eta)$ to be the shifted version of the global optimizer $\beta^\ast \in D(0)$. Since $\tilde\beta^\ast$ minimizes the loss function over $D(\eta)$, it follows that $f(X\tilde\beta^\ast) \leq f(X(\beta^\ast)')$. 

    Finally, we claim that there exists an absolute constant $C\geq 0$ such that for all sufficiently small $\eta > 0$ we have for all $\beta$ that \cref{eq:claim} holds, i.e.,
    \begin{align*}
        f(X\beta') \leq f(X\beta) + \eta^p n + \eta Cf(X\beta).
    \end{align*}
    By summing the result of \cref{lem:lower_bound_on_cost} over all $n$ inputs, we have that $f(X\beta^\ast)\geq n$. Applying our claim to $\beta^\ast$ thus yields
    \[
        f(X(\beta^\ast)') \leq f(X\beta^\ast) + \eta^p n + \eta Cf(X\beta^\ast)\leq (1+\eta+C\eta)f(X\beta^\ast) = (1+O(\eta))f(X\beta^\ast).
    \]
    This proves that \cref{lem:lower_bound_on_cost} and \cref{eq:claim} imply the upper bound of the lemma.
    
    It remains to prove our claim of \cref{eq:claim} for $p=1$, and $p=2$ separately.

    \textbf{Case $p=1$:}
    We have
    \begin{align*}
        f(X\beta')=& \sum_{i\in[n]} (x_i\beta+\eta)- y_i \log (x_i\beta+\eta)+\log(y_i!)
        \\
        \leq& \sum_{i\in[n]} x_i\beta - y_i \log(x_i\beta)+\log(y_i!)+\eta
        = f(X\beta)+\eta n,
    \end{align*}   
    which satisfies \cref{eq:claim} with $C=0$.\\
    
\textbf{Case $p=2$:}
    We have
\begin{align*}
 f(X\beta')=& \sum_{i\in[n]} (x_i\beta)^2+2\eta x_i\beta +\eta^2-2y_i \log (x_i\beta+\eta)+2y_i\log(x_i\beta)-2y_i\log (x_i\beta)+\log y_i!
 \\
 =&\sum_{i\in[n]} (x_i\beta)^2+2\eta x_i\beta +\eta^2-2y_i \log \frac{x_i\beta+\eta}{x_i\beta}-2y_i\log (x_i\beta)+\log y_i!
 \\
 =&f(X\beta)+\sum_{i\in[n]} 2\eta x_i\beta +\eta^2-2y_i \log \frac{x_i\beta+\eta}{x_i\beta}
\\
 =&f(X\beta)+\sum_{i\in[n]} 2\eta x_i\beta +\eta^2-2y_i \log\left(1+\frac{\eta}{x_i\beta}\right)
 \\
 =& f(X\beta)+n\eta^2 +2\eta \sum_{i\in[n]} x_i\beta -\frac{y_i}{\eta} \log\left(1+\frac{\eta}{x_i\beta}\right).
\end{align*}
Now using that $\log(1+x)\geq \frac{x}{1+x}, \forall x > -1$, we bound the error for every $y\in\bb{N}$ by a function $\phi=\phi_{y}$.
\begin{align}
 z-\frac{y}{\eta}\log\left(1+\frac{\eta}{z}\right)\leq z-\frac{y}{\eta}\frac{\eta}{z}\frac{z}{z+\eta} = z-\frac{y}{z+\eta}
  = \colon\phi(z),
\end{align}
The first and second derivatives are given by
\begin{align*}
    \phi'(z)= 1+\frac{y}{(z+\delta)^2}\geq 1,
    \\
    \phi''(z)= -\frac{2y(z+\delta)}{(z+\delta)^4} < 0 ,
\end{align*}
from which we know that the function is monotonically increasing and concave. On the other hand, we know that $g_y(z)$ is convex, monotonically increasing on $z\in[y^{1/2},\infty)$ and is bounded below by $1$. We can thus show the claim for $C=3$ by comparing the functions as well as their derivatives at $z=y^{1/2}+1$.

First note that by monotonicity, we have for all $z\leq y^{1/2}+1$ that
\begin{align*}
    \phi(z)\leq \phi(y^{1/2}+1) &= \frac{(y^{1/2}+1)^2 + \eta (y^{1/2}+1)-y}{y^{1/2}+1 +\eta}
    = \frac{y+2y^{1/2}+1 + \eta y^{1/2}+\eta-y}{y^{1/2}+1 +\eta}
    \\
    &= \frac{y^{1/2}+1 + \eta +(1+\eta) y^{1/2}}{y^{1/2}+1 +\eta}
    = 1+ \frac{(1+\eta) y^{1/2}}{y^{1/2}+1 +\eta} \\
    &\leq 1+\frac{(1+\eta) y^{1/2}}{y^{1/2}}
    = 1+1+\eta 
    \leq 3\cdot 1 
    \leq 3g(z).
\end{align*}
In particular this holds for $z=y^{1/2}+1$ as well.

It remains to compare the derivatives for the choice of $z=y^{1/2}+1$. We have
\begin{align*}
    \phi'(y^{1/2}+1) 
    &= 1+\frac{y}{(y^{1/2}+1+\delta)^2}
    \leq 1+\frac{y}{(y^{1/2})^2} = 2,
\end{align*}
which implies that we also have
\begin{align*}
    3g'(y^{1/2}+1) 
    &= 3\cdot 2\frac{(y^{1/2}+1)^2-y}{y^{1/2}+1}
    \\
    &= 3\cdot 2 \frac{y+2y^{1/2}+1-y}{y^{1/2}+1}
    \\
    &= 3\cdot 2 \frac{2y^{1/2}+1}{y^{1/2}+1} \geq 3\cdot 2
    > 2 \geq \phi'(y^{1/2}+1).
\end{align*}
This completes the proof of $\phi(z)\leq 3 g(z)$ for all $z$ by convexity of $g$ and concavity of $\phi$.

Overall, the claim of \cref{eq:claim} follows with $C=3(p-1)$, for both $p\in\{1,2\}$, which also concludes the proof of the lemma.
\end{proof}

\thmMain*
\begin{proof}[Proof of \cref{thm:main}]
    We invoke \cref{lem:main} with $\eta=\eps$ to show that the shifted version of $\beta^\ast$, i.e., $\beta_{good}:=(\beta^\ast)' = \beta^\ast + \eta e_1$ is a $(1+\eps)$-approximation and $\beta_{good}\in D(\eta)$. Thus, the optimizer $(\beta')^\ast \in D(\eta)$ cannot be worse than a $(1+\eps)$-approximation. 
    
    The coreset construction of \cref{thm:coreset} works uniformly over $D(\eta)$. It thus yields a coreset $C\subset X$ of size $k$ with weights $w\in\mathbb{R}^k$ such that if we denote the weighted loss on the coreset by $f_w(C\beta)$, it satisfies 
    \begin{equation}\label{eq:coreprop}
        \forall \beta\in D(\eta): (1-\eps) f(X\beta) \leq f_w(C\beta) \leq (1+\eps) f(X\beta)
    \end{equation}
    Then defining $\beta_{good}:={\beta}^\ast + \eta e_1$, we have $f(X\tilde\beta)\geq f(X\beta^\ast)$ since $D(\eta)\subset D(0)$. Moreover, assuming $0<\eps\leq \frac{1}{2}$ we have
    \begin{align*}
        f(X\tilde\beta) &\leq \frac{1}{1-\eps} f_w(C\tilde\beta) \leq \frac{1}{1-\eps}  f_w(C\beta_{good}) \\
        & \leq \frac{1+\eps}{1-\eps} f(X\beta_{good}) \leq \frac{(1+\eps)^2}{1-\eps} f(X\beta^*) \leq (1+7\eps) f(X\beta^*)
    \end{align*}
    rescaling $\eps$ finishes our main result.
\end{proof}

\section{Proofs for lower bounds}
\label{app:lowerbounds}

\lemLinearSensitivityLB*
\begin{proof}[Proof of \Cref{lem:linearsensitivityLB}]
        We first note that our construction can be embedded arbitrarily in $d\geq 3$ dimensional spaces. For simplicity, we describe the construction for $d=3$, where the first dimension corresponds to the affine translation and the other two describe the location in the $2$-dimensional subspace.
    
    Recall that our point set is given by $x_i = (1, \cos(\frac{2\pi i}{n}), \sin(\frac{2\pi i}{n}))$, $i\in[n]$, and $y_i=1$ for every $i\in[n]$.
    By symmetry of the construction, it suffices to analyze w.l.o.g. the sensitivity of point $x_n=(1, \cos({2\pi}), \sin({2\pi}))= (1,1,0)$. Since the sensitivity is defined as the supremum over all $\beta$, it also suffices to find one $\beta$ for which the sensitivity is arbitrarily close to $1$.
    To this end, for a small $\eta>0$ yet to be determined, we choose $$\beta=(1+\eta, -\cos({2\pi}), -\sin({2\pi}))^T=(1+\eta, -1, 0)^T,$$ where $1+\eta$ represents a translation term and $(-1,0)^T$ represents a `normal' term that lives within the 2-dimensional subspace mentioned above.
    This normal term defines a hyperplane $H$ (which is in fact a line) within the 2-dimensional subspace. 
    The normal points towards the center of the point set, and the hyperplane $H$ is at distance exactly $x_n\beta=1+\eta -1 = \eta$ from $x_n$. 
    
    A simple trigonometric calculation yields that the separation between $x_n$ and the neighboring points $x_1$ and $x_{n-1}$ along the direction orthogonal to $H$ is exactly $1-\cos(2\pi/n)$.  
    Since $n\geq 8$, it holds that $(2\pi/n)^2/3 \leq 1-\cos(2\pi/n) \leq (2\pi/n)^2/2$ by a second order Taylor series expansion of the cosine function. All other points are even farther away from $x_n$ than $x_1$ and $x_{n-1}$, and therefore also further from $H$.
    Also note that if we let $x_n'=(1,-1,0)$ be the antipodal point of $x_n$ on the circle, we see that the distances of all points from $H$ are less than $x_n'\beta = 1+\eta+1 = 2+\eta < 3$.
    
    Recall that for arbitrary $p\geq 1$, the function $g_y$ defined in \cref{eq:g_yi_function} is minimized at $y^{1/p}$, which in this case equals $y_i=1$.
    We have that roughly half of the points are at distance at least $1+\eta$ and distance at most $3$ from $H$.
    By strict convexity, we have that $g_1$ is also strictly increasing on the interval $[1,\infty)$. 
    We can thus upper bound the contribution of each of these points by at most
    \begin{align}\label{eqn:LBupperbound1}
        g_1(x_i\beta) \leq \frac{n}{2} g_1(3) = (3^p - p\log(3)) \leq 9 - 1 = 8 \leq 8\log(n).
    \end{align}
    For the other half of the points (except $x_n$) the contribution is upper bounded by the loss that occurs closest to $H$. By strict convexity again, we have that $g_1$ is also strictly decreasing on the interval $(0,1]$. We argued that the points are sufficiently separated, so we get that each of their contributions is bounded by 
    \begin{align}
        g_1(x_i\beta) &\leq g_1(1-\cos(2\pi/n)) \leq g_1((2\pi/n)^2/3) \leq g_1(1/n^2) \nonumber\\
        &\leq 1/n^{2p} - 2p\log(1/n) \nonumber
        = 1/n^2 + 2p\log({n}) \\ \label{eqn:LBupperbound2}
        &\leq 1 + 4\log(n) \leq 8\log(n).
    \end{align}
    Now, choosing $\eta = \exp(-n^2)$, we have that the cost of the point $x_n$ is lower bounded by 
    \begin{align*}
        g_1(x_n\beta) &= g_1(\eta) \geq \left(\frac{1}{\exp(n^2)}\right)^p + p\log(\exp(n^2)) \geq n^2.
    \end{align*}
    Thus, we have that 
    \begin{align*}
        \varsigma_n = \sup_{\beta'} \frac{g_1(x_n\beta')}{\sum_{i=1}^n g_1(x_i\beta')} \geq \frac{n^2}{n^2 +8n\log (n)} \overset{n\rightarrow \infty}{\longrightarrow} 1.
    \end{align*}
    Since there is no sensitivity upper bound below $1$ that holds for arbitrarily large $n$ and for each point, \citep[Lemma A.1]{TolochinskyJF22} implies that the coreset must comprise all $\Omega(n)$ points.
\end{proof}

\lemPoissonLB*
\begin{proof}[Proof of \Cref{lem:poissonLB}]
	We reduce from the indexing problem for which we know that it has one-way randomized communication complexity $\Omega(n)$ \citep{KNR99}. We construct a protocol as follows. Alice is given a vector $b\in\{0,1\}^n$. She produces for every $i$ with $b_i=1$ the points $x_i = (1, \cos(\frac{2\pi i}{n}), \sin(\frac{2\pi i}{n}))$ in canonical order. The corresponding counts are set to $y_i=1$. She builds and sends $\Sigma_D$ to Bob, whose task is to guess the bit $b_j$. Let the size of $\Sigma_D$ in bit complexity be $s(n)$ bits, and note that $s(n)$ corresponds to the amount of bits that have been communicated. Bob chooses to query $\beta=(1+\eta, -\cos(\frac{2\pi j}{n}), -\sin(\frac{2\pi j}{n}))$. 

    By symmetry of the construction, we can assume w.l.o.g. that the upper bounds \cref{eqn:LBupperbound1,eqn:LBupperbound2} on the costs in the proof of \cref{lem:linearsensitivityLB} continue to hold. 
    
    Thus, if $b_j=0$, then $x_j$ does not exist and the cost of all other points is bounded from above by
	\begin{align*}
		f(X\beta) &\leq 8n\log(n).
	\end{align*}
 
	If $b_j=1$, then $x_j$ is at distance exactly 
	\begin{align*}
		x_j\beta &= \left(1,\cos\left(\frac{2\pi j}{n}\right), \sin\left(\frac{2\pi j}{n}\right)\right) \cdot \left(1+\eta, -\cos\left(\frac{2\pi j}{n}\right), -\sin\left(\frac{2\pi j}{n}\right)\right)^T \\
        &= 1+\eta - \cos\left(\frac{2\pi j}{n}\right)^2 - \sin\left(\frac{2\pi j}{n}\right)^2 = 1+\eta - 1 = \eta.
	\end{align*}
	
	Thus, choosing $\eta = \exp(-n^2)$, the cost is bounded below by	$f(X\beta) \geq g_1(\eta) \geq n^2$ as in the proof of \cref{lem:linearsensitivityLB}.
	
    Given that $\varphi < \frac{{n^2}}{8n\log(n)}=\frac{n}{8\log(n)}$, Bob can distinguish these two cases based on the data structure only, by deciding whether $\Sigma_D(\beta)$ is strictly smaller or larger than $n^2$. Consequently, it holds that $s(n)\geq \Omega(n)$, since this solves the indexing problem.
\end{proof}

\lemBoundsLambertWzero*
\begin{proof}[Proof of \Cref{lem_bounds_on_Lambert_W_0_function}]
First we claim that  
\begin{equation}\label{eq:newLB1}
    \tau-\log(1+\tau) \geq -\frac{1}{2}\log(1-\tau^2) = -\log(\sqrt{1-\tau^2}),\quad \tau\in (-1,0].
\end{equation}
Define $l(\tau)\coloneqq \tau-\log(1+\tau)+\frac{1}{2}\log(1-\tau^2)$. Then $l(0)=0$. The derivative of $l$ is 
\begin{equation*}
 l'(\tau)=1-\frac{1}{1+\tau}-\frac{1}{2}\cdot \frac{2\tau}{1-\tau^2} = \frac{1-\tau^2-1+\tau-\tau}{1-\tau^2}=\frac{-\tau^2}{1-\tau^2}<0,\quad \forall \tau\in(-1,0]
\end{equation*}
which implies that for every $\tau\in (-1,0]$ we have $l(\tau)\geq 0 $. 
This proves \cref{eq:newLB1}.
Next, we follow the proof strategy of \cite[Theorem 3.2]{RoigSolvasSznaier2022}: define a new variable $\tau=\tau(x)\coloneqq -(W_0(x)+1)$, so that $-W_0(x)=1+\tau$. Since $W_0(x)e^{W_0(x)}=x$ for $x\geq -1/e$, the definition of $\tau$ implies that $(1+\tau)e^{-(1+\tau)}=-x$.
Let $x\in[-1/e,0)$ be arbitrary. Then since $W_0(0)=0$ and $W_0(-1/e)=-1$, we have $\tau(x)\in(-1,0]$, and thus
\begin{align*}
 (1+\tau)e^{-(1+\tau)}=-x \Longleftrightarrow& &\tau-\log(1+\tau)=&-\log(-x)-1
 \\
 \Longrightarrow& & -\log (\sqrt{1-\tau^2})\leq& -\log(-x)-1  & \text{by \cref{eq:newLB1}}
 \\
 \Longleftrightarrow& & 1+\log(-x )\leq &\log(\sqrt{1-\tau^2}).
 \end{align*}
 The last inequality is equivalent to 
\begin{equation*}
1\leq \log\left(\frac{\sqrt{1-\tau^2}}{-x}\right) \Longleftrightarrow e \leq \frac{\sqrt{1-\tau^2}}{-x} \Longleftrightarrow -ex \leq \sqrt{1-\tau^2}.
\end{equation*}
Now since $(\sqrt{1-\tau^2})^2=1-\tau^2\leq 1-\tau^2+(\tfrac{\tau^2}{2})^2=(1-\tfrac{\tau^2}{2})^2$ for every $\tau\in (-1,0]$, it follows that
\begin{equation*}
 -ex\leq 1-\frac{\tau^2}{2}\Longleftrightarrow \frac{\tau^2}{2}\leq 1+ex \Longleftrightarrow \abs{W_0(x)+1}\leq \sqrt{2(1+ex)},
\end{equation*}
where the rightmost equivalence above follows from the definition $\tau\coloneqq -(W_0(x)+1)$.
The inequality on the right-hand side above implies the desired conclusion.
\end{proof}

Recall the function $g_{y_i}$ defined in \cref{eq:g_yi_function}. 

\lemOptLambdaLBpequalsone*
\begin{proof}[Proof of \Cref{lem_optimal_lambda_for_lower_bound_p_equals_1}]
 A point of tangency $\tilde{z}$ of the curves $g_y$ and $h_\lambda$ is defined as a point where the functions agree and their derivatives agree. 
To identify the point where the derivatives of $g_y$ and $h_\lambda$ agree, we observe
\begin{equation*}
 g_y'(\tilde{z})=1-\frac{y}{z}=\frac{1}{\lambda}=h_\lambda'(\tilde{z})\Longleftrightarrow 1-\frac{1}{\lambda}=\frac{y}{\tilde{z}}\Longleftrightarrow \tilde{z}=\frac{\lambda y}{\lambda -1}.
\end{equation*}
Since $g_y(z)\geq \frac{1}{2}\log (2\pi y)$ and $h_\lambda(z)<0$ for $z<y$ if $\tau=y$, the tangent point cannot lie in the interval $(0,y]$. 
Hence, $\tilde{z}>y$ must hold.
Combining this observation with the equation $\tilde{z}=\tfrac{\lambda y}{\lambda -1}$ for the point where the derivatives agree, we conclude that $\lambda>1$.

Now suppose that $\tilde{z}$ is a point where the functions $g_y$ and $h_\lambda$ agree:
 \begin{align*}
 & & h_\lambda(\tilde{z}) =& g_y(\tilde{z})
   \\
   \Longleftrightarrow & & y\log (\tilde{z})-\tilde{z}+\frac{\tilde{z}}{\lambda}=& \frac{y}{\lambda}+\log (y!)
\\
   \Longleftrightarrow & & y\log (\tilde{z})-\tilde{z} \frac{\lambda-1}{\lambda}=& \frac{y}{\lambda}+\log (y!)
   \\
   \Longleftrightarrow & & \log (\tilde{z})-\tilde{z} \left(\frac{\lambda y}{\lambda -1}\right)^{-1}=& \frac{1}{\lambda}+\frac{1}{y}\log (y!)
   \\
   \Longleftrightarrow & & \tilde{z} \exp\left(-\tilde{z} \left(\frac{\lambda y}{\lambda -1}\right)^{-1}\right)=& \exp\left(\frac{1}{\lambda}\right)(y!)^{1/y}
   \\   
   \Longleftrightarrow & & \left(-\tilde{z} \left(\frac{\lambda y}{\lambda -1}\right)^{-1}\right)\exp\left(-\tilde{z} \left(\frac{\lambda y}{\lambda -1}\right)^{-1}\right)=&-\left(\frac{\lambda y}{\lambda -1}\right)^{-1}\exp\left(\frac{1}{\lambda}\right)(y!)^{1/y}.
 \end{align*}
 If $\tilde{z}$ is a point of tangency, then by the equivalent condition above for $g_y'=h_\lambda'$, it follows that $\tilde{z}=\tfrac{\lambda y}{\lambda -1}$. 
 Substituting this into the last equation above yields
 \begin{align*}
  & & -1\exp(-1)=&-\left(\frac{\lambda y}{\lambda -1}\right)^{-1}\exp\left(\frac{1}{\lambda}\right)(y!)^{1/y}
  \\
  \Longleftrightarrow &  & -\frac{y}{(y!)^{1/y}} \exp(-1)=&-\frac{\lambda -1}{\lambda}\exp\left(\frac{1}{\lambda}\right)
  \\
  \Longleftrightarrow & & -\frac{y}{(y!)^{1/y}} \exp(-2)=&\left(\frac{1}{\lambda}-1\right)\exp\left(\frac{1}{\lambda}-1\right).
 \end{align*}
 The last equation above yields that 
 \begin{equation}
  \label{eq_intermediate_equation_for_lambda}
  \frac{1}{\lambda}-1 =W_k\left(\frac{-y}{(y!)^{1/y}\exp(2)}\right),\quad k\in\bb{Z}.
 \end{equation}
We can further specify the branches of the Lambert W function as follows.
By definition of the factorial, $1\leq \frac{y}{(y!)^{1/y}}$ for all $y\in\bb{N}$.
On the other hand, by Stirling's approximation,
\begin{equation}
 \label{eq_bounds_on_y_divided_by_yth_root_of_y_factorial}
 \frac{\exp(1-\tfrac{1}{12y^2})}{(2\pi y)^{1/(2y)}}<\frac{y}{(y!)^{1/y}}< \frac{\exp(1-\tfrac{1}{12y^2+y})}{(2\pi y)^{1/(2y)}}< \exp(1),\quad y\in\bb{N}\setminus\{1\}.
\end{equation}
Hence, 
\begin{equation*}
-\exp(-1)<-\frac{y}{(y!)^{1/y}} \exp(-2)\leq -\exp(-2),\quad y\in\bb{N}.
\end{equation*}
This implies that the argument of the Lambert W function in \cref{eq_intermediate_equation_for_lambda} lies in the interval $(-e^{-1},-e^{-2}]$.
By definition of the Lambert W function, this in turn implies that in \cref{eq_intermediate_equation_for_lambda}, we need to consider only $k=0$ and $k=-1$, which means that
\begin{equation*}
  \frac{1}{\lambda^\ast}=\frac{1}{\lambda^\ast(y)}\in\left\{ W_k\left(\frac{-y}{(y!)^{1/y}\exp(2)}\right)+1\ :\ k\in \{0,-1\}\right\}
\end{equation*}
For all $x\in[-\exp(-1),-\exp(-2)]$, $W_0(x)\geq W_{-1}(x)$, with equality holding only for $x=-\exp(-1)$. This follows from the definition of $W_0$ as the principal branch of the Lambert W function. Hence,
\begin{equation*}
 \lambda^\ast(y)=\frac{1}{W_0\left(\frac{-y}{(y!)^{1/y}\exp(2)}\right)+1},\quad y\in\bb{N},
\end{equation*}
which proves the first statement of \Cref{lem_optimal_lambda_for_lower_bound_p_equals_1}. 

Next, we show that $\lambda^{\ast}(y)=\Theta({\sqrt{y_{\max}/\log(y_{\max})}})$.
We use a lower bound for the principal branch $W_0$ of the Lambert W function for negative arguments from \cite[Theorem 3.2]{RoigSolvasSznaier2022}:
\begin{equation}
\label{eq_lower_bound_RoigSolvasSznaier}
(ex+1)^{1/2}-1\leq W_0(x),\quad\forall x\in [-e^{-1},0].
\end{equation}
Since we showed above that the argument of the Lambert W function in \cref{eq_intermediate_equation_for_lambda} lies in the interval $(-e^{-1},-e^{-2}]$, we may apply \cref{eq_lower_bound_RoigSolvasSznaier}.

 Combining \Cref{lem_bounds_on_Lambert_W_0_function} with \cref{eq_lower_bound_RoigSolvasSznaier}, implies that for all $x\in [-e^{-1},0]$,
\begin{equation*}
\sqrt{ex+1}-1\leq W_0(x)\leq \sqrt{2(1+ex)}-1 \Leftrightarrow \frac{1}{\sqrt{2(1+ex)}} \leq \frac{1}{W_0(z)+1}\leq \frac{1}{\sqrt{(1+ex)}}.
\end{equation*}
By \cref{eq_bounds_on_y_divided_by_yth_root_of_y_factorial}, we may thus set $x=-\tfrac{y}{(y!)^{1/y}}e^{-2}$ in the above inequalities, which yields
\begin{equation}
 \label{eq_bounds_for_lambda_for_p_equals_1_intermediate}
 2^{-1/2}\left(\frac{-y}{(y!)^{1/y}}e^{-1} +1\right)^{-1/2}\leq \frac{1}{W_0\left(\frac{-y}{(y!)^{1/y}}e^{-2}\right)+1}=\lambda^\ast(y)\leq \left(\frac{-y}{(y!)^{1/y}}e^{-1} +1\right)^{-1/2}
\end{equation}
for all $y\in\bb{N}$.
Again by \cref{eq_bounds_on_y_divided_by_yth_root_of_y_factorial},
\begin{equation*}
1-\exp\left(-\frac{1}{12y^2+y}-\frac{1}{2y}\log(2\pi y)\right) <  \frac{-y}{(y!)^{1/y}}e^{-1} +1 <1-\exp\left(-\frac{1}{12y^2}-\frac{1}{2y}\log(2\pi y)\right).
\end{equation*}
Note that
\begin{equation*}
 0<\frac{1}{12y^2+y}+\frac{1}{2y}\log(2\pi y)<\frac{1}{12y^2}+\frac{1}{2y}\log(2\pi y).
\end{equation*}
Since $y\mapsto \tfrac{1}{12y^2}+\frac{1}{2y}\log(2\pi y)$ is decreasing on $[1,\infty)$ and has the value $\tfrac{1}{12}+\tfrac{\log (2\pi)}{2}<\tfrac{1}{2}$ at $y=1$, it suffices to consider the quantity $1-\exp(-x)$ for $x\in[0,\tfrac{1}{2}]$. 
By Taylor's approximation, we have
\begin{equation}\label{eq:alexlabel}
 \frac{x}{2}<1-\exp(-x)<x,\quad \forall x\in [0,1]
\end{equation}
and applying the lower bound in \cref{eq:alexlabel} to the lower bound for $-\tfrac{y}{(y!)^{1/y}}e^{-1}+1$ below \cref{eq_bounds_for_lambda_for_p_equals_1_intermediate} yields
\begin{align*}
    \frac{-y}{(y!)^{1/y}}e^{-1} +1 &> 1-\exp\left(-\frac{1}{12y^2+y}-\frac{1}{2y}\log(2\pi y)\right) \\
    &> \frac{1}{2}\left(\frac{1}{12y^2+y}+\frac{1}{2y}\log(2\pi y)\right) \\
    &= \frac{1}{2}\cdot \frac{1+6(y+1/12)\log(2\pi y)}{12y^2+y}
\end{align*}
for all $y\in\bb{N}$.
Applying this to the upper bound for $\lambda^{\ast}(y)$ in \cref{eq_bounds_for_lambda_for_p_equals_1_intermediate} yields for all $y\in\bb{N}$
\begin{equation*}
    \lambda^\ast(y)\leq \left(2 \cdot \frac{12y^2+y}{1+6(y+1/12)\log(2\pi y)} \right)^{1/2}\leq \left( \frac{26y^2}{6y\log(2\pi y)} \right)^{1/2} = O\left( \sqrt{\frac{y}{\log(y)}} \right).
\end{equation*}
Applying the upper bound in \cref{eq:alexlabel} to the upper bound for $-\tfrac{y}{(y!)^{1/y}}e^{-1}+1$ below \cref{eq_bounds_for_lambda_for_p_equals_1_intermediate}  yields 
\begin{equation*}
 \frac{-y}{(y!)^{1/y}}e^{-1} +1 <\frac{1}{12y^2}+\frac{1}{2y}\log(2\pi y)=\frac{1+6y\log(2\pi y)}{12y^2}<\frac{12y\log(2\pi y)}{12y^2}
\end{equation*}
where the rightmost inequality follows since the function $y\mapsto 6y\log(2\pi y)$ is increasing on $[1,\infty)$ and has a value strictly larger than 1 at $y=1$.
Applying the inequality above to the lower bound for $\lambda^{\ast}(y)$ in \cref{eq_bounds_for_lambda_for_p_equals_1_intermediate} yields
\begin{equation*}
 \lambda^{\ast}(y) \geq \left(\frac{1}{2}\frac{ y^2}{y\log (2\pi y)}\right)^{1/2} = \Omega\left( \sqrt{\frac{y}{\log(y)}} \right) ,\quad\forall y\in\bb{N}.
\end{equation*}
This completes the proof that $\lambda^{\ast}(y)=\Theta({\sqrt{y_{\max}/\log(y_{\max})}})$.
\end{proof}

\lemMainDoesNotHoldForPequalsthree*
\begin{proof}[Proof of \Cref{lem:main_does_not_hold_for_p_geq_3}]
First note that
\begin{align*}
 f(X\beta')=& \sum_{i\in[n]} (x_i\beta+\eta)^p-py_i\log (x_i\beta+\eta)+\log (y_i!)
 \\
 =&\sum_{i\in[n]} \left((x_i\beta)^p+\eta^p+\sum_{\ell=1}^{p-1} \begin{pmatrix}
 p
 \\
 \ell
 \end{pmatrix}(x_i\beta)^\ell \eta^{p-\ell}
 \right)-py_i\log (x_i\beta+\eta)+\log (y_i!)
 \\
 =&f(X\beta)+\eta^p n+ \sum_{i\in[n]} \sum_{\ell=1}^{p-1} \begin{pmatrix}
 p
 \\
 \ell
 \end{pmatrix}(x_i\beta)^\ell \eta^{p-\ell}-py_i \log \left(\frac{x_i\beta+\eta}{x_i\beta}\right),
\end{align*}
where the last equation follows by the definition of $f(X\beta)$, given the hypothesis on $p$.
Define for every $y\in\bb{N}$ the auxiliary function
\begin{equation}
 \label{eq_phi_1_p_geq_3}
 \varphi_{1,y,p}(z)\coloneqq \frac{1}{\eta}\left(\sum_{\ell=1}^{p-1} \begin{pmatrix}
 p
 \\
 \ell
 \end{pmatrix}z^\ell \eta^{p-\ell}-py \log \left(\frac{z+\eta}{z}\right)\right),\quad z>0.
\end{equation}
Then the inequality \cref{eq:claim} is equivalent to
\begin{equation*}
\sum_{i\in[n]}\varphi_{1,y_i,p}(x_i\beta)\leq C\sum_{i\in[n]} g_{y_i}(x_i\beta)\Longleftrightarrow 0\leq \sum_{i\in[n]} Cg_{y_i}(x_i\beta)-\varphi_{1,y_i,p}(x_i\beta),
\end{equation*}
which implies the following statement: there exists $C,\eta^\ast>0$ such that for every $n\in\bb{N}$, $(y_i)_{i\in[n]}\in\bb{N}^{n}$, $\eta\leq\eta^\ast$, and $ (z_i)_{i\in[n]} \in \prod_{i\in[n]} [y_i^{1/p},\infty)$, it holds that
\begin{equation}
\label{eq_equivalent_formulation_p_geq_3}
 0\leq \sum_{i\in[n]} Cg_{y_i}(z_i)-\varphi_{1,y_i,p}(z_i).
\end{equation}
This statement yields the following necessary condition for \cref{eq:claim}: there exists $C,\eta^\ast>0$ such that for every $n\in\bb{N}$, $(y_i)_{i\in[n]}\in\bb{N}^{n}$, $\eta\leq\eta^\ast$, and $ (z_i)_{i\in[n]} \in \prod_{i\in[n]} [y_i^{1/p},\infty)$, at least one summand on the right-hand side must satisfy $0\leq Cg_{y_i}(z_i)-\varphi_{1,y_i,p}(z_i)$.
This is because if every summand in the sum in \cref{eq_equivalent_formulation_p_geq_3} were strictly negative, then the sum itself must be strictly negative.
If the necessary condition above does not hold, then by considering the contrapositive, we conclude that \cref{eq:claim} does not hold either.

We now show that the necessary condition does not hold, by proving that for every $C,\eta>0$ and $n\in\bb{N}$, there exists $(y_i)_{i\in[n]}\in\bb{N}^{n}$ such that 
for every $i\in[n]$, $Cg_{y_i}(y_i^{1/p})-\phi_{1,y_i,p}(y_i^{1/p})<0$.
Indeed, for every $i\in[n]$, suppose that every $y_i\in\bb{N}$ satisfies
 \begin{equation}
  \label{eq_equivalent_formulation_p_geq_3_lower_bound_on_y}
  \frac{2C}{p(p-1)\eta}<\frac{y^{(p-2)/p} }{\tfrac{1}{2}\log(2\pi y)+\tfrac{1}{12y}}.
 \end{equation}
 By the hypothesis that $p\geq 3$ and by the fact that the denominator of the right-hand side grows more slowly than the numerator, it follows that there exist infinitely many values of $y_i$ that satisfy \cref{eq_equivalent_formulation_p_geq_3_lower_bound_on_y}.
 Thus it remains to show that \cref{eq_equivalent_formulation_p_geq_3_lower_bound_on_y} implies $Cg_{y}(y^{1/p})-\phi_{1,y,p}(y^{1/p})<0$.
 
 By the inequality $\tfrac{x}{1+x}\leq \log (1+x)$, $x>0$, we obtain
\begin{equation*}
-\frac{y}{z}= -\frac{y}{\eta}\frac{\eta}{z} \leq -\frac{y}{\eta}\log\left(1+\frac{\eta}{z}\right),\quad \forall \eta,z>0.
\end{equation*}
 Rewrite the auxiliary function $\varphi_{1,y,p}$ from \cref{eq_phi_1_p_geq_3} and bound it from below, first by using the lower bound above, and then by using the hypothesis that $p\in\bb{N}$, $p\geq 3$:
 \begin{align*}
  \varphi_{1,y,p}(z)=& \sum_{\ell=1}^{p-1} \begin{pmatrix}
 p
 \\
 \ell
 \end{pmatrix}z^\ell \eta^{p-\ell-1}-p\frac{y}{\eta} \log \left( \frac{z+\eta}{z} \right)
 \\
 \geq&  \sum_{\ell=1}^{p-1} \begin{pmatrix}
 p
 \\
 \ell
 \end{pmatrix}z^\ell \eta^{p-\ell-1}-p\frac{y}{z}
 >p z^{p-1}+
 \frac{p(p-1)}{2}z^{p-2}\eta^{1}-p\frac{y}{z}.
 \end{align*}
Setting $z=y^{1/p}$, it thus suffices to show that 
 \begin{equation}
  \label{eq_toshow_bound_on_cost_sufficient_condition_02_p_geq_3}
  Cg_y(y^{1/p})<p\left(y^{(p-1)/p}+\eta\frac{p-1}{2}y^{(p-2)/p}-y^{1-1/p}\right)
 =p\eta\frac{p-1}{2}y^{(p-2)/p}.
 \end{equation}
Using \eqref{eq:g_yi_function} to evaluate $g_y(y^{1/p})$ and using the upper bound on $\log (y!)$ from Stirling's approximation, we conclude that $g_y(y^{1/p})<\tfrac{1}{2}\log(2\pi y)+\tfrac{1}{12y}$.
Now \cref{eq_equivalent_formulation_p_geq_3_lower_bound_on_y} implies \cref{eq_toshow_bound_on_cost_sufficient_condition_02_p_geq_3}, because
\begin{equation*}
  C\left(\frac{1}{2}\log(2\pi y)+\frac{1}{12y}\right)< p\eta\frac{p-1}{2}y^{(p-2)/p} \Longleftrightarrow \frac{2C}{p(p-1)\eta}<\frac{y^{(p-2)/p} }{\tfrac{1}{2}\log(2\pi y)+\tfrac{1}{12y}}.
\end{equation*}
This completes the proof.
\end{proof}

\section{On the shifted domain and feasibility}
\label{app:feasibility}
In this section, we show that the restriction to $D(\eta)$ in \Cref{thm:coreset} and \Cref{sec:mainapprox} do not lead to feasibility issues.

First recall that we had defined for any $\eta \geq 0$
    \[
        D(\eta)\coloneqq \{\beta\in\bb{R}^{d}\ :\ \forall i,\ x_i\beta > \eta\}.
    \]
Also recall that for the ID- and square root-link --- and in fact for any $p$th-root-link --- the loss function as defined in \cref{eq:p_loss_function,eq:g_yi_function} includes a $\log(x\beta)$ term, which restricts the feasible region to $\beta$ such that for all $x_i, i\in[n]$ it holds that $x_i \beta > 0$. Thus $\beta\in D(0)$ is the natural domain induced by the model. In particular, this restriction is not our choice.

Our domain shift idea restricts the domain even further to $\beta\in D(\eta)\subset D(0)$, for $\eta > 0$. Clearly, some solutions that are feasible in the problem formulated over $D(0)$ are no longer feasible in the problem formulated over $D(\eta)$. But as we prove in \Cref{app:mainapprox}, we can construct a coreset that holds for all $\beta \in D(\eta)$, and $D(\eta)$ contains at least one $\beta$ that is a $(1+\varepsilon)$-approximation for the optimal solution $\beta^*$ of the problem on the original domain $D(0)$, and evaluated on the full dataset. These two parts are combined to prove that the final minimizer $\tilde\beta\in D(\eta)$ optimized on the coreset is a $(1+O(\varepsilon))$-approximation compared to the value of $\beta^*$, when both are evaluated on the full dataset.

In the other direction, no infeasible solution can become feasible because of the proper subset relation $D(\eta)\subset D(0)$.
Finally, note that for any data and any fixed $\eta<\infty$, both, $D(\eta)$ and $D(0)$ are non-empty, since they consist of all $\beta$ that parameterize hyperplanes that put the convex hull of input points (respectively, the additive $\eta$-inflation of the convex hull of input points) in the positive open halfspace. Thus there always exist feasible solutions, which means that no instance can become completely infeasible by means of our methods.

\section{Pseudocode, data and experimental results}
\label{app:code_experiments}

\subsection{Pseudocode}
Here we give pseudocode for our coreset construction \cref{alg:main} and for the subsequent optimization procedure \cref{alg:optim}:
\begin{algorithm}[ht!]
  \caption{Coreset algorithm for $p$th-root-link Poisson regression.}\label{alg:main}
  \begin{algorithmic}[1]
    \Statex \textbf{Input:} data $X \in \mathbb{R}^{n \times d}, Y\in\mathbb{N}_0^n$, number of rows $k$ (see \cref{thm:coreset}).
    \Statex \textbf{Output:} coreset $C=(X', Y', w) \in \mathbb{R}^{{k'} \times d}\times \mathbb{N}^{k'} \times \mathbb{R}^{k'}$ with $k'=k+|\mathrm{Ext}(X)|$ rows
    \State Let $X_{CH}$ be the extreme points $\mathrm{Ext}(X)$ on the convex hull of $X$ or their $\eps$-kernel approximation (cf. \cref{sec:ConvexHull}), let $Y_{CH}$ be their corresponding labels
    \State Assign weight vector $w_{CH}=1$ corresponding to all points in $X_{CH}$, $Y_{CH}$
    \State Let $X = X\setminus X_{CH}$, $Y = Y\setminus Y_{CH}$
    \If{$p=1$}
        \State Calculate a well-conditioned spanning set $B$ (see \citep[Lemma 4.1]{WoodruffY23})
        \State Set $Q=XB$
    \Else\text{ for $p=2$} 
        \State Sketch the data to obtain $\tilde{X}=\Pi X$ (see \citep{ClarksonW17})
        \State Calculate the $QR$ decomposition of the sketch $\tilde{X} = \tilde{Q}R$
        \State Set $Q=XR^{-1}$ (see \citep{DrineasMMW12, MunteanuOP22})
    \EndIf
    \State Approximate the $\ell_p$ sensitivities by $s_i:=\|Q_i\|_p^p+1/n$
    \State Sample $k$ rows of $X$ and $Y$ i.i.d. with probability $p_i=s_i/\sum_{j} s_j$ to obtain $X_{core} \in \mathbb{R}^{k\times d}$ and their corresponding labels $Y_{core} \in \mathbb{N}^{k}$
    \State Set $w_{core}\in \mathbb{R}^k$ such that $w_{core,j}=1/(kp_i)$ if sample $j$ corresponds to row $i$
    \State Concatenate $X_{CH}$ with $X_{core}$ to obtain $X'$
    \State Concatenate $Y_{CH}$ with $Y_{core}$ to obtain $Y'$
    \State Concatenate $w_{CH}$ with $w_{core}$ to obtain $w$
	\State \textbf{return} $C = (X', Y', w)$
  \end{algorithmic}
\end{algorithm}

\begin{algorithm}[ht!]
  \caption{Domain shift optimizer for $p$th-root-link Poisson regression.}\label{alg:optim}
  \begin{algorithmic}[1]
    \Statex \textbf{Input:} data $X \in \mathbb{R}^{n \times d} , Y\in\mathbb{N}_0^n$, error parameter $\varepsilon \in (0, \frac{1}{3})$.
    \Statex \textbf{Output:} $(1+\eps)$-approximate solution $\tilde\beta$.
    \State Run \cref{alg:main} with input $X,Y$ and $k$ as specified to obtain the coreset $(X', Y', w)$. Let $X_{CH}$ be defined as in \cref{alg:main}. 
    \State Run any convex optimization algorithm to find the optimal solution $\tilde\beta$ for the $p$th-root-link Poisson regression objective on $(X', Y', w)$ under the constraint that $\forall x_i\in X_{CH}\colon x_i\beta > \varepsilon$ (see \cref{thm:main})
    \State \textbf{return} $\tilde\beta$.
  \end{algorithmic}
\end{algorithm}

\subsection{Synthetic data generation}
We generated for each $p\in\{1,2\}$ a dataset with dimensions $n=100\,000, d=7$ with $n$ labels corresponding to each point.
\begin{itemize}
    \item Construction of $X$: 
    \begin{enumerate}
        \item Start with 6 standard basis vectors $(z_i)_{i=1}^{6}$ and add the all zero vector $z_0 = 0$ to be the extreme points on the convex hull.
        \item Construct a matrix in $\mathbb{R}^{(n-7)\times d}$ with i.i.d. standard Gaussian entries. Translate and rescale the rows of this matrix so that the resulting rows lie in the interior of the convex hull of the points $(z_i)_{i=0}^{6}$. Concatenate the matrix with the resulting to the matrix with rows given by $(z_i)_{i=0}^6$ generated by the first step. Call the resulting matrix $Z\in \mathbb{R}^{n\times d}$.
        \item Horizontally concatenate a column of length $n$ consisting only of ones to the left of the matrix $Z$ (add an intercept) to get $X\in\mathbb{R}^{n\times(d+1)}$.
    \end{enumerate}
    \item Construction of $\beta$:
    \begin{enumerate}
        \item Draw one sample $\widetilde{\beta}\sim 10^{1/p} \cdot N(0,I_d)$, with $d=6$ as above.
        \item Find $\min_{i\in[n]}(Z\widetilde{\beta})_i$ for $Z$ from the construction of $X$
        \item Compute $b\coloneqq \max\{1,2^{1/p}\cdot \vert \min_i(Z\widetilde{\beta})_i\vert\} $
        \item Define $\beta\coloneqq (b,\widetilde{\beta})\in\mathbb{R}^{d+1}$.
    \end{enumerate}
    \item Construction of $y$:
    \begin{enumerate}
        \item Compute $\lambda\coloneqq (X\beta)^p\in\mathbb{R}^{n}$
        \item For each $i=1,\ldots, n$, draw $Y_i\sim \textup{Poisson}(\lambda_i)$, and store the resulting vector as $y$.
    \end{enumerate}
\end{itemize}

\subsection{Experimental illustration}
All experiments were run on a commodity machine with Intel Core i7-7700K processor (4 cores, 4.2GHz, 32GB RAM) and took overall around $50$ minutes to complete. The Python code of \citep{MunteanuOP22} was adapted to the Poisson regression setting.\footnote{Our new code is available at \url{https://github.com/Tim907/poisson-regression/}. 
} We applied it with the appropriate $p\in\{1,2\}$ to the datasets with dimensions $n=100\,000, d=7$ generated as detailed in the previous section.

We compared our method to uniform sampling as a baseline, which is widely popular due to its simplicity and general applicability.

We varied the reduced size between $50$ and $600$ in equal increments of size $50$. For each reduced size and each method, we performed $201$ independent repetitions. 

Our results are shown in \cref{fig:experiments}. The red (Poisson with $p$th-root-link) and blue (uniform sampling) solid lines display the median approximation ratio across $201$ independent repetitions for each reduced size. The shaded areas below and above the solid lines indicate $2\times$ standard errors of the respective medians.

For both $p\in\{1,2\}$ the results look widely similar, although the case $p=1$ is slightly more distinctive. We thus focus our further description on the case $p=1$.

Our novel Poisson subsampling method generally outperforms uniform sampling, and their $2\times$ standard error intervals are very narrow.

The shaded blue area \emph{under} the blue solid line indicates that the $2\times$ standard error for uniform sampling is slightly more narrow than the $2\times$ standard error for 1-Poisson regression, i.e., Poisson regression with ID-link. However, this error is optimistically calculated only on repetitions that succeed in providing a valid approximation when applied to the original full data.

The shaded blue area \emph{above} the blue solid line is unbounded, which indicates that some of the repetitions yield solutions that are infeasible for the original full data problem, and thus fail to give an appropriate approximation. Even in a few feasible cases, approximation ratios were $1.5-2.5 \times 10^9$, which may be explained by missing points that are very close to the boundary of the convex hull, thus causing huge errors. The fraction of repetitions leading to infeasible solutions was always non-negligible and we note that the solid blue line was interrupted below a reduced size of $250$ (respectively $150$ for $p=2$) meaning that even the median was infinite, indicating that more than half of the repetitions of uniform sampling were infeasible for the original problem.

In contrast to that, our method produced feasible results in all repetitions, across all reduced sizes, confirming our discussion in \cref{app:feasibility}, and giving approximation ratios very close to $1$.

\begin{figure}
        \includegraphics[width=.5\textwidth]{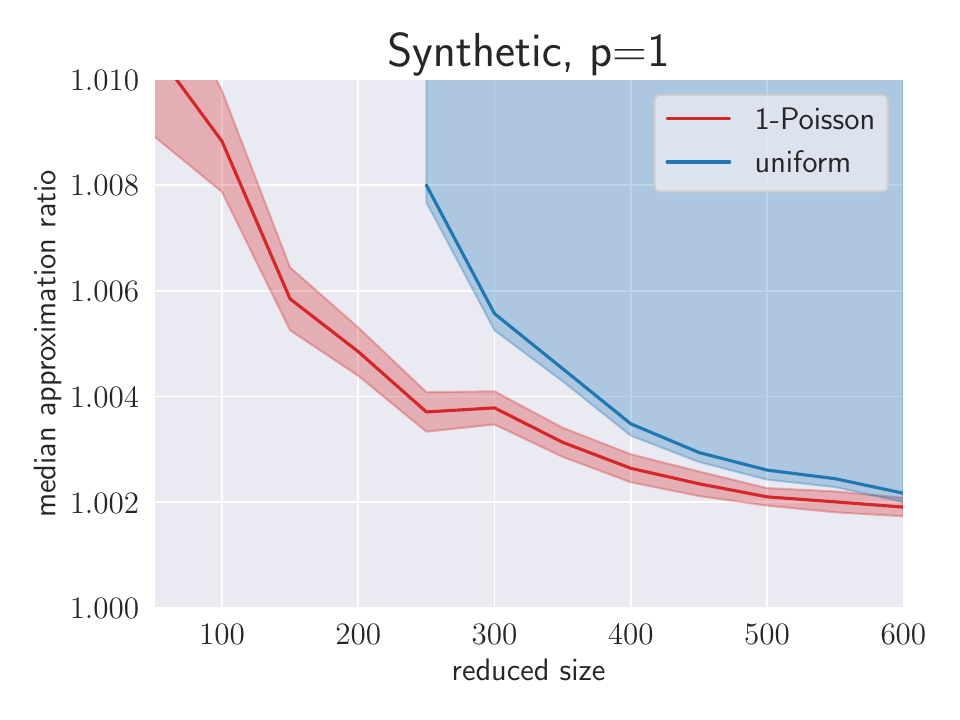}
        \includegraphics[width=.5\textwidth]{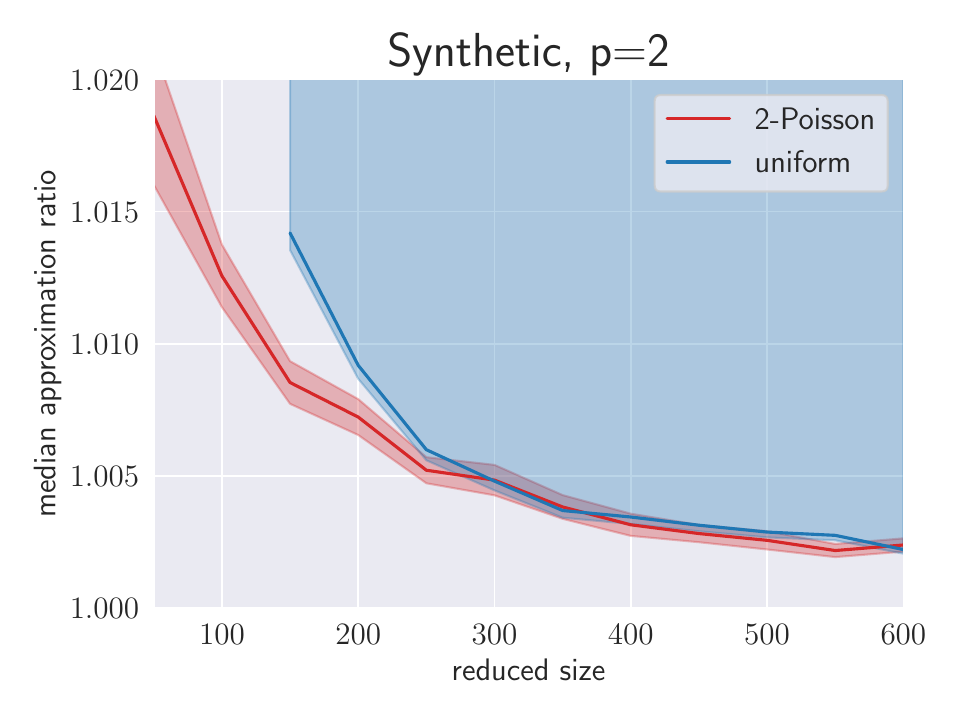}
        \vskip -0.1in
        \caption{Experimental results for two synthetic data sets with $p=1$ (left), respectively $p=2$ (right). Our method is presented in red and compared against uniform sampling, which is presented in blue. Solid lines indicate the median and shaded areas indicate $\pm 2$ standard errors around the median taken across $201$ independent repetitions for each reduced size between $50$ and $600$ in equal increment steps of $50$. For the blue shaded area below the blue solid line, only feasible repetitions were counted, while the blue shaded area above represents the unbounded standard error without this restriction. For some lower reduced sizes, even the median was infinite, which results in an interrupted blue solid line. This indicates that more than half of the repetitions gave infeasible results when using uniform sampling with low sample sizes, while our method never produced infeasible results.}
         \label{fig:experiments}
\end{figure}

\end{document}